\definecolor{mygray}{gray}{.9}
\definecolor{mygray}{RGB}{192,192,192}
\newtcolorbox{AIbox}[2][]{aibox,title=#2,#1}
\definecolor{myblue}{RGB}{100, 150, 200}
\definecolor{mygreen}{RGB}{80, 160, 80}
\definecolor{darkgreen}{rgb}{0.0, 0.5, 0.0}
\definecolor{darkgray}{gray}{0.4}
\definecolor{maroon}{rgb}{0.5, 0.0, 0.0}
\definecolor{navy}{rgb}{0.0, 0.0, 0.5}
\definecolor{teal}{rgb}{0.0, 0.5, 0.5}
\newsavebox{\tablebox}
\definecolor{tablecolor}{named}{white}
\definecolor{dustyteal}{HTML}{4C9085}
\definecolor{teal}{HTML}{008080}
\definecolor{takeawaycolor}{RGB}{170,220,210}
\colorlet{takeawaycolor}{takeawaycolor!10}
\newcounter{takeawaycounter}
\definecolor{mygreen}{HTML}{79BF41}
\definecolor{myblue}{HTML}{4DBCC9}
\definecolor{codegreen}{rgb}{0,0.6,0}
\definecolor{codegray}{rgb}{0.5,0.5,0.5}
\definecolor{codepurple}{rgb}{0.58,0,0.82}
\definecolor{backcolour}{rgb}{0.95,0.95,0.92}
\lstdefinestyle{mystyle}{
    backgroundcolor=\color{backcolour},
    commentstyle=\color{codegreen},
    keywordstyle=\color{magenta},
    numberstyle=\tiny\color{codegray},
    stringstyle=\color{codepurple},
    basicstyle=\ttfamily\scriptsize,
    breaklines=true,
    tabsize=2,
    showstringspaces=false，
    postbreak=\mbox{\hspace{0pt}},
}
\newtcolorbox{greybox}[1]{
  colframe=black!15!white,
  base={#1},
  breakable
}
\newtcolorbox{promptbox}[1]{
  colframe=black!15!white,
  base={#1},
  leftrule=0mm,
  breakable,
}
\newtcolorbox{bluebox}[1]{
  colframe=myblue!50!white,
  colback=myblue!15!white,
  base={#1},
  leftrule=1mm,
  rightrule=1mm,
  bottomrule=1mm,
  colbacktitle=myblue!50!white, 
  coltitle=black,
  breakable
}
\newtcolorbox{greenbox}[1]{
  colframe=mygreen!50!white,
  colback=mygreen!15!white,
  base={#1},
  breakable
}
\theoremstyle{plain}
\newtheorem{theorem}{Theorem}[section]
\newtheorem{proposition}[theorem]{Proposition}
\newtheorem{lemma}[theorem]{Lemma}
\theoremstyle{definition}
\newtheorem{definition}[theorem]{Definition}
\theoremstyle{remark}
\title{OptiTree: Hierarchical Thoughts Generation with Tree Search for LLM Optimization Modeling}
\author{
Haoyang~Liu\textsuperscript{1}\thanks{This work was done when Haoyang Liu interned at Huawei. E-mail: dgyoung@mail.ustc.edu.cn},\,
Jie~Wang\textsuperscript{1}\thanks{Corresponding author. Email: jiewangx@ustc.edu.cn.}\,\,,\,
Yuyang Cai\textsuperscript{1},\,
Xiongwei Han\textsuperscript{2},\,
Yufei Kuang\textsuperscript{1},\,
Jianye Hao\textsuperscript{2,3}\\
\textsuperscript{1}MoE Key Laboratory of
Brain-inspired Intelligent Perception and Cognition,\\
University of Science and Technology of China, \, \\
\textsuperscript{2} Noah’s Ark Lab, Huawei Technologies, \,
\textsuperscript{3} Tianjin University
}
\begin{document}
\etocdepthtag.toc{mtchapter}
\etocsettagdepth{mtchapter}{subsection}
\etocsettagdepth{mtappendix}{none}

\maketitle

\begin{abstract}
  Optimization modeling is one of the most crucial but technical parts of operations research (OR).
  To automate the modeling process, existing works have leveraged large language models (LLMs), prompting them to break down tasks into steps for generating variables, constraints, and objectives.
  However, due to the highly complex mathematical structures inherent in OR problems, standard fixed-step decomposition often fails to achieve high performance.
  To address this challenge, we introduce OptiTree, a novel tree search approach designed to enhance modeling capabilities for complex problems through adaptive problem decomposition into simpler subproblems.
  Specifically, we develop a modeling tree that organizes a wide range of OR problems based on their hierarchical problem taxonomy and complexity, with each node representing a problem category and containing relevant high-level modeling thoughts.
  Given a problem to model, we recurrently search the tree to identify a series of simpler subproblems and synthesize the global modeling thoughts by adaptively integrating the hierarchical thoughts.
  Experiments show that OptiTree significantly improves the modeling accuracy compared to the state-of-the-art, achieving over 10\% improvements on the challenging benchmarks.
  The code is released at \url{https://github.com/MIRALab-USTC/OptiTree/tree/main}.
\end{abstract}

\section{Introduction}
\label{section: intro}
Optimization models are fundamental in operations research (OR) with a wide range of critical applications in route planning \citep{tspgen1995}, production planning \citep{BELIL20181689}, economics \citep{9999506}, and so on.
Typically, real-world OR problems are presented as natural language descriptions that must be manually converted into optimization models before developing solver codes (e.g., Gurobi \citep{gurobi} and Pyomo \citep{bynum2021pyomo,hart2011pyomo}) to solve.
However, the modeling process is highly technical and time-consuming, requiring extensive human expertise and domain knowledge \citep{meerschaert2013mathematical}.
Modeling experts often engage in thorough discussions with clients to fully grasp the problem scenarios and context. This is followed by a lengthy iterative process to refine and improve the models, enhancing their accuracy and efficiency.

To reduce the time and cost spent in optimization modeling, recent advances have leveraged large language models (LLMs) to automate this process, leveraging the rich domain knowledge learnt by LLMs \citep{coe, optimus, orlm}.
Given an OR problem, the LLMs take a natural language description as input and generate both the optimization model and the corresponding solver code.
Existing works on LLM-based optimization modeling include the prompt-based modeling methods \citep{coe,optimus,mcts} and the fine-tuned LLM modeling agents \citep{orlm,llmopt}.
The prompt-based methods typically break down modeling tasks into sequential steps for generating variables, constraints, and objectives \citep{optimus, mcts}.
However, this rigid decomposition does not account for problem complexity, making it particularly challenging for more complex problems and potentially leading to suboptimal modeling accuracy.
Our analysis of a challenging dataset reveals that a substantial portion of errors arises from incorrect variable definitions, particularly in the hard modeling problems (please see Motivation 1 in Section \ref{section: motivation}).

Therefore, a natural question arises: how can we adaptively decompose a problem into simpler subproblems?
For each OR problem, we introduce the concept of \textit{modeling subproblem}, which is defined as parts of the original problem with reduced modeling complexity (please see Section \ref{section: subproblem} for a formal definition).
For example, the standard vehicle routing problem (VRP) can serve as a subproblem for more complex variants such as the capacitated VRP (CVRP) and the VRP with time windows (VRPTW).
From the perspective of OR, these subproblems share the same taxonomy as the original problems and exhibit similar modeling structures, but their lower complexity makes them easier to model.
When modeling a VRP with time windows, we might begin by constructing a standard VRP model and then incrementally integrate the time window constraints.
We have observed that these patterns of subproblem decomposition are prevalent across a wide range of complex problems.
While most complex problems do not fit neatly into standard OR categories, they often encompass simpler standard OR subproblems (please see Motivation 2 in Section \ref{section: motivation}).
For further understanding, we provide several examples in Appendix \ref{appendix: decomposition patterns}.

In light of this, we propose a novel approach called OptiTree, designed to enhance modeling accuracy by adaptively decomposing complex problems into a series of simpler subproblems.
The core of OptiTree is to \textit{distill prevalent decomposition patterns from various problems and apply the most suitable ones for unseen complex problems}.
Specifically, as illustrated in Figure \ref{figure: teaser figure}, we develop a modeling tree that organizes a diverse range of OR problems according to their taxonomy and complexity.
Each node in the modeling tree represents an OR problem, with problems of parent nodes corresponding to the subproblem of their children.
We store relevant modeling thoughts in each tree node to enhance the modeling accuracy for each subproblem.
Given a problem to solve, we 1) recurrently search for suitable subproblems to decompose, 2) retrieve high-level modeling thoughts for each subproblem, and 3) synthesize global modeling thoughts by adaptively integrating the hierarchical thoughts.
To ensure the scalability and reliability of the modeling tree, we dynamically update and refine the subproblems and thoughts across a wide range of OR problems.
The notable feature of OptiTree is that it transfers the search into the highly structured subproblem space, significantly reducing the search space while fully leveraging the modeling thoughts associated with the subproblems.
Experiments demonstrate that our method achieves state-of-the-art performance, significantly improving modeling accuracy by over 10\%.

\begin{figure}
  \centering
  \includegraphics[width=\textwidth]{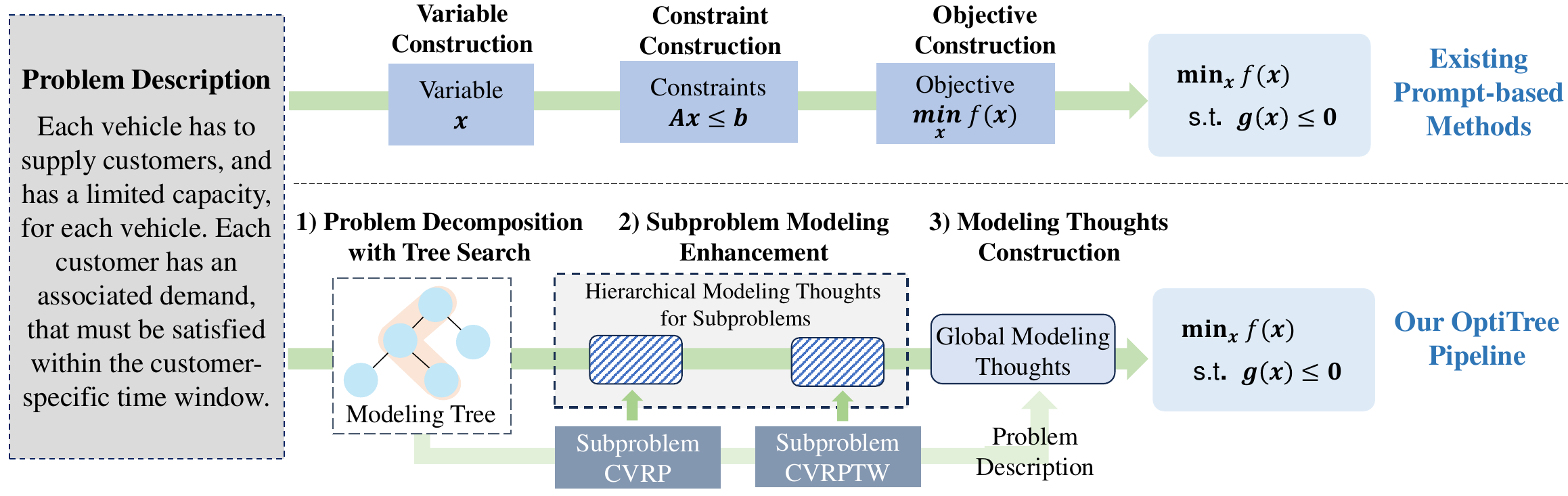}
  \vspace{-0.5cm}
  \caption{\textbf{The Upper}: Existing prompt-based methods construct variables, constraints and objectives sequentially. \textbf{The Lower}: Our OptiTree leverages tree search for subproblem decomposition and uses the modeling thoughts of subproblems to enhance the modeling process.}\label{figure: teaser figure}
  \vspace{-0.5em}
\end{figure}

\section{Related Work}
\paragraph{LLM-based Optimization Modeling}
LLM-based optimization modeling has emerged as a promising approach to reduce the time and expertise required during the modeling process \citep{optichat, optiguide}.
Existing methods in this field can be categorized into prompt-based and fine-tuned methods.
Prompt-based methods involve the careful design of modeling prompts for pre-trained LLMs, such as GPT-4o and DeepSeek-V3.
Notable works in this area include CoE \citep{coe} and OptiMUS \citep{optimus}, which utilize multi-agent cooperation workflows to iteratively construct models and solver codes.
Additionally, \cite{mcts} employs Monte Carlo tree search to explore the space of variables, constraints, and objectives step by step, identifying the best components for model construction.
In contrast, another line of research focuses on fine-tuning LLMs with extensive operations research and modeling knowledge.
Examples include ORLM \citep{orlm}, Evo-Step \citep{evo-step}, LLMOPT \citep{llmopt}, OptMATH \citep{optmath}, and OptiBench \citep{optibench1, optibench2}.
These methods typically generate large modeling datasets to train specialized modeling language models.
In this paper, we focus on the prompt-based methods and aim to fully exploit the reasoning capabilities of pre-trained LLMs.
Unlike existing approaches that decompose tasks into fixed steps for generating variables, constraints, and objectives, we propose an incremental modeling strategy that involves modeling simpler subproblems.

\paragraph{LLM Reasoning and Retrieval-Augmented Generation}
LLMs have shown promising performance across various reasoning tasks \citep{2025logictree}.
However, they are prone to generating false, misleading, or fabricated information—a phenomenon known as hallucination.
Consequently, researchers have explored several methods to mitigate these hallucinations \citep{Hallucination}.
Early approaches, such as Chain-of-Thoughts (CoT) \citep{cot} and Tree-of-Thoughts (ToT) \citep{tot}, break down complex problems and solve them step-by-step.
The Buffer-of-Thoughts (BoT) \citep{bot} method stores a meta-buffer of modeling thoughts, instantiating relevant templates to construct reasoning processes.
Recent advances in reasoning LLMs such as OpenAI-o1 \citep{openai2024o1} and DeepSeek-R1 \citep{deepseek-r1} have gained significant popularity.
Additionally, researchers have focused on leveraging external knowledge to further reduce hallucinations \citep{rag1, rag2, rag3}.
Retrieval-augmented generation (RAG) involves querying vector or text databases for relevant documents and integrating this retrieved information into the generation process, thereby producing more accurate responses \citep{lightrag, graphrag}.
In our work, we search for relevant modeling thoughts for each subproblem and dynamically combine them to generate global modeling thoughts.

\section{Motivated Observations}
\label{section: motivation}
In this section, we present key observations that motivate our decomposition method.
(1) The standard fixed-step decomposition approach often fails in complex problems, as accurately identifying variables remains a significant challenge.
(2) Most complex OR problems contain subproblems of standard OR problems.
(3) The performance of LLMs can benefit from the subproblem decomposition.
The experiments in this part are conducted in the modeling dataset IndustryOR \citep{orlm}, which classifies problems into three difficulty levels: Easy, Medium, and Hard.

\begin{figure}[H]
\centering
\begin{subfigure}{0.30\textwidth}
\flushright
    \includegraphics[width=\textwidth]{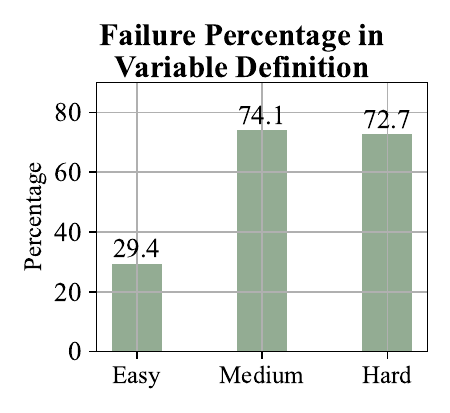}
\end{subfigure}
\centering
\begin{subfigure}{0.30\textwidth}
\flushleft
    \includegraphics[width=\textwidth]{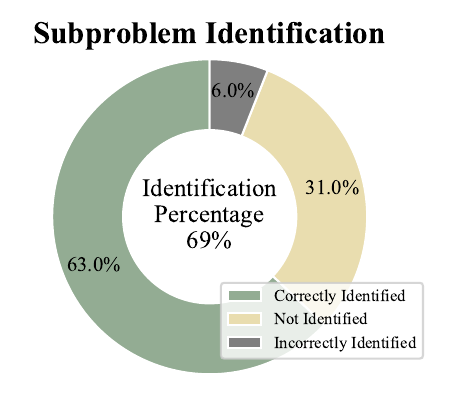}
\end{subfigure}
\begin{subfigure}{0.3\textwidth}
\flushleft
    \includegraphics[width=\textwidth]{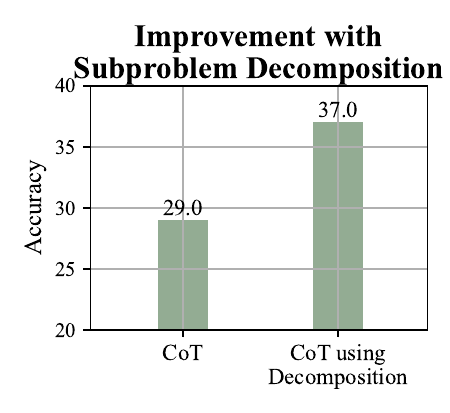}
\end{subfigure}
\vspace{-0.2cm}
\caption{The observations. \textbf{Left}: We find most errors come from the incorrect variable definition in the Medium and Hard problems. \textbf{Middle}: We find the most complex problems have subproblems of standard OR problems, and LLMs are effective in identifying the subproblems. \textbf{Right}: We observe an improvement in modeling accuracy using subproblem decomposition. }
\label{figure: motivations}
\vspace{-0.25cm}
\end{figure}

\paragraph{Motivation 1: The Drawbacks of Existing Methods}
We evaluate CoT \citep{cot} on this dataset, breaking the problems down into steps involving variables, constraints, and objective generation.
We find that a significant portion of errors arise from incorrect variables, particularly in Medium and Hard problems (over 70\%, please see the left subfigure of Figure \ref{figure: motivations}).
This indicates that existing decomposition methods are not well-suited for complex problems.

\paragraph{Motivation 2: Prevalent Subproblem Decomposition Patterns}
We first collect the optimization models in the textbook [1] as a ground-truth model for the 50 standard problems.
For a problem in IndustryOR to formulate, we use LLM to identify whether the problem contains subproblems in the 50 standard OR problems.
If a subproblem is identified, we augment the LLM's prompt for optimization modeling by integrating the relevant ground-truth model of the subproblem as hints.
As shown in the middle subfigure of Figure \ref{figure: motivations}, the LLM identifies subproblems for 69\% of the problems.
Upon manual verification, we find that over 63\% of problems are associated with correct subproblems.
These results underscore the prevalence of subproblem decomposition and demonstrate that LLMs are effective in identifying relevant subproblems with high identification accuracy.

\paragraph{Motivation 3: LLMs Perform Better with Vanilla Subproblem Decomposition}
We leverage the set of 50 standard OR problems as subproblem candidates to guide the modeling process.
For each problem in IndustryOR, we decompose the process into two steps.
First, we use LLM to select a suitable subproblem from the candidates.
If a relevant subproblem is identified, we provide the ground-truth models for that subproblem before proceeding to model the original problem based on it.
We evaluated the performance of LLMs using this two-step approach and present the results in the right subfigure of Figure \ref{figure: motivations}.
Our findings indicate that modeling based on a subproblem reduces task complexity and has the potential to enhance performance.

\section{OptiTree: Hierarchical Thought Generation with Tree Search }
The overall framework of OptiTree is illustrated in Figure \ref{figure: framework}.
The core is to organize modeling knowledge through a hierarchical tree structure that captures both prevalent decomposition patterns and thoughts for various problems (Section \ref{section: modeling tree}).
We utilize tree search for efficient retrieval of subproblems along with their hierarchical modeling thoughts, adaptively integrating these thoughts to construct comprehensive global modeling thoughts (Section \ref{section: thought generation}).
To ensure scalability and reliability, we construct the modeling tree using a real-world modeling dataset, dynamically managing both subproblems and modeling thoughts.
This enhances the tree's capacity to distill more decomposition patterns and thoughts, allowing it to generalize effectively to unseen complex problems (Section \ref{section: tree update}).

\subsection{Subproblem Definition and Identification}
\label{section: subproblem}
Given an OR problem $\mathcal{P}$, we specialize the corresponding optimization model as follows,
\begin{align}\label{equation: problem}
    \min_{\bm{x}} f(\bm{x})  \quad
    \text{ s.t. } g_i(\bm{x}; \beta_i) \le 0, \text{ for }i = 1, \cdots, N,
\end{align}
where $f$ denotes the objective function, $\bm{x}$ is the decision variables, $g_i$ represents the the $i^{th}$ constraint function, and $\beta_i$ is the parameters in the $i^{th}$ constraint.
Suppose that we partition the variables into two groups $\bm{x}=(\tilde{\bm{x}}_1, \tilde{\bm{x}}_2)$, the objective and constraints has the decomposition $f(\bm{x}) = f_1(\tilde{\bm{x}}_1) + f_2(\tilde{\bm{x}}_2) + f_3(\tilde{\bm{x}}_1, \tilde{\bm{x}}_2)$, and $g_i(\bm{x}_i, \beta_i) = g_{i,1}(\tilde{\bm{x}}_1; \tilde{\beta}_{i,1})+g_{i,2}(\tilde{\bm{x}}_2; \tilde{\beta}_{i,2})+g_{i,3}(\tilde{\bm{x}}_1, \tilde{\bm{x}}_2; \tilde{\beta}_{i,3})$.
We define another OR problem $\tilde{\mathcal{P}}$ as a subproblem of $\mathcal{P}$, if the optimization model takes the following form,
\begin{align}\label{equation: subproblem}
    \min_{\tilde{\bm{x}}_1} f_1(\tilde{\bm{x}}_1)  \quad
    \text{ s.t. } g_{i_k,1}(\tilde{\bm{x}}_1; \beta_{i_k,1}) \le 0, \text{ for some }i_k \in \{1, \cdots, N\},
\end{align}
where $\beta_{i_k,j}$ is the parameter, and the optimization model (\ref{equation: subproblem}) can be viewed as part of the original optimization model (\ref{equation: problem}).
However, we need to identify a subproblem through the natural language descriptions, as the ground-truth optimization models are unavailable in practice.
Directly comparing two problem descriptions can lead to hallucination issues for LLMs, so we adopt a clear and comprehensible format.
We first use a LLM to distill and summarize the problem description  into a set of atomic high-level statements $\mathcal{C}_\mathcal{P} = \{c_1, c_2, \cdots, c_{n_\mathcal{P}}\}$, called statement thoughts, where each thought $c_i$ summarize a feature or requirement related to optimization modeling.
We then use the LLM to identify a subproblem $\tilde{\mathcal{P}}$ of $\mathcal{P}$, if statement thoughts of $\tilde{\mathcal{P}}$ are semantically contained within those of $\mathcal{P}$ (denoted by $\mathcal{C}_{\tilde{\mathcal{P}}}\subseteq_{\mathcal{S}}\mathcal{C}_\mathcal{P}$).
With slight abuse of notation, we write $\tilde{\mathcal{P}}\subseteq_{\mathcal{S}}\mathcal{P}$ to indicate that $\tilde{\mathcal{P}}$ is a subproblem of $\mathcal{P}$.
Due to the limited space, please see Appendix \ref{appendix: schema} and \ref{appendix: prompts subproblem idendification} for examples of statement thoughts and detailed meta-prompts for subproblem identification.

Now we formulate the modeling process using subproblem decomposition as follows.
We decompose the OR problem as a series of subproblems $[\mathcal{P}^{(1)}, \mathcal{P}^{(2)}, \cdots, \mathcal{P}^{(M)}]$, where $M$ is the subproblem number and ${\mathcal{P}^{(1)}}\subseteq_{\mathcal{S}}{\mathcal{P}^{(2)}}\subseteq_{\mathcal{S}}\cdots\subseteq_{\mathcal{S}}{\mathcal{P}^{(M)}}$.
We then build the optimization model incrementally.

\begin{figure}
  \centering
  \includegraphics[width=0.95\textwidth]{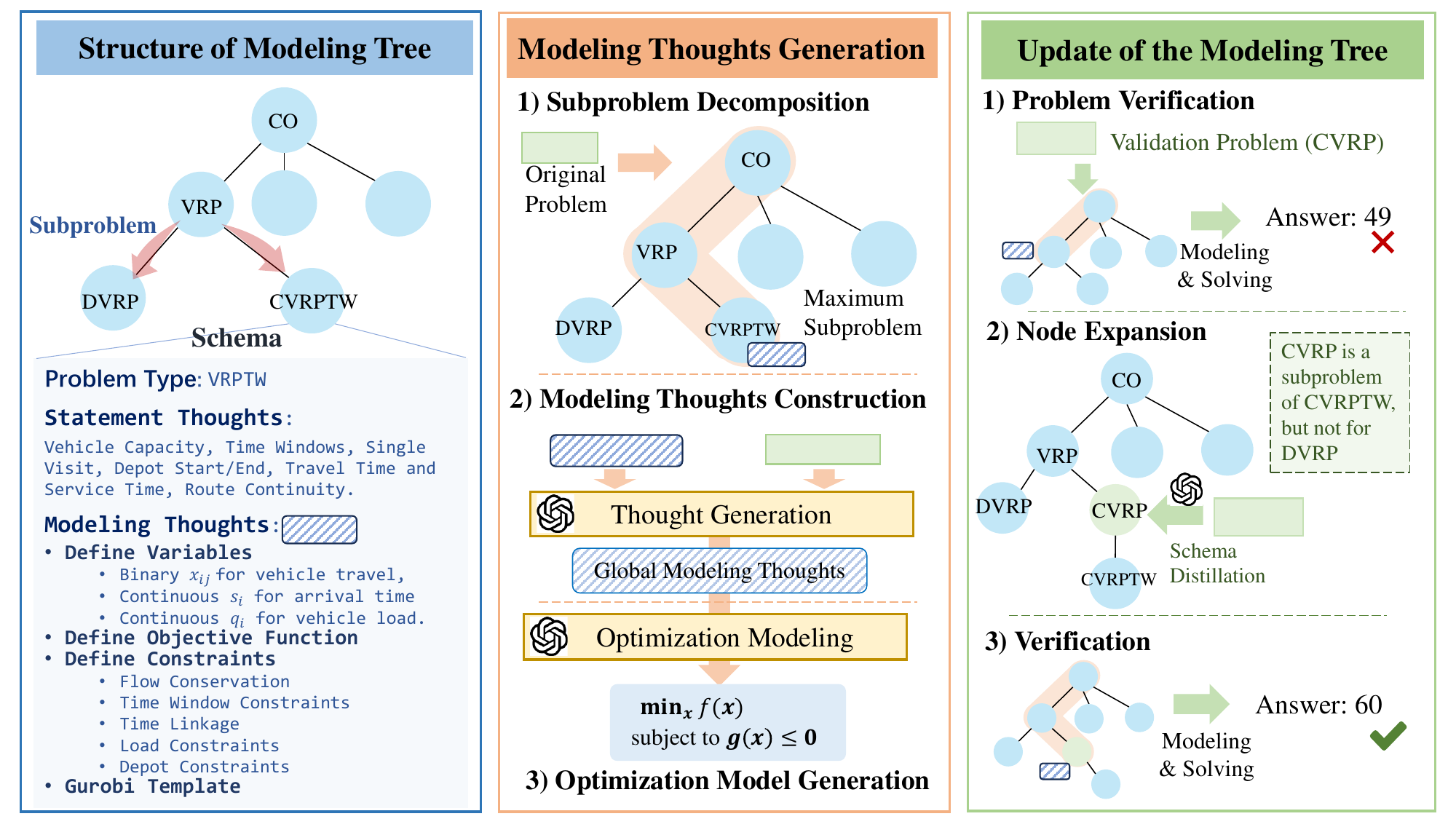}
   \vspace{-0.5em}
  \caption{The overview of OptiTree. (1) OptiTree constructs a modeling tree for subproblem decomposition and stores the modeling thoughts for subproblems in the tree nodes. (2) OptiTree leverages tree search to identify the subproblems and construct modeling thoughts by integrating the subproblems' thoughts. (3) We update the modeling tree to ensure its scalability and reliability.}\label{figure: framework}
  \vspace{-1em}
\end{figure}

\subsection{Modeling Tree for Problem Decomposition}
\label{section: modeling tree}
We propose the concept of modeling tree for problem decomposition.
We also utilize the modeling thoughts, distilled from the real-world modeling process for each subproblem, to effectively guide LLMs in accurately modeling these subproblems.

\paragraph{Modeling Thoughts for OR Problems}
Most complex OR problems involve conventional modeling techniques, such as variable definition and constraint formulation, which can be implicit and pose significant challenges for reasoning.
Thus, we distill these modeling techniques into high-level modeling thoughts, which serve as concise, step-by-step guidelines for the modeling process.
When faced with a similar optimization problem, we can utilize these informative thoughts to enhance accuracy.
Specifically, we design a comprehensible schema to efficiently retrieve and apply the modeling thoughts (Please see the left of Figure \ref{figure: framework} for the schema).
Each schema contains three critical elements: (1) the name of problem category, (2) statement thoughts $\mathcal{C}_\mathcal{P}$ for problem identification, and (3) modeling thoughts $\mathcal{T}(\mathcal{P})$ to guide the modeling process.
These modeling thoughts include concrete reasoning steps for variable definition, constraint formulation, objective formulation, and Gurobi code templates.
Unlike static examples, these thoughts are flexible and can generalize to various complex problems with similar mathematical structures.

\paragraph{Modeling Tree of OR Problems}
While existing prompt-based and fine-tuned methods consider each OR problem individually, our method fully recognizes the relationships between different OR problems and organizes them into a taxonomy tree known as the modeling tree.
In this structure, each node represents an OR problem along with its corresponding schema of modeling thoughts.
The root node represents an abstract class of combinatorial optimization problems.
Each parent node is a subproblem of its child nodes; that is, if the node $\mathcal{P}_i$ is a child of $\mathcal{P}_j$, then $\mathcal{P}_j\subseteq_{\mathcal{S}}\mathcal{P}_i$.
Child nodes inherit fundamental constraints and variables from their parent nodes while adding specialized components.
For example, the VRP branch might split into dynamic VRP (DVRP) and VRP with time windows.
The top tier of the tree consists of simpler, foundational OR problems that are easier to model, while deeper levels feature increasingly complex variations.
This structure organizes similar problems into the same branch and facilitates an efficient search for the subproblems.
Please see the left of Figure \ref{figure: framework} for the structure of the modeling tree.
We formally define this structure as follows.
\begin{definition}
  The modeling tree is called \textit{subproblem order-preserving} if for any problem $\mathcal{P}_i$ that is the ancestor of problem $\mathcal{P}_j$, it holds that $\mathcal{P}_i\subseteq_{\mathcal{S}} \mathcal{P}_j$.
\end{definition}

\subsection{Modeling Thoughts Searching and Construction}
\label{section: thought generation}
\paragraph{Tree Search for the Subproblems and Modeling Thoughts}
Given an OR problem $\mathcal{P}$ to model, we search the modeling tree for suitable subproblems for decomposition.
We employ an LLM to extract the statement thoughts $\mathcal{C}_\mathcal{P}$ and compare $\mathcal{C}_\mathcal{P}$ with the statement thoughts at the tree node to identify relevant subproblems.
Starting from the root node, we search for the first subproblem in the first tree level, i.e., ${\mathcal{P}^{(0)}_{1}}, {\mathcal{P}^{(0)}_{2}}, \cdots, {\mathcal{P}^{(0)}_{T}}$.
We select the subproblem that best matches $\mathcal{P}$ based on the highest similarity,
\begin{align}
    \mathcal{P}^{(1)} = \underset{\mathcal{P}^{(0)}_{t} (t=1, \cdots, T)}{\text{argmax}} \text{I}({\mathcal{P}^{(0)}_{t}\subseteq_{\mathcal{S}} \mathcal{P}})\cdot \text{Sim}_{\text{LLM}}(\mathcal{C}_{\mathcal{P}^{(0)}_{t}}, \mathcal{C}_\mathcal{P}),
\end{align}
where $\text{I}$ is the indication function with $\text{I}(a)=1$ if the condition $a$ is true and $\text{I}(a)=-\infty$ otherwise.
$\text{Sim}_{\text{LLM}}$ represents the similarity score provided by the LLM, measuring the similarity between two problems.
We select the child problem $\mathcal{P}^{(1)}$ and continue the search process on the corresponding tree node.
If none of the children qualify as subproblems of $\mathcal{P}$ (with similarity score 0), we terminate the search. If the search halts at the root node, we do not provide modeling thoughts.
Please see Appendix \ref{appendix: prompts subproblem idendification} for meta-prompts for the tree search.

\paragraph{Global Modeling Thought Construction}
We obtain a series of identified subproblems $\mathcal{P}^{(1)}, \mathcal{P}^{(2)}, \cdots, \mathcal{P}^{(M)}$ of increasing complexity.
Notably, the modeling thoughts of the problem $\mathcal{P}^{(M)}$ encompass those of the preceding subproblems.
Thus, we call $\mathcal{P}^{(M)}$ the \textit{maximum subproblem}.
We retrieve the corresponding modeling thoughts of $\mathcal{P}^{(M)}$, i.e.,  $\mathcal{T}(\mathcal{P}^{(M)})$.
Subsequently, we combine these modeling thoughts with the problem description of $\mathcal{P}$ to synthesize the global modeling thoughts $\mathcal{T}(\mathcal{P})$, thereby enhancing the modeling process (for meta-prompts, please see Appendix \ref{appendix: prompts modeling}).

\subsection{Construction of the Modeling Tree}
\label{section: tree update}

\paragraph{Tree Construction}
The tree construction and updating process is fully automated and does not require human curation.
To ensure the problem coverage and reliability, we construct the modeling tree from scratch using a dataset that includes problem descriptions and their corresponding ground-truth optimization models.
This allows us to distill decomposition patterns and modeling thoughts effectively.
In practice, we choose parts of the OR-Instruct 3K dataset, which is the training dataset of ORLM \citep{orlm} covering a wide range of OR problems.
We sequentially add tree nodes to enhance the capacity of the modeling tree.
For each problem in the dataset, we evaluate whether it should be integrated into the tree.
First, we conduct a tree search for subproblems and identify the maximum subproblem $\mathcal{P}^{(M)}$, which has the greatest depth.
We then construct global modeling thoughts using $\mathcal{T}(\mathcal{P}^{(M)})$ to guide the modeling process and solve the model for the final answer.
If this final answer matches the ground-truth answer, no update to the tree is necessary, indicating that the modeling tree can successfully handle this type of problem.
Conversely, if the answers differ, it suggests the emergence of new decomposition patterns or modeling thoughts, necessitating an update to the tree using the failed problem $\mathcal{P}$.

\paragraph{Node Expansion}
We aim to expand the tree node with the problem  $\mathcal{P}$ by following carefully designed rules to maintain the subproblem order-preserving structure.
Due to the search process, $\mathcal{P}$ should be a child node of $\mathcal{P}^{(M)}$, as $\mathcal{P}^{(M)}$ is the maximum subproblem.
We first distill the modeling schema $(\text{problem type}, \mathcal{C}_{\mathcal{P}}, \mathcal{T}(\mathcal{P}))$ for the failed problem.
Next, we verify the subproblem relationships between  ${\mathcal{P}}$ and the children of $\mathcal{P}^{(M)}$, denoted as, ${\mathcal{P}^{(M)}_{1}}, {\mathcal{P}^{(M)}_{2}}, \cdots, {\mathcal{P}^{(M)}_{T_M}}$.
\begin{itemize}
    \item If ${\mathcal{P}}\subseteq_{\mathcal{S}}{\mathcal{P}^{(M)}_{k}} $, we insert $\mathcal{P}$ as a child of $\mathcal{P}^{(M)}$ and as a parent of  $\mathcal{P}^{(M)}_{k}$.
    \item Otherwise, we insert $\mathcal{P}$ as a child of $\mathcal{P}^{(M)}$ and as a sibling of $\mathcal{P}^{(M)}_{k}$.
\end{itemize}

\begin{proposition}
\label{proposition}
The modeling tree remains subproblem order-preserving during the update process.
\end{proposition}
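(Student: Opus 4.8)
The plan is to prove the proposition by induction on the number of update steps, using the insertion rules to show that the \emph{subproblem order-preserving} property is an invariant. The base case is the initial tree (before any update), which is trivially order-preserving since it starts from a single root node, or more generally we may assume it holds by construction. For the inductive step, I assume the tree $\mathcal{T}$ is subproblem order-preserving before inserting a failed problem $\mathcal{P}$, and I must show the updated tree $\mathcal{T}'$ remains order-preserving after $\mathcal{P}$ is inserted as a child of its maximum subproblem $\mathcal{P}^{(M)}$ according to one of the two rules.

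First I would set up the key facts guaranteed by the search process: since $\mathcal{P}^{(M)}$ is the maximum subproblem returned by the tree search, we have $\mathcal{P}^{(M)} \subseteq_{\mathcal{S}} \mathcal{P}$, and moreover no child of $\mathcal{P}^{(M)}$ is a subproblem of $\mathcal{P}$ strictly deeper along that branch — this is exactly why the search halted at $\mathcal{P}^{(M)}$. Then I would verify that for any ancestor--descendant pair $(\mathcal{P}_i, \mathcal{P}_j)$ in the new tree $\mathcal{T}'$, the relation $\mathcal{P}_i \subseteq_{\mathcal{S}} \mathcal{P}_j$ holds. Pairs not involving the newly inserted node $\mathcal{P}$ are unaffected and inherit the property from the inductive hypothesis, except possibly pairs whose ancestor--descendant relationship was altered by the insertion; the only such alteration is when $\mathcal{P}$ is inserted \emph{between} $\mathcal{P}^{(M)}$ and some child $\mathcal{P}^{(M)}_k$ (Rule 1), which lengthens that path but preserves all original ancestor--descendant pairs. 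So the work reduces to checking pairs where one element is $\mathcal{P}$.

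The cases to check are: (a) ancestors of $\mathcal{P}$ — namely $\mathcal{P}^{(M)}$ and all its ancestors; (b) descendants of $\mathcal{P}$ — which are nonempty only in Rule 1, where $\mathcal{P}^{(M)}_k$ and its subtree become descendants of $\mathcal{P}$. For (a): we have $\mathcal{P}^{(M)} \subseteq_{\mathcal{S}} \mathcal{P}$ from the search, and for any ancestor $\mathcal{Q}$ of $\mathcal{P}^{(M)}$ we have $\mathcal{Q} \subseteq_{\mathcal{S}} \mathcal{P}^{(M)}$ by the inductive hypothesis, so $\mathcal{Q} \subseteq_{\mathcal{S}} \mathcal{P}$ follows provided $\subseteq_{\mathcal{S}}$ is transitive — I would note that transitivity of semantic containment of statement-thought sets holds because $\mathcal{C}_{\mathcal{Q}} \subseteq_{\mathcal{S}} \mathcal{C}_{\mathcal{P}^{(M)}} \subseteq_{\mathcal{S}} \mathcal{C}_{\mathcal{P}}$ implies $\mathcal{C}_{\mathcal{Q}} \subseteq_{\mathcal{S}} \mathcal{C}_{\mathcal{P}}$, i.e. semantic set containment composes. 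For (b): in Rule 1 we insert $\mathcal{P}$ as a parent of $\mathcal{P}^{(M)}_k$ precisely under the condition $\mathcal{P} \subseteq_{\mathcal{S}} \mathcal{P}^{(M)}_k$, which directly gives the required relation for the immediate child, and for deeper descendants $\mathcal{P}'$ of $\mathcal{P}^{(M)}_k$ we combine $\mathcal{P} \subseteq_{\mathcal{S}} \mathcal{P}^{(M)}_k \subseteq_{\mathcal{S}} \mathcal{P}'$ via transitivity again. In Rule 2, $\mathcal{P}$ is inserted as a leaf (sibling of $\mathcal{P}^{(M)}_k$), so it has no descendants and case (b) is vacuous.

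The main obstacle I anticipate is not the tree-combinatorial bookkeeping, which is routine, but rather making the transitivity of $\subseteq_{\mathcal{S}}$ rigorous — since $\subseteq_{\mathcal{S}}$ is defined via an LLM-judged semantic containment of statement-thought sets, one must either (i) treat it as an idealized transitive relation (semantic entailment is transitive in principle) and state this as a modeling assumption, or (ii) phrase the proposition as holding under the assumption that the LLM's subproblem judgments are consistent with a transitive ground-truth relation. I would make this explicit with a short remark that the proof treats $\subseteq_{\mathcal{S}}$ as a genuine partial-order-like (at least transitive) relation on problem categories, which is the intended semantics from Section~\ref{section: subproblem}, and that the insertion rules were designed exactly so that no comparison outside the verified ones $\bigl($namely $\mathcal{P}^{(M)} \subseteq_{\mathcal{S}} \mathcal{P}$ and the tested $\mathcal{P} \subseteq_{\mathcal{S}} \mathcal{P}^{(M)}_k\bigr)$ is ever needed beyond transitive closure.
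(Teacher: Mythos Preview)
Your proposal is correct and follows essentially the same approach as the paper: induction on the number of insertions, a transitivity lemma for $\subseteq_{\mathcal{S}}$, and a two-case analysis according to the insertion rule, checking the new ancestor--descendant pairs involving $\mathcal{P}$. Your treatment is in fact slightly more careful than the paper's, which states transitivity as a lemma without comment and handles the case bookkeeping more tersely; your explicit remark that transitivity of the LLM-judged relation is a modeling assumption is a welcome clarification the paper leaves implicit.
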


Please refer to Appendix \ref{appendix: proof} for the proof. After adding a new node, we conduct the decomposition and modeling process again to verify the correctness of the new nodes, continuing this until the problem $\mathcal{P}$ can be correctly modeled.

\section{Experiments}
\label{section: experiments}
\subsection{Experiment Setups}
\label{section: experiment setup}
\paragraph{Dataset}
We consider seven modeling datasets to evaluate our method and the baselines.
(1) NL4Opt \citep{nl4opt} is from NL4Opt competition in NeurIPS 2022, composing 289 elementary-level linear programming problems.
(2) MAMO EasyLP \citep{mamo} contains 652 easy linear programming problems.
(3) MAMO ComplexLP \citep{mamo} consists of 211 more complex optimization problems.
(4) ComplexOR \citep{coe} has 19 challenging problems derived from academic papers, textbooks, and real-world industry scenarios.
(5) IndustryOR \citep{orlm} contains 100 real-world problems from eight industries with different difficulty levels: Easy, Medium and Hard.
Finally, (6) OptiBench \citep{optibench1} contains 605 problems, and (7) OptMATH \citep{optmath} dataset has 166 challenging problems.
The work OptMATH \citep{optmath} proposed a data generation process for optimization modeling; OptMATH can be referred to both the testing dataset and the model trained on the training set.

\vspace{-0.5em}
\paragraph{Baselines}
OptiTree is a prompt-based model for LLM-based optimization modeling.
We compare our method mainly with seven prompt-based  baselines.
The five prompt-based methods are as follows.
(1) Standard is the direct output of pre-trained LLMs.
(2) Chain-of-Thoughts (CoT) breaks the problem into reasoning steps.
(3) Chain-of-Expert (CoE) \citep{coe} is a multi-agent workflow for modeling, with agents focusing on interpreting the problems, formulating the problems, writing and debugging the solver codes.
(4) OptiMUS \citep{optimus} is an improved multi-agent workflow with structured problem input.
(5) MCTS \citep{mcts} employs Monte Carlo tree search to search for variables, constraints and objective sequentially in different search depths.
We also compare our method with (6) DeepSeek-R1 \citep{deepseek-r1} and (7) OpenAI-o1 \citep{openai2024o1}.
For completeness, we also report the results of fine-tuned OR LLMs.
The fine-tuned models, (8) ORLM \citep{orlm} (using LLaMA-3-8B as backbone), (9) Evo-Step \citep{evo-step} (using LLaMA-3-8B as backbone), (10) OptMATH \citep{optmath} (using Qwen2.5-32B as backbone) and (11) LLMOPT \citep{llmopt} (using Qwen1.5-14B as backbone).
Notice that the trained model of OptMATH and Evo-Step have not been released; the results of these two baselines follow those in the original papers \citep{evo-step} and \citep{optmath}.
\vspace{-0.5em}

\paragraph{Metric and Implementation}
\vspace{-0.25em}
Following existing works \citep{coe, optimus, orlm, mcts, llmopt}, we use solving accuracy to evaluate the performance, i.e., whether the optimal objective of the generated optimization models equals ground-truth values.
To demonstrate the generalization of our method, we conduct experiments on GPT-4o \citep{gpt4o} and DeepSeek-V3 \cite{deepseekv3}.
We construct the modeling tree using 400 randomly selected problems in the OR-Instruct dataset \citep{orlm}, which is part of the training dataset for ORLM and contains 3,000 problems.
We run across problems in the dataset to update the modeling tree, where the statistical information of the tree can be found in Appendix \ref{appendix: tree analysis}.
\vspace{-0.5em}

\begin{table}[t]
\centering
\caption{
Comparison of modeling accuracy between our method and baselines across the benchmarks. We mark the best results in \textbf{bold} the \underline{underline} the second-best results.
}
\resizebox{0.9\textwidth}{!}
{
\begin{tabular}{@{}clccccccc@{}}
\toprule
\multirow{2}{*}{Model}&\multirow{2}{*}{Method} & \multirow{2}{*}{NL4Opt} & {MAMO} & {MAMO} & \multirow{2}{*}{ComplexOR} & \multirow{2}{*}{IndustryOR} & \multirow{2}{*}{OptiBench} & \multirow{2}{*}{OptMATH}\\
& &  & EasyLP & ComplexLP & & \\
\midrule
\rowcolor{mygray} \multicolumn{9}{c}{Fine-tuned Method}\\
\midrule
& ORLM  & 85.7 & 82.3 & 37.4 & 63.2 & 38.0 & 51.1 & 2.6\\
& Evo-Step & 84.5 & 85.3 & 61.6 & - & 36.4 & - & -\\
& OptMATH & 95.9 & 89.9 & 54.1 & -  & 31.0 & 66.1 & 34.7\\
& LLMOPT  & 93.0 & \textbf{97.0} & 68.0 & 72.7 & 46.0 & 66.4 & 40.0\\
\midrule
\rowcolor{mygray} \multicolumn{9}{c}{Prompt-based Methods}\\
\midrule
Reasoning&{DeepSeek-R1} & 86.1 & 79.5 & 57.3 & 68.4 & 38.0 & 70.2 & 33.1\\
LLMs&{OpenAI-o1} & 87.1 & 87.6 & 54.5 & 73.6 & 40.0 & 71.5 & 34.9\\
\midrule
\multirow{5}{*}{GPT-4o}&Standard  & 70.3 & 84.3 & 41.2 & 57.8 & 27.0 & 42.3 & 17.5\\
&CoT & 71.6 & 84.8 & 42.3 & 57.8 & 29.0 & 42.0 & 20.5\\
&CoE & 76.4 & 85.7 & 46.4 & 68.4 & 34.0 & 43.2 & 18.6\\
&OptiMUS & 82.0 & 85.1 & 47.3 & 79.0 & 34.0 & 45.8& 20.2\\
&MCTS & 90.3 & 87.4 & 56.8 & 68.4 & 42.0 & 64.0 & 37.3 \\
& OptiTree & \underline{96.2} & 95.6 & \underline{81.0} & \textbf{84.2} & \underline{48.0} & \underline{71.9} & \underline{45.8} \\
\midrule
\multirow{5}{*}{DeepSeek-V3}&Standard  & 70.5 & 84.3 & 39.8& 52.6 & 29.0 & 52.4 & 16.2\\
&CoT & 74.0 & 82.9 & 40.7 & 52.6 & 35.0 & 53.1 & 21.1\\
&CoE & 79.2 & 85.9 & 43.1 & 63.2 & 33.0 & 55.2 & 24.1\\
&OptiMUS & 80.6 & 87.1 & 45.2 & 79.0 & 36.0 & 58.8 & 32.5 \\
&MCTS & 89.6 & 88.0 & 51.6 & 79.0 & 46.0 & 67.9 & 38.6 \\
& OptiTree  & \textbf{98.3} & \underline{96.9} & \textbf{81.5} & \textbf{84.2} & \textbf{54.0} & \textbf{74.7} & \textbf{52.4} \\
\bottomrule
\end{tabular}}
\label{table:main results}
\vspace{-0.5cm}
\end{table}

\subsection{Main Results}
We evaluate OptiTree against the competitive baselines on five modeling datasets using the DeepSeek-V3 and GPT-4o models.
The results, presented in Table \ref{table:main results}, highlight three key findings.
(1) \textbf{High Accuracy}. Our method significantly outperforms the baselines, achieving approximately a 10\% improvement on the challenging datasets MAMO ComplexLP, ComplexOR, and IndustryOR.
(2) \textbf{Strong Generalization}. Using only 400 problems, the constructed OptiTree has shown strong generalization ability across both easy and challenging benchmarks.
(3) \textbf{Adaptation to Different LLMs}.
Built on top of different LLMs, OptiTree consistently adapts well and exhibits impressive performance.
Furthermore, it outperforms state-of-the-art reasoning LLMs, including DeepSeek-R1 and OpenAI-o1.
OptiTree also outperforms the fine-tuned modeling LLMs.
In Appendix \ref{appendix: code pass}, we also report the proportion of codes generated by the methods that can be executed successfully.

\begin{figure}[t]
\centering
\begin{subfigure}{0.45\textwidth}
\flushright
    \includegraphics[width=\textwidth]{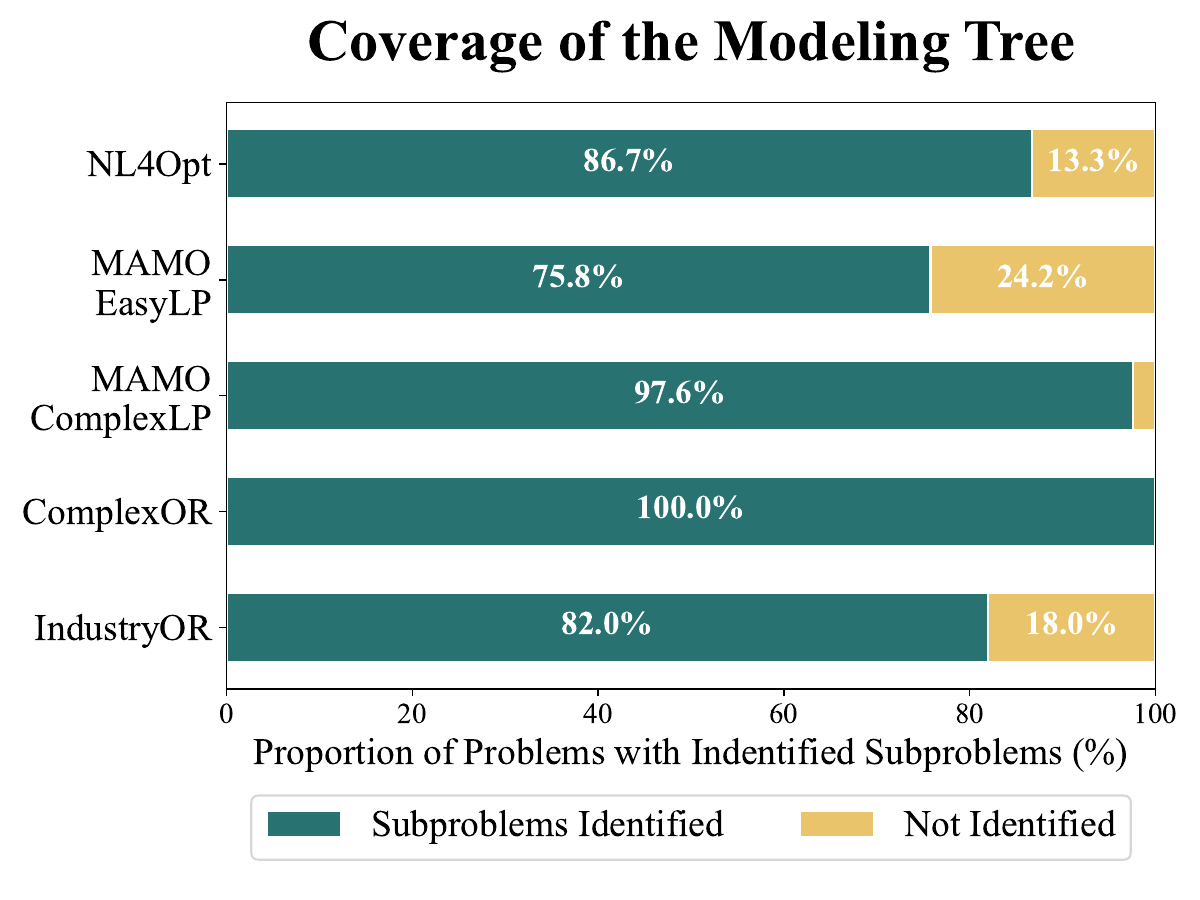}
\end{subfigure}
\centering
\begin{subfigure}{0.45\textwidth}
\flushleft
    \includegraphics[width=\textwidth]{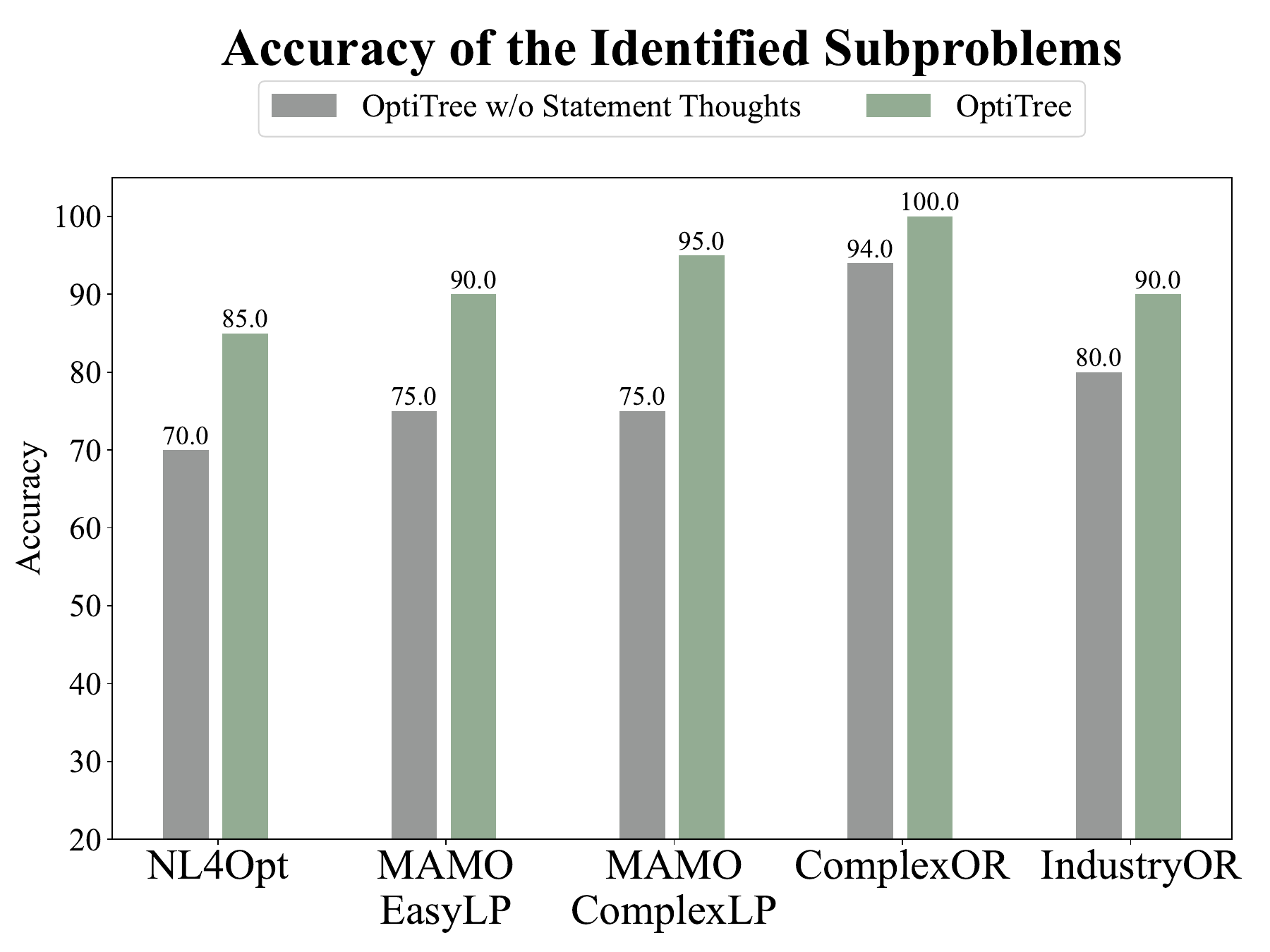}
\end{subfigure}
\vspace{-0.5em}
\caption{\textbf{Left}: OptiTree can identify subproblems for a wide range of problems, including the challenging datasets.
\textbf{Right}: We manually check the accuracy of the subproblem decomposition. }
\label{figure: analysis modeling tree}
\vspace{-0.5cm}
\end{figure}

\subsection{Analysis on the Subproblem Decomposition}

\paragraph{Coverage: Can we find a subproblem in the search process?}
We present the success rate of identifying subproblems, excluding the root node, in the left subfigure of Figure \ref{figure: analysis modeling tree}.
The results demonstrate that OptiTree can successfully identify subproblems for an average of 88\% of the problems.
This observation suggests that, despite the diversity and complexity of OR problems, there are prevalent decomposition patterns and modeling techniques that generalize effectively across various problems. The modeling tree successfully captures these patterns.

\vspace{-0.5em}
\paragraph{Reliability: Does the obtained subproblem correctly match the original problem?}
We randomly sample twenty problems from each dataset (nineteen from ComplexOR) where LLMs can identify at least one subproblem.
We manually verify whether the selected subproblems are suitable and relevant.
We compare OptiTree with its variant, OptiTree w/o Statement Thoughts, which relies solely on problem descriptions for subproblem identification.
The results in the right subfigure of Figure \ref{figure: analysis modeling tree} demonstrate that OptiTree achieves a high accuracy in subproblem identification, while OptiTree w/o Statement Thoughts suffers from a severe drop in accuracy.

\vspace{-0.5em}
\paragraph{Efficiency: What about the time cost of the searching process?}
We analyze the average time cost for each problem, comparing OptiTree with other prompt-based baselines in Table \ref{table: efficiency}.
We (1) use tree search to find an appropriate problem and (2) perform optimization modeling, combining the modeling thoughts from the problem. The two parts of time (Tree search and modeling) refer to these two steps.
We also provide the average time to solve all the problems across the benchmarks. For the baselines, the time is just the average inference time, and for OptiTree, this is the average of the one-time cost of the tree construction and inference time.
\textbf{Specifically, the one-time cost across the benchmarks is under three hours}.
The execution time for the solver code is under a second, which can be considered negligible.
OptiTree exhibits the highest efficiency compared to baselines. While the search and modeling time increases for more complex problems (such as MAMO ComplexLP, ComplexOR, and IndustryOR), we find that the majority of the time is spent on modeling, indicating that the search process itself is efficient.

To understand why OptiTree is more efficient, we analyze on the following two factors.
First, OptiTree explores a much smaller search space.
OptiTree searches predefined candidate subproblems, requiring only the identification of the most suitable subproblem within a finite set, instead of the variable space or constraint space that can grow exponentially large for more complex problems.
Second, OptiTree has fewer iterations in the workflow.
While the multi-agent-based baselines often use a manager to automatically decide the agent calls, OptiTree avoids the useless agent calls with a more streamlined workflow.

\begin{table}[t]
\centering
\caption{
Comparison of the total time (seconds) of prompt-based methods.
}

\resizebox{0.95\textwidth}{!}{
{\fontsize{7pt}{5pt}\selectfont

\begin{tabular}{@{}lccccc@{}}
\toprule
{Method} & {NL4Opt} & {MAMO} EasyLP& {MAMO} ComplexLP& {ComplexOR} & {IndustryOR}\\
\midrule
 CoE  & 37.2 & 42.6 & 76.7 & 80.0 & 81.8 \\
OptiMUS & 26.4 & 22.5 & 53.2 & 68.3 & 57.8 \\
MCTS & 103.2 & 110.8 & 111.4 & 190.1 & 124.6 \\
 \midrule
 OptiTree (Tree Search) & 8.9 & 4.9 & 4.2 & 11.2 & 8.4 \\
 OptiTree (Modeling) & 5.0 & 4.4 & 9.1 & 19.8 & 11.5  \\
OptiTree (Inference)  & 13.9 & 9.3 & 13.3 & 31.0 & 19.9\\
\bottomrule
\end{tabular}}}
\label{table: efficiency}
\vspace{-0.5em}
\end{table}

\vspace{-0.5em}
\subsection{Ablation Studies}
\vspace{-0.5em}
\paragraph{Impact of the Tree Search}
We first investigate the impact of the tree search on modeling performance.
The modeling tree organizes subproblems within its nodes.
We compare this approach with a method that does not utilize a tree structure or tree search for problem decomposition, referred to as OptiTree w/o Tree Search.
In this variant, LLMs directly select suitable subproblems from a pool at each step, rather than utilizing a structured modeling tree.
The subproblem pool is also collected from the same dataset as the modeling tree.
As shown in the left subfigure of Figure \ref{figure: ablation}, OptiTree w/o Tree Search exhibits a decline in modeling accuracy. This degradation occurs because the modeling tree effectively reduces the search space and mitigates hallucinations for LLMs at each step.

\vspace{-0.5em}

\paragraph{Impact of the Modeling Thoughts} We conduct ablation experiments on the impact of the modeling thoughts.
We implement a variant of OptiTree, called OptiTree w/o Modeling Thoughts, which models the corresponding subproblems step by step without utilizing modeling thoughts.
The results are presented in the right subfigure of Figure \ref{figure: ablation}.
We observe a significant performance drop in this method, particularly on the challenging datasets. This highlights the crucial role of modeling thoughts in guiding the modeling of subproblems.
\vspace{-0.5em}

\paragraph{Impact of the Search Depth}
To better understand the tree search process, we also conduct experiments on different depth limitations of the tree search process.
The depth represents the number of decomposed subproblems in the modeling process, with greater depth allowing for more decomposition subproblems.
OptiTree operates without depth restrictions, while we implement two variants—OptiTree (depth=1) and OptiTree (depth=3)-which limit the tree search depths to 1 or 3, respectively.
The results in Table \ref{table: search depth}, demonstrate that a deeper search depth enables the identification of more suitable subproblems for decomposition, resulting in improved modeling performance. OptiTree (depth=3) still achieves promising results, underscoring the effectiveness of the tree search.
\vspace{-0.5em}

\paragraph{Impact of the Statement Thoughts}
The statement thoughts distilled from OR problems provide a clear and comprehensible format for subproblem identification.
To investigate the effects, we implement a variant that disables statement thoughts for subproblem identification, instead relying on the problem descriptions, referred to as OptiTree w/o Statement Thoughts.
As shown in Table \ref{table: statement thoughts}, we observe a significant performance decline without statement thoughts, accompanied by an accuracy drop in subproblem matching in the right subfigure of Figure \ref{figure: analysis modeling tree}.
This decline occurs because the statement thoughts summarize the modeling-related information to reduce hallucinations for LLMs.

\begin{figure}[t]
\centering
\begin{subfigure}{0.49\textwidth}
\flushright
    \includegraphics[width=\textwidth]{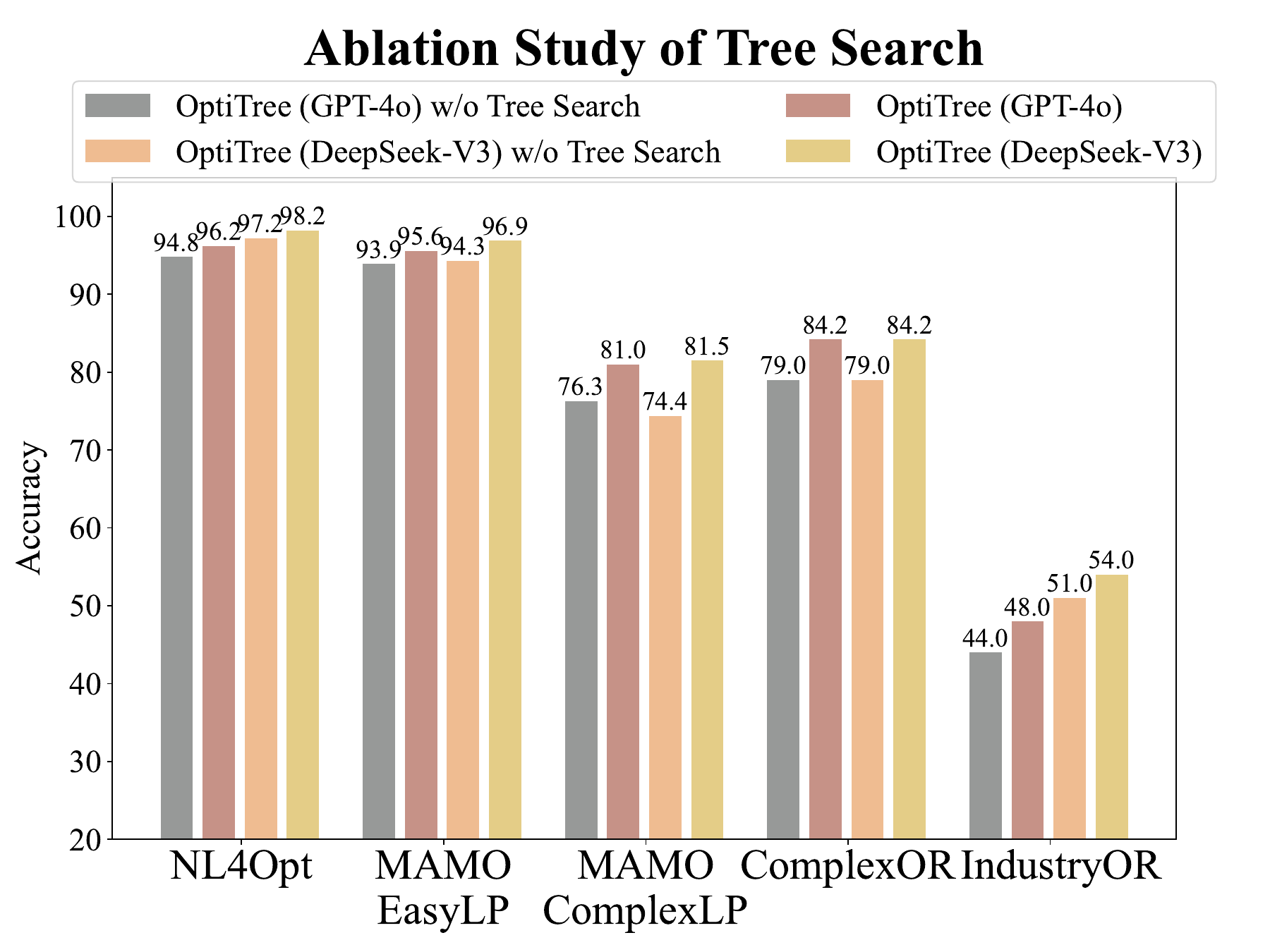}
\end{subfigure}
\begin{subfigure}{0.49\textwidth}
\flushleft
    \includegraphics[width=\textwidth]{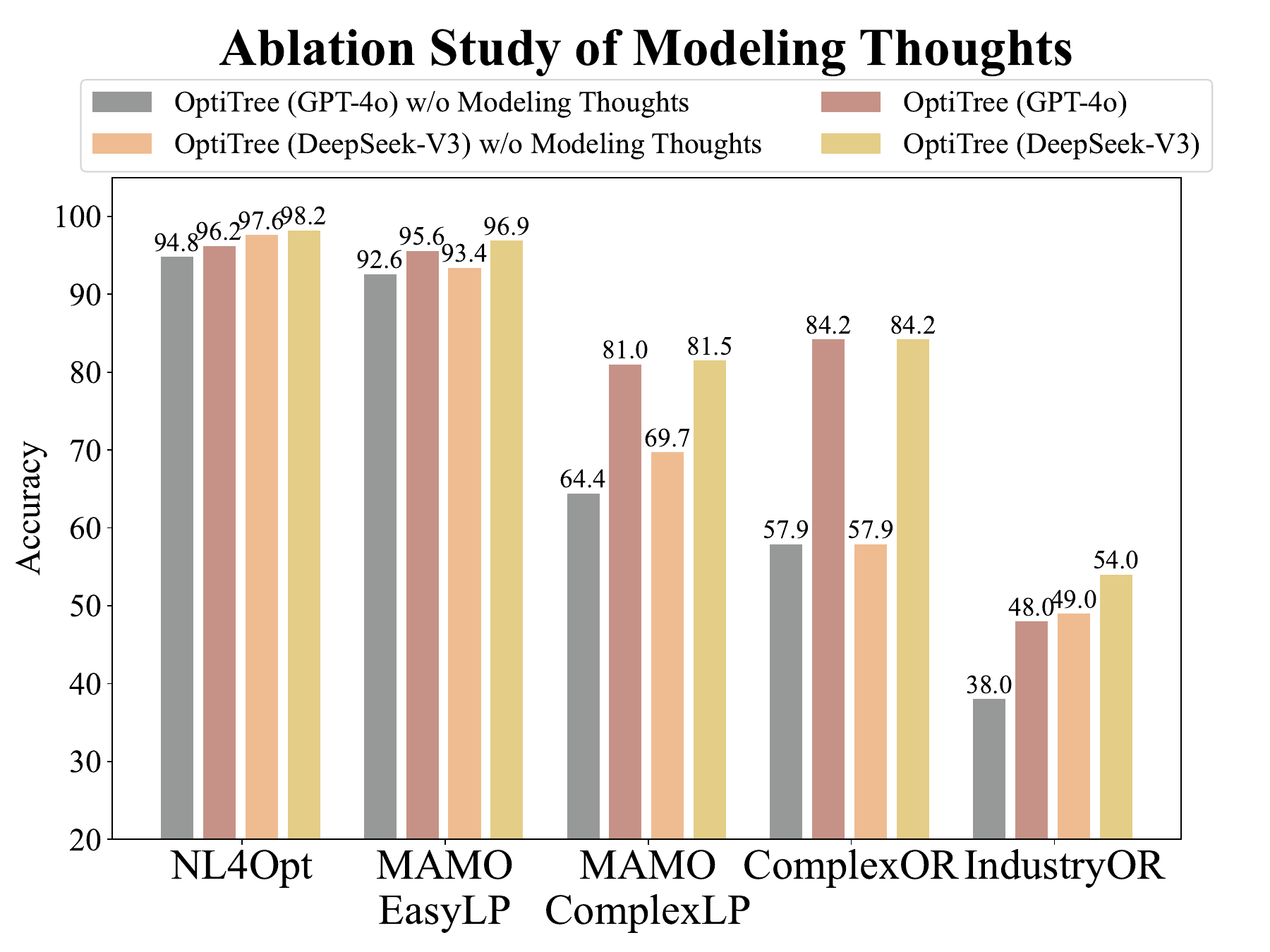}
\end{subfigure}
\vspace{-0.5em}
\caption{Ablation studies on the tree search (left) and modeling thoughts (right). The results demonstrate the critical roles of these two components.}
\label{figure: ablation}
\vspace{-0.5em}
\end{figure}

\begin{table}[t]
\centering
\vspace{-0.5em}
\caption{
Comparison of different search depths.
}
\resizebox{0.95\textwidth}{!}{
{

\begin{tabular}{@{}clccccc@{}}
\toprule
{Model}&{Method} & {NL4Opt} & {MAMO} EasyLP& {MAMO} ComplexLP& {ComplexOR} & {IndustryOR}\\
\midrule
\multirow{3}{*}{GPT-4o}&OptiTree (Depth=1)  & 93.8 & 93.3 & 75.4 & 68.4 & 40.0 \\
&OptiTree (Depth=3) & 96.2 & 95.6 & 80.1 & 79.0 & 44.0 \\
& OptiTree & 96.2 & 95.6 & 81.0 & 84.2 & 48.0  \\
\midrule
\multirow{3}{*}{DeepSeek-V3}&OptiTree (Depth=1)  & 97.2 & 95.3 & 75.4 & 68.4 & 48.0\\
&OptiTree (Depth=3) & 97.9 & 95.7 & 79.6 & 79.0 & 50.0 \\
& OptiTree  & 98.3 & 96.9 & 81.5 & 84.2 & 54.0  \\
\bottomrule
\end{tabular}}
}
\vspace{-0.5em}
\label{table: search depth}
\end{table}

\begin{table}[h]
\centering
\vspace{-0.5em}
\caption{
Ablation studies on the statement thoughts.
}
\resizebox{\textwidth}{!}{
{

\begin{tabular}{@{}clccccc@{}}
\toprule
{Model}&{Method} & {NL4Opt} & {MAMO} EasyLP& {MAMO} ComplexLP& {ComplexOR} & {IndustryOR}\\
\midrule
\multirow{2}{*}{GPT-4o}&OptiTree w/o Statament Thoughts  & 91.0 & 91.5 & 69.2 & 72.7 & 42.0\\
& OptiTree & 96.2 & 95.6 & 81.0 & 84.2 & 48.0  \\
\midrule
\multirow{2}{*}{DeepSeek-V3}&OptiTree w/o Statament Thoughts & 87.1 & 92.6 & 75.3 & 68.4 & 46.0\\
& OptiTree  & 98.3 & 96.9 & 81.5 & 84.2 & 54.0  \\
\bottomrule
\end{tabular}}
}
\vspace{-0.5em}
\label{table: statement thoughts}
\end{table}

\vspace{-0.5em}
\section{Conclusion}
\vspace{-0.5em}
This paper introduces a novel LLM-based optimization modeling method called OptiTree, which decomposes complex problems into a series of simpler subproblems to enhance modeling accuracy. OptiTree dynamically updates a modeling tree to store decomposition patterns and modeling thoughts for these subproblems, employing a tree search to identify suitable subproblems at each step. Experimental results demonstrate the superiority of OptiTree, consistently outperforming baseline methods across several challenging modeling datasets.

\subsubsection*{Acknowledgments}

The authors would like to thank all the anonymous reviewers for their insightful comments and valuable suggestions.
This work was supported by the National Key R\&D Program of China under contract 2022ZD0119801, and the National Nature Science Foundations of China grants U23A20388, 62021001 and 624B1011.

\normalem
\bibliographystyle {plain}
\bibliography{cite}


\newpage

\appendix
\etocdepthtag.toc{mtappendix}
\etocsettagdepth{mtchapter}{none}
\etocsettagdepth{mtappendix}{subsection}
\renewcommand{\contentsname}{Table of Contents for Appendix}
\tableofcontents
\newpage

\section{Additional Experiments}
\subsection{The Code Pass Rate for Each Dataset}
\label{appendix: code pass}
We present the code pass rates of various methods across the benchmarks. Following the definition in \citep{coe}, the code pass rate refers to the proportion of solver code that can be executed without encountering any errors. The results in Table \ref{table: code pass} demonstrate that our method achieves the highest code pass rate across all datasets, demonstrating the high quality of the codes generated by our approach.

\begin{table}[h]
\centering
\caption{
Comparison of code pass rates between our method and baselines across the benchmarks.
}
\resizebox{\textwidth}{!}
{
\begin{tabular}{@{}clccccc@{}}
\toprule
\multirow{2}{*}{Model}&\multirow{2}{*}{Method} & \multirow{2}{*}{NL4Opt} & {MAMO} & {MAMO} & \multirow{2}{*}{ComplexOR} & \multirow{2}{*}{IndustryOR}\\
& &  & EasyLP & ComplexLP & & \\
\midrule
\multirow{5}{*}{GPT-4o}&Standard  & 95.7 & 98.9 & 97.0 & 84.2 & 84.0\\
&CoT & 99.6 & 99.7 & 97.5 & 84.2 & 91.0\\
&CoE & 98.6 & 99.6 & 99.8 & 89.4& 96.0 \\
&OptiMUS & 100.0 & 99.8 & 100.0  &94.7 & 98.0\\
& OptiTree & 100.0 & 100.0 & 100.0 & 100.0 & 98.0\\
\midrule
\multirow{5}{*}{DeepSeek-V3}&Standard  & 97.8 & 93.1 & 94.5 & 100.0& 87.0\\
&CoT & 95.2 & 98.0  & 97.5 & 84.2 & 86.0\\
&CoE & 97.9 & 100.0 & 98.8& 100.0& 98.0\\
&OptiMUS & 99.6 & 98.4 & 99.5 & 100.0 &94.0\\
& OptiTree  & 100.0 & 100.0 & 100.0 & 100.0 & 99.0\\
\bottomrule
\end{tabular}}
\label{table: code pass}
\end{table}

\subsection{The Greatest Search Depth}
We report the greatest search depth across different datasets. For the harder datasets, the search depth tends to increase, indicating that more complex problems feature more sophisticated subproblem structures.

\begin{table}[h]
\centering
\caption{The greatest search depth}
\label{tab:search_depth}
\resizebox{\textwidth}{!}
{
\begin{tabular}{lccccc}
\toprule
                       & NL4Opt & MAMO EasyLP & MAMO ComplexLP & ComplexOR & IndustryOR \\
\midrule
Greatest Search Depth & 4      & 4           & 10             & 9         & 10         \\
\bottomrule
\end{tabular}}
\end{table}

\subsection{More Evaluation Results}
The objective metric is standard in the LLM-based optimization modeling literature \citep{coe, optimus}, and in subsequent works ORLM, LLMOPT, OptMATH, OptiBench. However, defining a robust ``partial credit'' metric is a significant challenge.
Most existing benchmarks {only contain labeled optimal value} and \textbf{do not contain annotated ground-truth optimization model}, making the partial credit metric difficult to design.
Even with ground-truth models, simple text-based or structural comparisons can be unreliable. A model may have minor syntactic differences from the ground truth but be semantically equivalent. Conversely, a model might be nearly identical yet contain a critical error, such as a misplaced inequality sign.

For further evaluation, we find that the training set in OptMATH has annotated models. We use 100 problems for evaluation. As our method and the baselines were not trained on this dataset, the comparison is safe and fair. We present the percentages of LLM-generated models that align with the ground-truth models based on statistical information, including the number of variables, binary variables, integer variables, constraints and objective values. Note that all the statistical information is just a reference, as the \textbf{mismatch of variables and constraints does not imply the model is incorrect}. Notably, OptiTree exhibits the highest matching ratio with the ground-truth model, underscoring its reliability.

\begin{table}[h]
\centering
\caption{Evaluation of partial credit (in \%)}
\label{tab:partial_credit}
\begin{tabular}{lrrrrr}
\toprule
\textbf{OptMATH}     & \textbf{Var} & \textbf{Bin} & \textbf{Int} & \textbf{Cons} & \textbf{Obj} \\
\midrule
DeepSeek-R1 & 79.0     & 85.0     & 67.0     & 46.0     & 66.0     \\
OpenAI-o1   & 83.0     & 87.0     & \textbf{93.0} & 50.0     & 68.0     \\
ORLM        & 43.0     & 52.0     & 44.0     & 35.0     & 37.0     \\
LLMOPT      & 56.0     & 65.0     & 57.0     & 37.0     & 41.0     \\
CoE         & 67.0     & 78.0     & 85.0     & 46.0     & 58.0     \\
OptiMUS     & 84.0     & 79.0     & 76.0     & 43.0     & 61.0     \\
MCTS        & 75.0     & 80.0     & 81.0     & 46.0     & 63.0     \\
Ours        & \textbf{90.0} & \textbf{92.0} & 83.0     & \textbf{68.0} & \textbf{71.0} \\
\bottomrule
\end{tabular}
\end{table}

\subsection{Built OptiTree on More Models}
We conduct experiments to build OptiTree on (1) stronger models, including GPT-o1 and DeepSeek-R1; (2) weaker models, including Qwen-2.5 7B and Llama3-8B. The results in Table~\ref{tab:improvement} demonstrate that OptiTree can significantly improve the performance of the LLMs.

\begin{table}[htbp]
\centering
\caption{Improvement on more models.}
\label{tab:improvement}
\resizebox{\textwidth}{!}
{
\begin{tabular}{lccccc}
\toprule
                    & NL4Opt & MAMO EasyLP & MAMO ComplexLP & ComplexOR & IndustryOR \\
\midrule
DeepSeek-R1        & 86.1   & 79.5        & 57.3           & 68.4      & 38.0       \\
Ours (DeepSeek-R1) & 97.9   & 96.9        & 82.5           & 84.2      & 57.0       \\
OpenAI-o1          & 87.1   & 87.6        & 54.5           & 73.6      & 40.0       \\
Ours (OpenAI-o1)   & 96.5   & 95.1        & 83.9           & 84.2      & 53.0       \\
Qwen2.5 14B        & 79.41  & 79.6        & 45.0           & 57.9      & 31.0       \\
Ours (Qwen2.5 14B) & 88.2   & 89.6        & 78.7           & 73.6      & 37.0       \\
Llama3.1-8B        & 40.5   & 71.2        & 39.8           & 63.1      & 24.0       \\
Ours (Llama3.1-8B) & 55.0   & 75.9        & 64.4           & 73.7      & 28.0       \\
\bottomrule
\end{tabular}}
\end{table}

\subsection{Constructing Modeling Tree with Different Datasets}
We investigate the robustness of the datasets for tree construction.
(1) For the first variant, we use 100 problems from MAMO EasyLP and 100 from ComplexLP, which we refer to as OptiTree (MAMO).
(2) For the second variant, we use 100 problems from the OR-Instruct 3K dataset, representing only 25\% of the problems used in our main experiments. We call this variant OptiTree (100 OR-Instruct).
We evaluate performance on three challenging benchmarks: MAMO ComplexLP, ComplexOR, and IndustryOR. Unlike the OR-Instruct 3K dataset, the MAMO datasets do not include a ground-truth modeling process for tree construction; we rely solely on the final answers for this purpose.
The results presented in Table \ref{table: generalization} reveal two key findings.
(1) Despite utilizing easier problems without a ground-truth modeling process, OptiTree (MAMO) achieves high performance, with only a slight drop compared to the original OptiTree.
(2) OptiTree demonstrates data efficiency, as OptiTree (100 OR-Instruct) exhibits performance comparable to that of the full OptiTree.
These findings suggest that common decomposition patterns and modeling techniques are widely applicable across OR problems, and the modeling tree effectively captures this general knowledge across different datasets.

\begin{table}[h]
\centering
\caption{
 Constructing a modeling tree using different datasets.
}
\small
\resizebox{\linewidth}{!}
{
\begin{tabular}{@{}clccc@{}}
\toprule
Model& Method & MAMO ComplexLP& ComplexOR & IndustryOR   \\
\midrule
\multirow{2}{*}{GPT-4o}& OptiTree (MAMO) & 84.4  & 79.0 & 44.0\\
& OptiTree (100 OR-Instruct)  & 81.0 & 84.2  & 46.0     \\
& OptiTree  & 81.0 & 84.2  & 48.0   \\
\midrule
\multirow{3}{*}{DeepSeek-V3}& OptiTree (MAMO) & 83.9  & 79.0 & 46.0\\
& OptiTree (100 OR-Instruct)  & 81.5 & 84.2  & 54.0  \\
& OptiTree  & 81.5 & 84.2  & 54.0   \\
\bottomrule
\end{tabular}}
\label{table: generalization}
\end{table}

\newpage
\section{Subproblem Identification Analysis}
\label{appendix: tree analysis}
We analyze the structure of the modeling tree, where the statistical information of the tree is presented in Table \ref{table: tree statistics}.
We also examine the problems represented within the modeling tree.
We also analyze the subproblem matching distribution in different datasets.
We find significantly different distributions of the sbuproblems on different datasets.
The consistent superior performance of OptiTree across the datasets demonstrates the strong generalization ability across different problems from various scenarios and difficulty levels.

\begin{table}[h]
\centering
\caption{
Statistics of the modeling tree.
}
\small
{
\begin{tabular}{@{}ccc@{}}
\toprule
& DeepSeek-V3& GPT-4o  \\
\midrule
Average Degree  & 2.42 & 2.74  \\
Depth & 10 & 10  \\
Node Number    & 320 &  285  \\
\bottomrule
\end{tabular}}
\label{table: tree statistics}
\end{table}

\begin{figure}[H]
    \centering
    \vspace{-0.5cm}
    \includegraphics[width=0.7\linewidth]{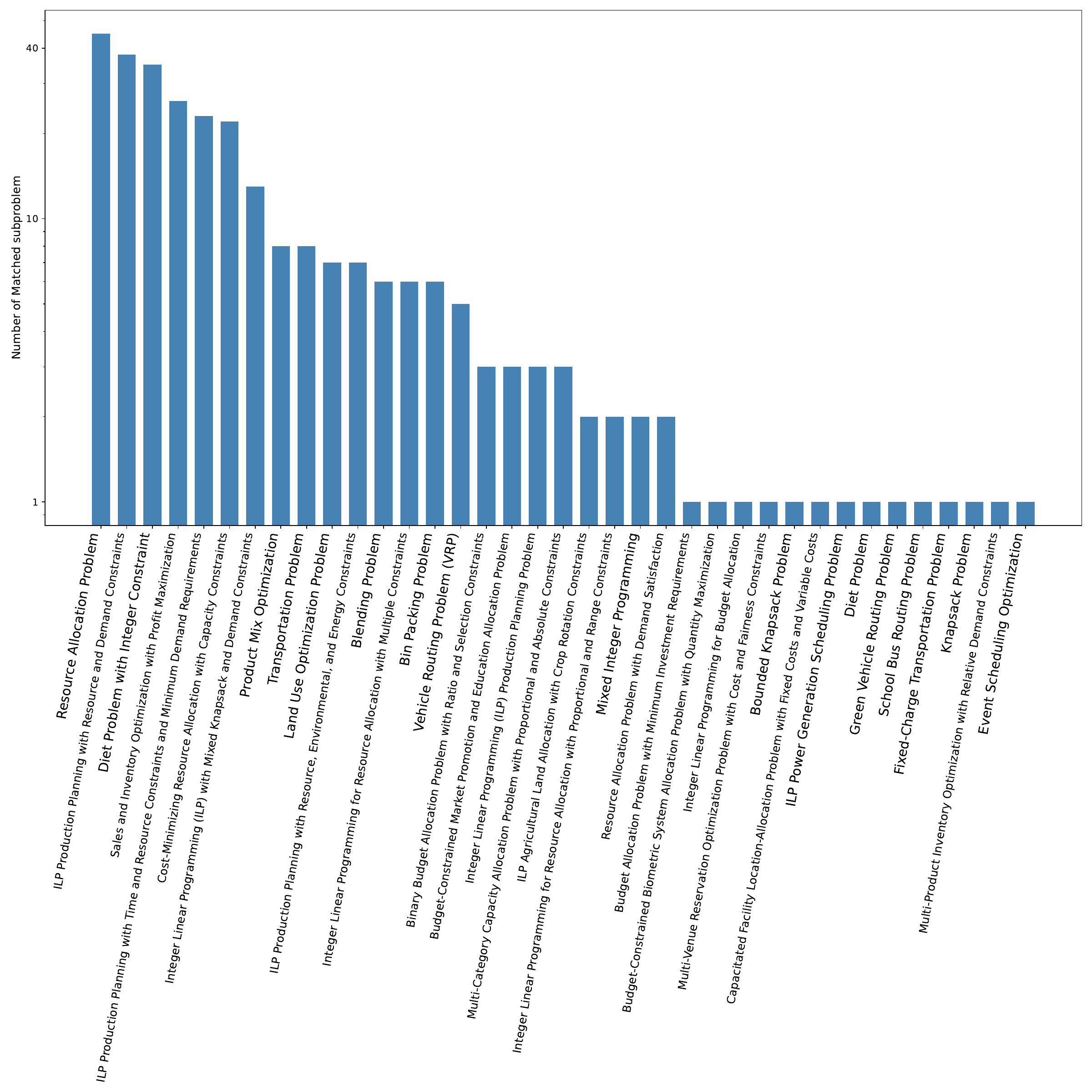}
    \caption{Subproblem distribution in NL4Opt.}
    \label{figure: subproblem nl4opt}
\end{figure}
\begin{figure}[H]
    \centering
    \includegraphics[width=0.7\linewidth]{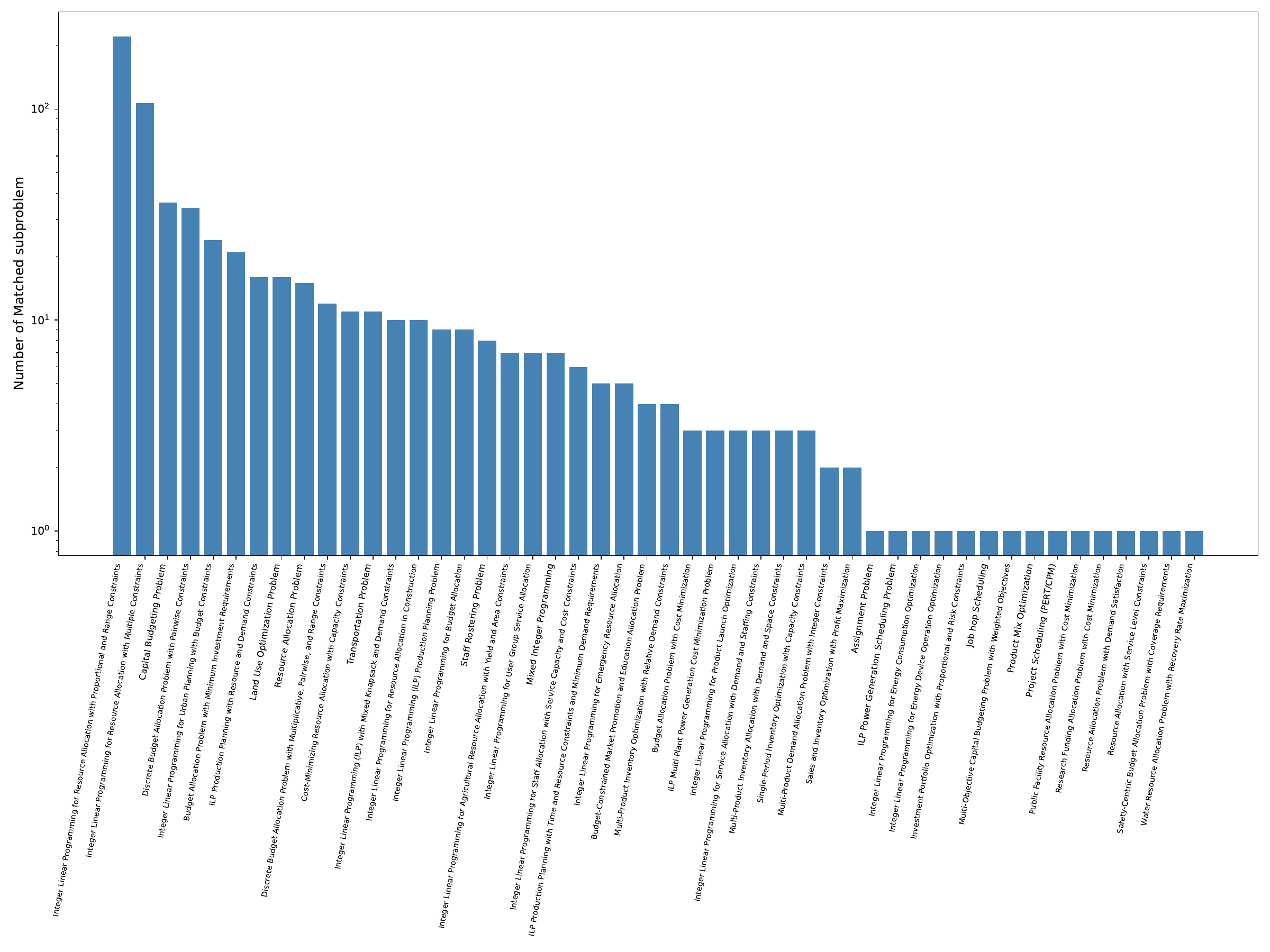}
    \caption{Subproblem distribution in MAMO EasyLP.}
    \label{figure: subproblem mamo easy}
    \vspace{-0.5cm}
\end{figure}
\begin{figure}[H]
    \centering
    \includegraphics[width=0.65\linewidth]{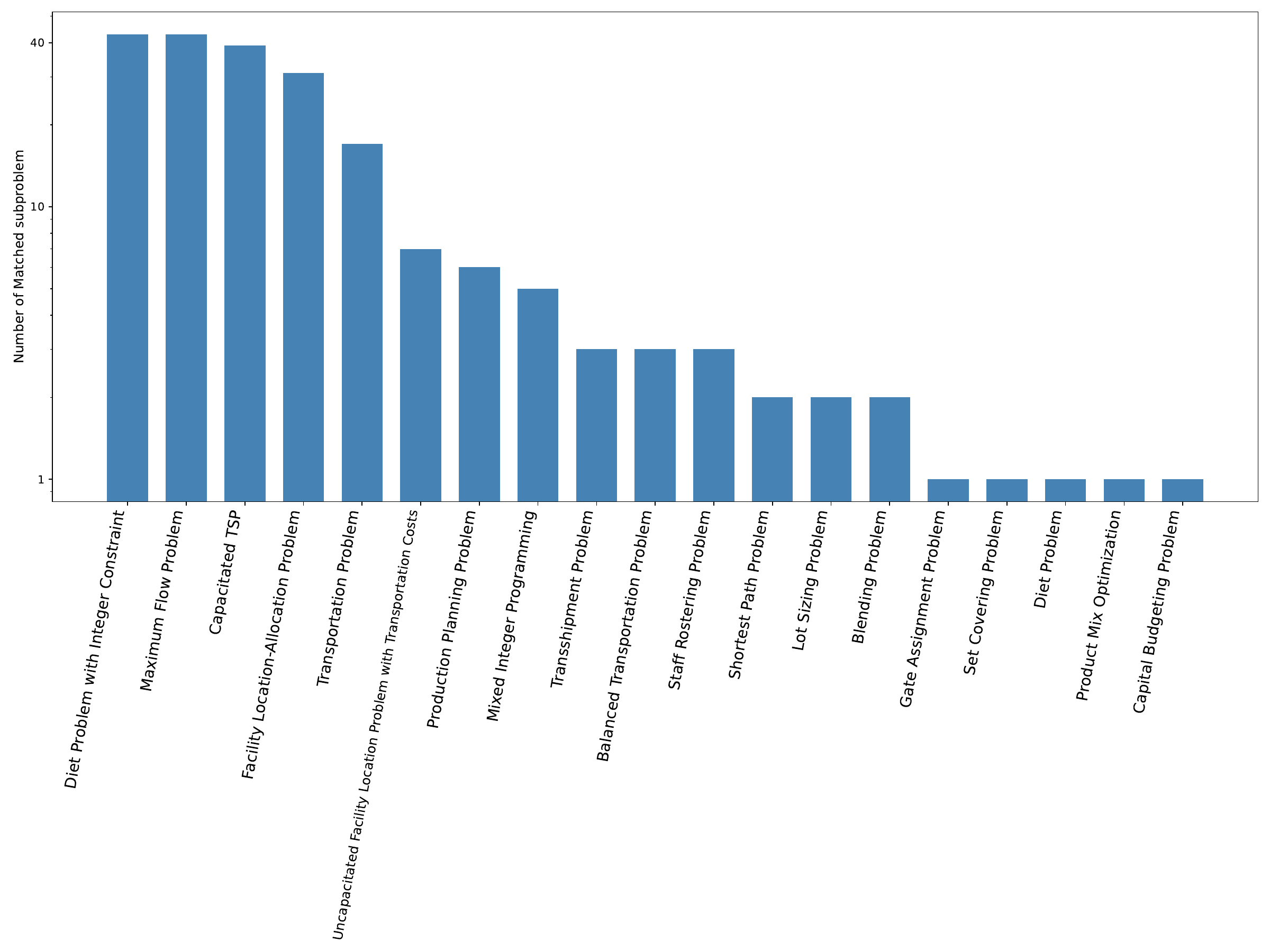}
    \caption{Subproblem distribution in MAMO ComplexLP.}
    \label{figure: subproblem mamo complex}
    \vspace{-0.5cm}
\end{figure}
\begin{figure}[H]
    \centering
    \includegraphics[width=0.65\linewidth]{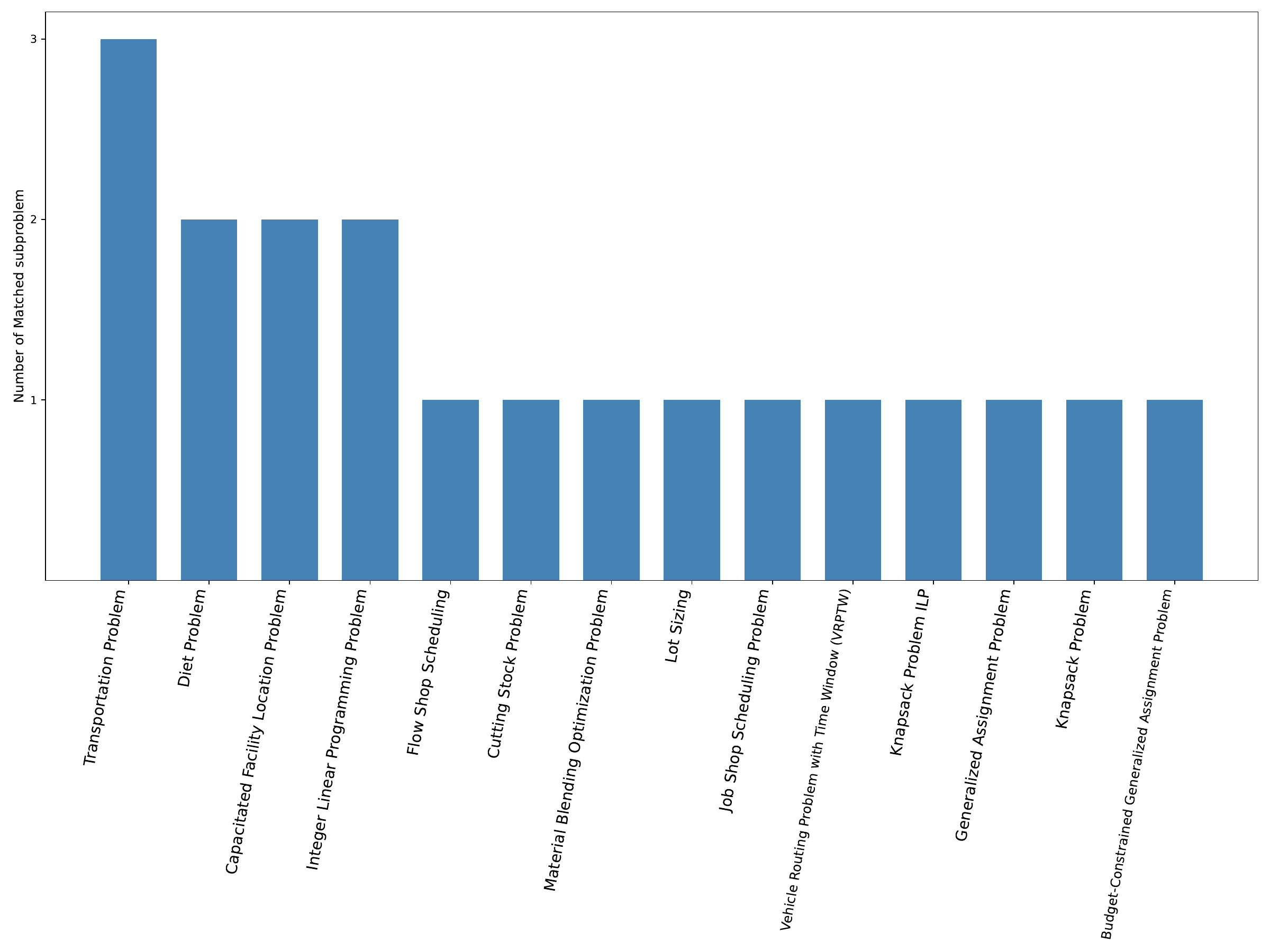}
    \caption{Subproblem distribution in ComplexOR.}
    \label{figure: subproblem complexor}
    \vspace{-0.5cm}
\end{figure}
\begin{figure}[H]
    \centering
    \includegraphics[width=0.65\linewidth]{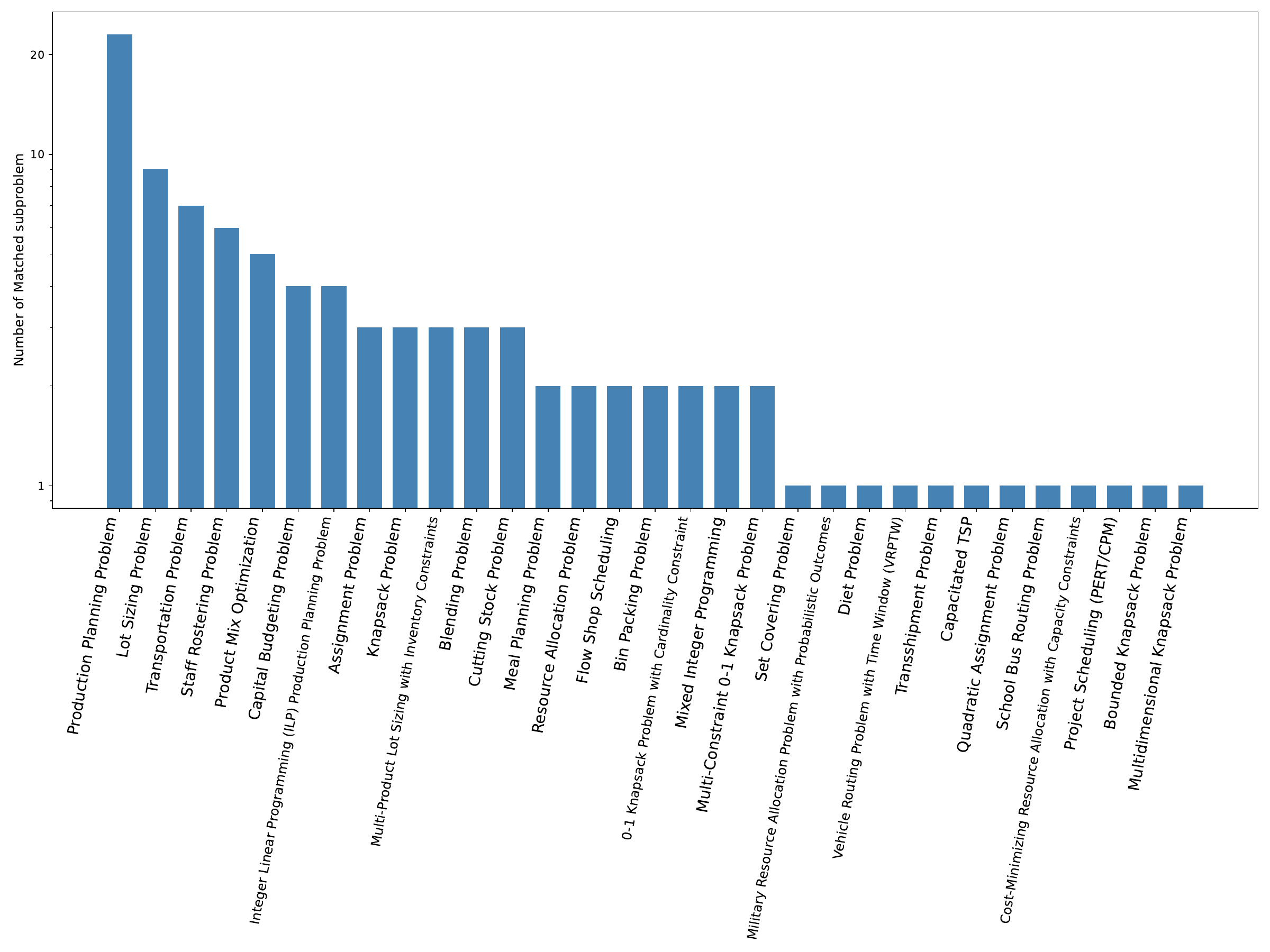}
    \caption{Subproblem distribution in IndustryOR.}
    \label{figure: subproblem industryor}
\end{figure}

\newpage
\section{Common Subproblem Decomposition Patterns}
\label{appendix: decomposition patterns}
In combinatorial optimization, there are commonly subproblem decomposition patterns.
Here, we provide observations and examples to illustrate these patterns.
First, we can expand a subproblem into more complex problems using the following patterns.
\begin{itemize}
    \item \textbf{Constraint Addition} introduces new constraints to model refined requirements.
    \item \textbf{Variable Expansion} incorporates new decision variables.
    \item \textbf{Dimensional Replication}  replicates the problem structure across multiple items.
\end{itemize}
We provide the following examples to demonstrate the three decomposition patterns.
\subsection{Analysis on CVPR and CVRPTW}
The Capacitated Vehicle Routing Problem (CVRP) aims to determine optimal routes for a fleet of vehicles to serve a set of customers from a central depot while minimizing total travel cost or distance. The basic CVRP formulation includes constraints ensuring each customer is visited exactly once, vehicles return to the depot, and the fleet size is respected. In contrast, the Capacitated Vehicle Routing Problem with Time Windows (CVRPTW) extends the CVRP by adding temporal constraints, requiring each customer to be served within a predefined time window. This introduces additional complexity, as routes must now satisfy both spatial routing constraints and temporal synchronization.

The optimization model for CVRP is as follows. $V$ is the set of nodes (depot 0 and customers), $c_{ij}$ is the travel cost from $i$ to $j$. $x_{ij}$ is the binary decision variable for route inclusion. $K$ is the fleet size.
\begin{align*}
\text{Minimize} \quad & \sum_{i \in V} \sum_{j \in V} c_{ij} x_{ij} \\
\text{Subject to} \quad & \sum_{j \in V} x_{ij} = 1, \quad \forall i \in V \setminus \{0\} \quad \text{(Customer visited once)} \\
& \sum_{i \in V} x_{ij} = 1, \quad \forall j \in V \setminus \{0\} \quad \text{(Flow conservation)} \\
& \sum_{j \in V} x_{0j} \leq K \quad \text{(Fleet size)} \\
& \sum_{i \in S} \sum_{j \in S} x_{ij} \leq |S| - 1, \quad \forall S \subseteq V \setminus \{0\}, S \neq \emptyset \quad \text{(Subtour elimination)} \\
& x_{ij} \in \{0,1\}, \quad \forall i,j \in V
\end{align*}
The optimization model for CVRPTW is as follows. $t_i$ is the service start time at node $i$. $[a_i, b_i]$ is the time window for node $i$.
$s_i$ is the service duration at $i$.
$d_{ij}$ is the travel time from $i$ to $j$.
$M$ is the big-M number.
\begin{align*}
\text{Minimize} \quad & \sum_{i \in V} \sum_{j \in V} c_{ij} x_{ij} \\
\text{Subject to} \quad & \text{All VRP constraints} \\
& a_i \leq t_i \leq b_i, \quad \forall i \in V \setminus \{0\} \quad \text{(Time window)} \\
& t_j \geq t_i + s_i + d_{ij} - M(1 - x_{ij}), \quad \forall i,j \in V \setminus \{0\} \quad \text{(Time consistency)} \\
& t_i \geq 0, \quad \forall i \in V
\end{align*}
Notice that CVRP is a subproblem of CVRPTW. We identify the following decomposition patterns.
\begin{itemize}[leftmargin=0.5cm]
    \item Variable Expansion. The introduction of new continuous variables $t_i$ tracks service times, adding a temporal dimension to the original CVRP.
    \item Constraint Addition: CVRPTW introduces time windows $a_i\le t_i\le b_i$ and synchronization constraints (time consistency), which refine feasible solutions by eliminating temporally infeasible routes.
\end{itemize}

\subsection{Analysis on SCMCFP and MCNFP}
The Single-Commodity Minimum-Cost Flow Problem (SCMCFP) optimizes the flow of a homogeneous good (e.g., water, electricity) through a network to meet supply/demand at nodes while minimizing costs, subject to arc capacity constraints. Its linear programming formulation tracks flow conservation and capacity limits for a single product type. In contrast, the Multi-Commodity Network Flow Optimization Problem (MCNFP) generalizes this by handling multiple distinct commodities (e.g., different goods in logistics, data streams in networks) sharing the same infrastructure. Each commodity has unique sources, sinks, and demands, coupled through shared arc capacity constraints.

The optimization model for SCMCFP is as follows. Suppose that $G=(V,E)$ is the network with nodes $V$ and directed arcs $E$. $b_i$ is the net supply ($>0$) or demand ($<0$) at node $i$.
$c_{ij}$ is the cost per unit flow on arc $(i,j)$.
$u_{ij}$ is the capacity of arc $(i,j)$.
$x_{ij}$ is the flow decision variable on arc $(i,j)$.
\begin{align*}
\text{Minimize} \quad & \sum_{(i,j) \in E} c_{ij} x_{ij} \\
\text{Subject to} \quad & \sum_{j: (i,j) \in E} x_{ij} - \sum_{j: (j,i) \in E} x_{ji} = b_i, \quad \forall i \in V \quad \text{(Flow conservation)} \\
& 0 \leq x_{ij} \leq u_{ij}, \quad \forall (i,j) \in E \quad \text{(Capacity constraints)}
\end{align*}

The optimization model for MCNFP is as follows. Suppose that $K$ is the set of commodities. $s_k, t_k$ is the source/sink nodes for commodity $k$.
$d_k$ is the demand for commodity $k$.
$c_{ij}^k$ is the commodity-specific arc cost. $x_{ij}^k$ is the flow of commodity $k$ on arc $(i,j)$.
\begin{align*}
\text{Minimize} \quad & \sum_{k \in K} \sum_{(i,j) \in E} c_{ij}^k x_{ij}^k \\
\text{Subject to} \quad & \sum_{j: (i,j) \in E} x_{ij}^k - \sum_{j: (j,i) \in E} x_{ji}^k =
\begin{cases}
d_k & \text{if } i = s_k \\
-d_k & \text{if } i = t_k \\
0 & \text{otherwise}
\end{cases}, \quad \forall i \in V, k \in K \\
& \sum_{k \in K} x_{ij}^k \leq u_{ij}, \quad \forall (i,j) \in E \quad \text{(Shared capacity)} \\
& x_{ij}^k \geq 0, \quad \forall (i,j) \in E, k \in K
\end{align*}

Notice that SCMCFP is a subproblem of MCNFP, where we identify the following pattern.
\begin{itemize}[leftmargin=0.5cm]
\item Dimensional Replication: Adds commodity dimension $k \in K$ to flow variables.
\end{itemize}

\newpage

\newpage
\section{Algorithm of OptiTree}
\label{appendix: algorithm}

\begin{algorithm}[H]
\caption{Tree Search for Subproblem Decomposition}\label{alg:tree_search}
\begin{algorithmic}[1]
\Require Target problem $\mathcal{P}$, modeling tree $\mathbb{T}$
\Ensure Hierarchical modeling thoughts $\mathcal{T} (\mathcal{P})$, and the maximal subproblem $\tilde{\mathcal{P}}$
\Statex
\Function{TreeSearch}{$\mathcal{P}, \mathbb{T}$}
    \State Initialize current node $\mathcal{N}\leftarrow$ The root of the modeling tree $\mathbb{T}$

    \State Extract statement thoughts $\mathcal{C}_\mathcal{P}$
    \State \# Identified subproblems

    \While{$\mathcal{N}$ has children}
        \State Get the children of $\mathcal{N}$
        \State \# Prompt of Subproblem Identification in Appendix \ref{appendix: prompts subproblem idendification}
        \State Identify the subproblem of $\mathcal{P}$ among the children with similarity scores


        \If{$\mathcal{P}$ has a subproblem among the children}
            \State $\mathcal{N}\leftarrow$ Subproblem with the max similarity score
            \State The maximal subproblem $\tilde{\mathcal{P}}\leftarrow \mathcal{N}$
        \Else
            \State $\mathcal{N}\leftarrow$ None
        \EndIf
    \EndWhile

    \State Synthesize $\mathcal{T}(\mathcal{P})$ from $\mathcal{T}(\tilde{\mathcal{P}})$
    \State \Return $\mathcal{T}(\mathcal{P})$, $\tilde{\mathcal{P}}$
\EndFunction
\end{algorithmic}
\end{algorithm}

\begin{algorithm}[H]
\caption{Modeling Tree Update}\label{alg:tree_update}
\begin{algorithmic}[1]
\Require New problem $\mathcal{P}$, modeling tree $\mathbb{T}$
\Ensure Updated modeling tree
\Statex
\Function{UpdateTree}{$\mathcal{P}, \mathcal{T}$}
    \State $\mathcal{T}(\mathcal{P})$, $\tilde{\mathcal{P}}\leftarrow$ TreeSearch($\mathcal{P}$, $\mathbb{T}$)
    \State \# Prompts Modeling with Modeling Thoughts in \ref{appendix: prompts modeling}
    \State Model the problem $\mathcal{P}$ using the modeling thoughts and solve the model for optimal value $y$
    \If{$y$ does not equal to the ground-truth objective value}
        \State \# Prompts of Modeling Thoughts Distillation in Appendix  \ref{appendix: tree update}
        \State Distill schema (problem type,$\mathcal{C}_\mathcal{P}$,$\mathcal{T} (\mathcal{P}))$ from $\mathcal{P}$
        \State Find the children of the maximal subproblem $\tilde{\mathcal{P}}$

        \For{child $\tilde{\mathcal{P}}_k$ in the children}
            \State \# Prompt of Add New Tree Nodes in Appendix \ref{appendix: tree update}
            \If{$\mathcal{P}\subset_{\mathcal{S}}\tilde{\mathcal{P}}_k$}
                \State Insert $\mathcal{P}$ as parent of $\tilde{\mathcal{P}}_k$ and children of $\tilde{\mathcal{P}}$
            \Else
                \State Insert $\mathcal{P}$ as sibling of $\tilde{\mathcal{P}}_k$ and children of $\tilde{\mathcal{P}}$
            \EndIf
        \EndFor


    \EndIf
    \State \Return $\mathbb{T}$
\EndFunction
\end{algorithmic}
\end{algorithm}

\newpage

\section{More Related Works}
\subsection{Machine Learning for Accelerating Mathematical Optimization Solving}
Mathematical optimization finds broad applications across critical domains such as chip design \citep{xia2025high} and optimization \citep{2025rome}.
Driven by advances in graph models \citep{TDM} and generative models \citep{anonymous2025on}, machine learning techniques have been widely adopted to speed up combinatorial optimization solving \citep{surveybengio2021}.
These efforts follow main directions: improving key modules in branch-and-bound algorithms \citep{learn2branch2019, wangHEM}, and predicting high-quality solutions to warm-start solvers \citep{liu2025apollomilp, PS2023, ConPS}.

\section{Proof of Proposition \ref{proposition}}
\label{appendix: proof}
\begin{lemma}[Transitivity of $\subseteq_{\mathcal{S}}$]
Given three problems $\mathcal{P}_1, \mathcal{P}_2$ and $\mathcal{P}_3$, if $\mathcal{P}_1 \subseteq_{\mathcal{S}} \mathcal{P}_2$ and $\mathcal{P}_2\subseteq_{\mathcal{S}} \mathcal{P}_3$, then $\mathcal{P}_1 \subseteq_{\mathcal{S}}\mathcal{P}_3$.
\end{lemma}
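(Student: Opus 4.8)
The statement to prove is the transitivity lemma: if $\mathcal{P}_1 \subseteq_{\mathcal{S}} \mathcal{P}_2$ and $\mathcal{P}_2 \subseteq_{\mathcal{S}} \mathcal{P}_3$, then $\mathcal{P}_1 \subseteq_{\mathcal{S}} \mathcal{P}_3$. The plan is to unfold the definition of $\subseteq_{\mathcal{S}}$ in terms of statement thoughts: by definition, $\tilde{\mathcal{P}} \subseteq_{\mathcal{S}} \mathcal{P}$ means $\mathcal{C}_{\tilde{\mathcal{P}}} \subseteq_{\mathcal{S}} \mathcal{C}_{\mathcal{P}}$, i.e., the statement thoughts of $\tilde{\mathcal{P}}$ are semantically contained within those of $\mathcal{P}$. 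So the lemma reduces to showing that semantic containment of finite sets of statement thoughts is transitive. The key step is therefore to argue that if every thought $c \in \mathcal{C}_{\mathcal{P}_1}$ is semantically entailed by $\mathcal{C}_{\mathcal{P}_2}$, and every thought $c' \in \mathcal{C}_{\mathcal{P}_2}$ is semantically entailed by $\mathcal{C}_{\mathcal{P}_3}$, then every thought in $\mathcal{C}_{\mathcal{P}_1}$ is semantically entailed by $\mathcal{C}_{\mathcal{P}_3}$ — which follows because semantic entailment composes.

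First I would recall the underlying model-level picture from \eqref{equation: problem}–\eqref{equation: subproblem} to justify that the abstract relation $\subseteq_{\mathcal{S}}$ behaves like a partial-order-style containment: being a subproblem corresponds to the optimization model \eqref{equation: subproblem} of $\mathcal{P}_1$ arising as a restriction (fixing a variable block, selecting a subset of constraint components, dropping part of the objective) of that of $\mathcal{P}_2$, and likewise $\mathcal{P}_2$'s model is such a restriction of $\mathcal{P}_3$'s. Composing two restrictions of this form is again a restriction of the same form — one simply composes the variable partitions and takes the intersection of the retained constraint-index sets — so at the model level transitivity is immediate. Then I would transfer this to the statement-thought level: since $\mathcal{C}_{\mathcal{P}_1} \subseteq_{\mathcal{S}} \mathcal{C}_{\mathcal{P}_2} \subseteq_{\mathcal{S}} \mathcal{C}_{\mathcal{P}_3}$ as nested semantic-containment relations on finite thought sets, and semantic containment is reflexive and transitive as a preorder, we conclude $\mathcal{C}_{\mathcal{P}_1} \subseteq_{\mathcal{S}} \mathcal{C}_{\mathcal{P}_3}$, hence $\mathcal{P}_1 \subseteq_{\mathcal{S}} \mathcal{P}_3$.

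The main subtlety — not so much an obstacle as a point requiring care — is that $\subseteq_{\mathcal{S}}$ is defined operationally via an LLM's semantic-containment judgment, so strict transitivity is really an idealization: I would state it as a property of the intended semantic relation (semantic entailment between sets of atomic statements), noting that the LLM is used to approximate this relation. Once transitivity of $\subseteq_{\mathcal{S}}$ is in hand, Proposition~\ref{proposition} follows by a short induction on the update steps of Algorithm~\ref{alg:tree_update}: the only modification to ancestor–descendant relations is the insertion of $\mathcal{P}$ between $\tilde{\mathcal{P}}$ (its new parent, with $\tilde{\mathcal{P}} \subseteq_{\mathcal{S}} \mathcal{P}$ guaranteed by the tree search) and each affected child $\tilde{\mathcal{P}}_k$ (with $\mathcal{P} \subseteq_{\mathcal{S}} \tilde{\mathcal{P}}_k$ checked explicitly in the first case), so any new ancestor–descendant pair created through $\mathcal{P}$ is a chain of $\subseteq_{\mathcal{S}}$ relations, and transitivity collapses it to the required single $\subseteq_{\mathcal{S}}$ relation; the sibling case introduces no new ancestor relations and hence preserves the invariant trivially.
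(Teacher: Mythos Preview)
Your proposal is sound, but there is a structural discrepancy worth flagging: the paper does not actually prove the transitivity lemma as a separate claim. The \texttt{proof} environment that follows the lemma statement immediately launches into an induction on tree-update steps with a Case~1/Case~2 analysis of node insertion --- that is the proof of Proposition~\ref{proposition}, not of the lemma. The lemma itself is stated without argument and treated as a primitive property of the semantic-containment relation $\subseteq_{\mathcal{S}}$.

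So your first two paragraphs --- unfolding $\subseteq_{\mathcal{S}}$ in terms of statement thoughts, arguing that semantic entailment composes, and giving the model-level picture of composing restrictions from \eqref{equation: problem}--\eqref{equation: subproblem} --- go beyond what the paper offers. That is a genuine (if modest) addition, and your caveat that the LLM only approximates the idealized entailment relation is a fair acknowledgment of what the paper glosses over. Your third paragraph, the induction on updates with the parent-insertion and sibling-insertion cases, matches the paper's argument for the Proposition essentially line for line: both rely on the search guaranteeing $\mathcal{P}^{(M)} \subseteq_{\mathcal{S}} \mathcal{P}$, the explicit check $\mathcal{P} \subseteq_{\mathcal{S}} \mathcal{P}_k^{(M)}$ in the first case, and the observation that the sibling case creates no new ancestor--descendant pairs. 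In short, your treatment of the lemma is more explicit than the paper's (which gives none), and your treatment of the Proposition coincides with it.
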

\begin{proof}

We prove by induction. If the initial tree contains only the root node (abstract OR problem). The property holds trivially.
Assume the modeling tree is subproblem order-preserving after $k$ insertions. We show that the tree preserves the property when adding a new node $\mathcal{P}$.
We Identify the maximum subproblem $\mathcal{P}^{(M)}$ for $\mathcal{P}$.
For each child $\mathcal{P}_t^{(M)}$ of $\mathcal{P}^{(M)}$, $\mathcal{P}_t^{(M)}$ cannot be the subproblem of $\mathcal{P}$.
Thus, $\mathcal{P}$ cannot be in the descendent of $\mathcal{P}_t^{(M)}$.
We have the following two situations.

\paragraph{Case 1 Analysis} ($\mathcal{P}$ becomes parent of $\mathcal{P}_t^{(M)}$).
By insertion rule, we have $\mathcal{P}^{(M)} \subseteq_{\mathcal{S}} \mathcal{P}$.
By assumption we have $\mathcal{P} \subseteq_{\mathcal{S}} \mathcal{P}_t^{(M)}$. Finally, the transitivity ensures all ancestors of $\mathcal{P}^{(M)}$ satisfy $\mathcal{P}' \subseteq_{\mathcal{S}} \mathcal{P}_t^{(M)}$.

\paragraph{Case 2 Analysis} ($\mathcal{P}$ becomes sibling of $\mathcal{P}_t^{(M)}$).
First, the operation maintains $\mathcal{P}^{(M)} \subseteq_{\mathcal{S}} \mathcal{P}$ and $\mathcal{P}^{(M)} \subseteq_{\mathcal{S}} \mathcal{P}_t^{(M)}$.
We notice that no new ancestor-descendant relationships are created between $\mathcal{P}$ and siblings, and existing subproblem relations are preserved through shared parent $\mathcal{P}^{(M)}$.

Thus, the modeling tree remains subproblem order-preserving. By induction, the proposition holds for all tree updates.
\end{proof}

\section{Limitations and Failure Cases}
\label{appendix: limitations}
Following OptiMUS \citep{optimus}, the errors mainly come from three aspects: missing or wrong constraints, incorrect modeling and coding errors. In \citep{optimus}, almost half of the errors come from the incorrect modeling (62.5\% for ComplexOR), e.g., defining wrong binary variables in the model. We analyze the failures of OptiTree and normalize the failure rates to sum to 1. We find that most errors are from the missing constraints. {OptiTree can significantly reduce the error of incorrect modeling, which mainly comes from the wrong variable definitions}.
The constraint errors are the main error for OptiTree.
First, accurately formulating constraints requires a deep and nuanced {understanding of the specific problem scenario and its domain knowledge}. For highly specialized or novel OR problems, existing background knowledge may not fully capture the intricate details necessary for precise constraint definitions.
The second limitation is the intrinsic \textbf{reasoning ability of the base LLM}. While OptiTree provides a structured framework and knowledge, the final synthesis depends on the LLM's inherent reasoning capabilities. This framework may struggle with problems requiring significant logical or relational reasoning. For instance, in the staff rostering problem, the model must allocate staff for 24-hour shifts to meet coverage constraints (with workers working 8-hour days). If it fails to recognize that the demand period from 1:00 to 2:00 AM is covered by workers who began their shifts at 8:00 PM, it will result in an incorrect constraint and ultimately a flawed model.

\begin{table}[h]
\centering
\caption{Failure cases information}
\label{tab:failures}
\begin{tabular}{lrrr}
\toprule
                     & ComplexOR & IndustryOR & OptMATH \\
\midrule
Incorrect modeling  & 0         & 10.9       & 23.8    \\
Missing constraints & 66.7      & 82.6       & 61.9    \\
Coding errors       & 33.3      & 6.5        & 14.3    \\
\bottomrule
\end{tabular}
\end{table}

\section{Case Analysis}
\subsection{Example: Inventory Problem}
We provide a detailed analysis of an example from the \textbf{IndustryOR} dataset.
This example demonstrates that while \textbf{OptiTree successfully corrects fundamental structural errors} (what \cite{optimus} terms ``Incorrect Modeling''), it still faces challenges with constraints that require deep, context-specific \textbf{logical reasoning}.

\textbf{Problem}. A factory requires a specialized tool for $n$ planning stages. $r_j$ specialized tools are needed. At the end of this stage, all tools used within this stage must be sent for repair. There are two repair methods: one is slow repair, which is cheaper (costs $b$ per tool) but takes longer ($p=3$ stages to return); the other is fast repair, which costs $c$ per tool and requires $q=1$ stages to return. If the tools cannot meet the needs, new ones must be purchased, with a cost of $a$ per new tool. Determine an optimal plan to minimize the cost.

\paragraph{The Errors of the Standard Output Model}
The standard output from DeepSeek-V3 produces a structurally flawed model. It tries to fulfill demand by combining new purchases with tools returning from repair, but it completely lacks any concept of inventory tools.
The model given by the Standard output of DeepSeek-V3 is as follows. Let $x_j \geq 0$ be the new tools purchased at stage $j$, $s_j \geq 0$ be the tools sent to slow repair at stage $j$, and $f_j \geq 0$ be the tools sent to fast repair at stage $j$.

\begin{align*}
\min \quad &\sum_{j=1}^{n} \left(a x_j + b s_j + c f_j\right) \\
\text{s.t.} \quad &s_j + f_j = r_j \quad \forall j \in \{1,\dots,n\} &\text{(Repair Requirement)} \\
&x_j + s_{k-p} + f_{k-q} \geq r_j \quad \forall j \in \{1,\dots,n\} &\text{(Demand Fulfillment)}
\end{align*}

Following the errors described in OptiMUS \cite{optimus}, the model output has the error type of \textbf{Incorrect modeling}, which fails to define the \textbf{Inventory variables $I$}.

\paragraph{The Improvement of OptiTree}
OptiTree correctly identifies that this problem contains an \textbf{Inventory Problem} subproblem. By applying the modeling thoughts, it generates a much-improved model that correctly introduces inventory variables $I$ and a proper inventory balance constraint.
The model given by OptiTree (DeepSeek-V3) is as follows. Let $x_j \in \mathbb{Z}_+$ be the new tools purchased at stage $j$, $s_j \in \mathbb{Z}_+$ be the tools sent to slow repair at stage $j$, $f_j \in \mathbb{Z}_+$ be the tools sent to fast repair at stage $j$, and $I_j \in \mathbb{Z}_+$ be the inventory of available tools at start of stage $j$. OptiTree finds a \textbf{subproblem of the Inventory Problem} for the original problem.

\begin{align*}
\min \quad &\sum_{j=1}^{n} \left(a x_j + b s_j + c f_j\right) \\
\text{s.t.} \quad &I_1 = 0 &\text{(Initial Inventory)} \\
&s_j + f_j = r_j \quad \forall j \in \{1,\dots,n\} &\text{(Repair Requirement)} \\
&I_{j+1} = (I_j + x_j - r_j) + s_{j-p} + f_{j-q} \quad \forall j \in \{1,\dots,n\} &\text{(Inventory Balance)} \\
&I_j + x_j \geq r_j \quad \forall j \in \{1,\dots,n\} &\text{(Demand Fulfillment)}
\end{align*}

\paragraph{The Errors in the Model Given by OptiTree}
Despite the structural correction, the OptiTree model still contains a subtle error that requires \textbf{common-sense logical reasoning}. The model does not account for the finite 10-stage horizon. {A human modeler would reason that it is illogical to pay for a repair that will finish after the tools are no longer needed}.
 Slow repairs (taking $p=3$ stages) are disabled for the last 3 stages ($j=8,9,10$) since tools sent for repair at these stages wouldn't return by stage 10, and fast repairs (taking $q=1$ stage) are disabled for the final stage ($j=10$) for the same reason. The repair requirement uses inequality
    \[ s_j + f_j \leq r_j \quad \text{(Repair Requirement)} \]
    rather than equality $s_j + f_j = r_j$, allowing flexibility to not repair tools when unnecessary---particularly near the end of the horizon. Thus, the final corrected model should be
    \begin{align*}
    \min \quad &\sum_{j=1}^{n} \left(a x_j + b s_j + c f_j\right) \\
    \text{s.t.} \quad &I_1 = 0 &\text{(Initial Inventory)} \\
    &s_j + f_j \leq r_j \quad \forall j \in \{1,\dots,n\} &\text{(Repair Requirement)} \\
    &I_{j+1} = (I_j + x_j - r_j) + s_{j-p} + f_{j-q} \quad \forall j \in \{1,\dots,n\} &\text{(Inventory Balance)} \\
    &I_j + x_j \geq r_j \quad \forall j \in \{1,\dots,n\} &\text{(Demand Fulfillment)}
    \end{align*}
This demonstrates a limitation where OptiTree applies a general pattern correctly but misses a nuanced, context-dependent constraint. The error stems not from a lack of OR knowledge patterns but from a gap in the base LLM's logical reasoning about the problem's specific context. We believe this limitation will diminish as the underlying reasoning capabilities of LLMs continue to improve.

\section{Broader Impacts}
\label{appendix: impacts}
This paper focuses on the automatic optimization modeling in operations research.
The traditional modeling process often demands significant human effort and specialized domain knowledge, making it both costly and time-consuming. Our work aims to enhance the efficiency and accuracy of this process.
We introduce a novel tree-based method for organizing operations research problem data. Unlike existing approaches that treat each problem individually, our modeling tree leverages the relationships between different problems. During the modeling process, we can retrieve and utilize valuable modeling thoughts to improve overall effectiveness.
Additionally, our research offers new perspectives on organizing, utilizing, and managing knowledge within the field of operations research.

\newpage
\section{Prompts for LLMs}
\label{appendix: prompts}
In this part, we present the meta-prompts we used in this paper.
\subsection{Schema in OptiTree}
\label{appendix: schema}
\begin{promptbox}{Schema}
\begin{lstlisting}[basicstyle=\ttfamily]

Problem Type: Diet Problem with Integer Constraint,

Statement Thoughts: [
    Statement Thoughts: The Diet Problem with Integer Constraint involves determining the quantity of each food item that should be consumed to meet nutritional requirements while minimizing cost. The twist is that the quantities must be integers, representing the whole number of servings or portions of each food item. The problem is formulated to ensure nutritional constraints(such as calories, protein, vitamins, etc.) are satisfied, and the total cost is minimized.,
    Nutritional Constraints: Ensure that the selected food items provide at least the required amount of nutrients, such as calories, proteins, fats, vitamins, and minerals.,
    Cost Minimization: The total cost of the selected food items should be minimized while meeting nutritional constraints.,
    Integer Servings: The servings of food items must be integer values, representing realistic portions of each food item.
],

Modeling Thoughts: [
    [Define Decision Variables] Define decision variables that represent the integer quantities of servings for each food item.,
    [Define Objective Function] Minimize the total cost of the selected food items.,
    [Define Nutritional Constraints] Ensure that the nutrient intake meets or exceeds the minimum required levels for each nutrient.,
    [Implement Integer Constraints] Ensure integer constraints for the serving sizes of each food item in the model.,
    [Comprehensive Verification] Check the common errors in the optimization model.,
    [Write Gurobi Code] Write the Gurobi code to solve the problem.,
    [Gurobi Code] \n```python\nimport json\nimport numpy as np\nimport math\nimport gurobipy as gp\nfrom gurobipy import GRB\n\n# Create a new model\nmodel = gp.Model('model')\n\n# define parameters\n\n# define variables\n\n# define constraints\n\n# define objective \n\n# Optimize the model\nmodel.optimize()\nstatus = model.status\n\nobj_val = None\n# Check whether the model is infeasible, has infinite solutions, or has an optimal solution\nif status == gp.GRB.INFEASIBLE:\n    obj_val = \infeasible\\nelif status == gp.GRB.UNBOUNDED:\n    obj_val = \unbounded\\nelif status == gp.GRB.OPTIMAL:\n    obj_val = model.objVal\ntime = model.TimeLimit\nprint(\Timecost\:,time)\nprint(\Objective Value:\, obj_val)\n```,
    [error_tips]
        Ensure decision variables are defined as integers.,
        Confirm that all nutritional requirements are modeled correctly and constraints are set accurately.,
        Review that cost coefficients are correctly assigned in the objective function.,
        gurobi_example: Example not provided since specific problem data is omitted; follow reasoning flow for formulation specifics.
]

\end{lstlisting}
\end{promptbox}

\subsection{Prompts for Subproblem Identification}
\label{appendix: prompts subproblem idendification}
\begin{promptbox}{Subproblem Identification}

   \begin{lstlisting}[basicstyle=\ttfamily]
You are a mathematical formulator working with a team of optimization experts. The objective is to tackle a complex optimization problem.


You are given a specific problem and its current basic problem type. You are also provided with a list of subtypes for this basic problem type.
Input problem: {input_problem}.
Its current basic problem type:{basic_type}

You are given a list of defined subtypes and the statement thoughts of subtypes: {statement_thought_info}.

Your task is to determine if the specific problem belongs to one of the given subtypes. If it does, return the subtype directly. Importantly, you must return the only subtype verbatim(don't return the statement thoughts), which is provided in the list. If it does not, return subtype not found.

Also, return a boolean value indicating whether the input problem belongs to the defined subtypes. Return only the final answer in the following JSON format:

```json
{{
    matching_subtype: <matching_subtype>,
    reasoning: <reasoning_process>
    belongs_to_subtypes: <boolean_value>
}}
```

- Note that I'm going to use Python JSON.loads() function to parse the JSON file, so please make sure the format is correct (don't add ',' before enclosing '}}' or ']' characters.
- Generate the complete JSON file and don't omit anything.
- Use '```json' and '```' to enclose the json file.



   \end{lstlisting}
\end{promptbox}

\subsection{Modeling}
\label{appendix: prompts modeling}
\begin{promptbox}{Modeling with Modeling Thoughts}

    \begin{lstlisting}[basicstyle=\ttfamily]

You are a mathematical formulator working with a team of optimization experts. The objective is to tackle a complex optimization problem.

You are an expert in problem analysis and can apply previous problem-solving approaches to new issues. The user will provide a specific task statement thoughts and a modeling thought. Your goal is to analyze the user's task and generate a specific solution based on the modeling thought. If the instantiated solution involves Python code, only provide the code and let the compiler handle it. If the solution does not involve code, provide a final answer that is easy to extract from the text.
It should be noted that all the Python code should be within one code block; the answer should not include more than one code block! And strictly follow the modeling thought to instantiate the Gurobi code, but you should also adjust the input parameter according to the user input!

User Input:
{user_input}
modeling thought:
{modeling thought}

Please analyze the above user task statement thoughts and modeling thought, and generate a specific, detailed solution. If the solution involves Python code, only provide the code. If not, provide a clear and extractable final answer.

    \end{lstlisting}
\end{promptbox}
\begin{promptbox}{Code Correction}
    \begin{lstlisting}[basicstyle=\ttfamily]
You are an excellent Python programming master who is proficient in analyzing and editing Python code, and you are also good at understanding real-world problems. Your task is:
1. Analyze the given Python code
2. Edit the input code to make sure the edited code is correct and can run and solve the problem correctly.
Your response should follow the format below:
```python
## Edited code here
```
    \end{lstlisting}
\end{promptbox}

\subsection{Tree Update}
\label{appendix: tree update}

\begin{promptbox}{Statement Thoughts Distillation stopping at the root node}
    \begin{lstlisting}[basicstyle=\ttfamily]
You are given a specific combinatorial optimization problem.

Your task is to summarize the industrial scene type of this specific problem and provide a detailed statement thoughts of this type.

Importantly, you must classify the problem into more precise industrial scenarios, such as the Traveling Salesman Problem (TSP), facility location problem, Parallel Machine Scheduling, and so on. Avoid using broad categories such as linear programming, mixed-integer optimization, integer optimization, Integer Linear Programming Problem, and so on.

- **Specific Problem**: {specific_problem}

Please provide the following information in JSON format:

```json
{{
    "industrial_scene_type": "<industrial_scene_type>",
    "statement thoughts_of_type":{{
    "statement thoughts": "<the more precise of statement thoughts>",
    "constraints":
    {{
      "<get the Constraint 1>": "Detailed description of constraint 1",
      "<get the Constraint 2>": "Detailed description of constraint 2"
    }}
    }}
}}
```
Here is an output example:
{{
    'industrial_scene_type': 'Maximum Flow Problem',
    'statement thoughts_of_type': {{
    'statement thoughts': 'The Maximum Flow Problem involves determining the highest possible flow that can be routed through a directed graph from a specified source node to a sink node, while adhering to the capacity limits of the edges. This problem is foundational in network flow theory and has applications in transportation networks, communication systems, supply chain logistics, and resource distribution. The solution must respect edge capacities, flow directionality, and conservation laws at intermediate nodes.'
    'constraints': {
        'Directed Graph': 'Flow can only travel in the direction specified by the edges in the graph.',
        'Capacity Constraints': "The flow on each edge must be non-negative and cannot exceed the edge's maximum capacity.",
        'Flow Conservation': 'For every node except the source and sink, the total incoming flow must equal the total outgoing flow.'
    }},
}}
}}

 \end{lstlisting}
\end{promptbox}

\begin{promptbox}{Statement Thoughts Distillation stopping at other node}
    \begin{lstlisting}[basicstyle=\ttfamily]
You are a mathematical formulator working with a team of optimization experts. The objective is to tackle a complex optimization problem.

You are given a specific problem, its current basic problem type, and the statement thoughts of the basic problem type.

Your task is to determine the more specific subtype of the given basic problem type that the specific problem belongs to, and provide a more detailed statement thoughts of this subtype.

- **Specific Problem**: {specific_problem}
- **Current Basic Problem Type**: {current_basic_problem_type}
- **statement thoughts of Basic Problem Type**: {statement thoughts_of_basic_problem_type}

Please provide the following information in JSON format:

```json
{{
    current_basic_problem_type: {current_basic_problem_type},
    statement thoughts_of_basic_problem_type: {statement thoughts_of_basic_problem_type},
    formulated_subtype: <subtype>,
    statement thoughts_of_subtype:{{
    statement thoughts: <the more precise of statement thoughts>,
    constraints:
    {{
      <get the Constraint 1>: "Detailed description of constraint 1",
      "<get the Constraint 2>": "Detailed description of constraint 2"
    }}
}}
}}
```
 \end{lstlisting}
\end{promptbox}

\begin{promptbox}{Modeling Thoughts Distillation based on Statement Thoughts}
    \begin{lstlisting}[basicstyle=\ttfamily]

You are a mathematical formulator working with a team of optimization experts. The objective is to tackle a complex optimization problem.

Please list the steps to formulate a {problem_type} problem and use the Gurobi code to solve it. You need to record some errors that are easy to make during the formulation process. Please output a JSON format.

You are given a specific combinatorial optimization problem, its solution process, and the problem type, along with its statement thoughts.


- **Problem Type**: {problem_type}
- **statement thoughts of Problem Type**: {statement thoughts}
- **Specific Problem**: {specific_problem}
- **Solution step of Specific Problem**: {solution_step}

Your task is to return a modeling thought for this problem type, which includes the following five parts:

1. **problem_type**: The provided problem type.
2. **statement thoughts**: The provided statement thoughts of the problem type.
3. **reason_flow**: A detailed step-by-step reasoning process for solving a series of problems that belong to a problem type, according to the provided solution process of the specific problem.
4. **example_application**: A detailed example application that matches the specific problem and its solution process.
5. **increment**: a list.


Additionally, in the solution steps, the Gurobi code is included. You must only use the fixed Gurobi code mentioned below in the solution steps. This is the fixed Gurobi code ---- "### Gurobi Code:\n```python\nimport json\nimport numpy as np\nimport math\nimport gurobipy as gp\nfrom gurobipy import GRB\n\n# Create a new model\nmodel = gp.Model('model')\n\n# define parameters\n\n# define variables\n\n# define constraints\n\n# define objective \n\n# Optimize the model\nmodel.optimize()\nstatus = model.status\n\nobj_val = None\n# Check whether the model is infeasible, has infinite solutions, or has an optimal solution\nif status == gp.GRB.INFEASIBLE:\n    obj_val = \"infeasible\"\nelif status == gp.GRB.UNBOUNDED:\n    obj_val = \"unbounded\"\nelif status == gp.GRB.OPTIMAL:\n    obj_val = model.objVal\ntime = model.TimeLimit\nprint(\"Timecost\":,time)\nprint(\"Objective Value:\", obj_val)\n```"
Please provide the following information in JSON format:

Here is a modeling thought example:

{{
"Problem Type": "Travelling Sales Person Problem",
'statement thoughts': {{
    'statement thoughts': "The Transportation Problem involves optimizing the shipment of goods from multiple distribution centers to various destinations to minimize total transportation costs while meeting all destination demands. In this scenario, a company with four distribution centers (A, B, C, D) must supply five destinations (1, 2, 3, 4, 5) such that each destination's demand is fully satisfied. The key objective is to determine the optimal shipment quantities from each center to each destination that result in the lowest possible transportation costs. This problem is a linear programming model where decision variables represent the amount shipped from each center to each destination, subject to supply and demand limitations.",
    'constraints': {{'Supply Constraints': 'The total quantity of goods transported from each distribution center to all destinations must not exceed the available supply at that center.', 'Demand Constraints': "The total quantity of goods received by each destination from all distribution centers must exactly match the destination's specified demand.", 'Non-Negative Transportation': 'The amount of goods transported from any distribution center to a destination must be non-negative (i.e., shipments cannot have negative quantities).'}}
    }}
"Modeling Thoughts": [
    "[Define Decision Variables] Define decision variables for edges \\( x_{{ij}} \\) and possibly auxiliary variables for MTZ \\( u_i \\)",
    "[Define Objective Function] Sum of distances multiplied by \\( x_{{ij}} \\)",
    "[Define Degree Constraints] Each node entered and exited exactly once",
    "[Define Subtour Elimination Constraints] Subtour elimination via MTZ or callbacks",
    "[Comprehensive Verification] Check the common errors in the optimization model",
    "[Write Gurobi Code] Write the Gurobi code the solve the problem.",
    "[Gurobi Code]\n```python\nimport json\nimport numpy as np\nimport math\nimport gurobipy as gp\nfrom gurobipy import GRB\n\n# Create a new model\nmodel = gp.Model('model')\n\n# define parameters\n\n# define variables\n\n# define constraints\n\n# define objective \n\n# Optimize the model\nmodel.optimize()\nstatus = model.status\n\nobj_val = None\n# Check whether the model is infeasible, has infinite solutions, or has an optimal solution\nif status == gp.GRB.INFEASIBLE:\n    obj_val = \"infeasible\"\nelif status == gp.GRB.UNBOUNDED:\n    obj_val = \"unbounded\"\nelif status == gp.GRB.OPTIMAL:\n    obj_val = model.objVal\ntime = model.TimeLimit\nprint(\"Timecost\":,time)\nprint(\"Objective Value:\", obj_val)\n```",
    "[Common Errors to Avoid]\n1. **Incorrect Subtour Elimination**: Ensure MTZ constraints exclude the starting city and are applied to correct indices.\n2. **Indexing Mistakes**: Use consistent 0-based or 1-based indexing for cities.\n3. **Self-Loops**: Explicitly disable \\( x_{{ii}} \\) variables.\n4. **Bounds on MTZ Variables**: Set \\( u_i \\) bounds correctly (\\( 1 \\leq u_i \\leq n-1 \\)).\n5. **Objective Function**: Ensure distances are correctly paired with \\( x_{{ij}} \\) and exclude \\( i = j \\).\n\nBy following these steps and avoiding common pitfalls, you can effectively model and solve TSP using Gurobi."
],

}}

Important:
- Use plain JSON without markdown syntax
- Ensure all quotes are properly escaped
- Include all required keys: problem_type, statement thoughts, reason_flow, example_application,increment


    \end{lstlisting}
\end{promptbox}

\begin{promptbox}{Add New Tree Nodes}
    \begin{lstlisting}[basicstyle=\ttfamily]

 You are a mathematical formulator working with a team of optimization experts. The objective is to tackle a complex optimization problem.

You are given a primary problem type and its statement thoughts. You are also provided with a list of other problem types, each with its statement thoughts.

According statement thoughts, your task is to determine which problem types in the list are subtypes of the given primary problem type. Return a list of problem types that are identified as subtypes of the primary problem type. To be more specific, the subtype contains the constraint form of the primary problem type.
Importantly, the returned subtypes must be provided in the list; if there is any problem type that is a subtype of the given primary problem type in the list, return an empty list.

- **Primary Problem Type**: \{primary_problem_type\}
- **statement thoughts of Primary Problem Type**: \{statement thoughts_type\}
- **List of Problem Types**: \{list_of_problem_types\}

Please provide the following information in JSON format:

```json
{{
    "primary_problem_type": "{primary_problem_type}",
    "matching_subtypes": ["<problem_type>"]

}}

    \end{lstlisting}
\end{promptbox}

\newpage
\section{Examples}
\label{appendix: examples}
To better understand OptiTree, we provide two examples of OptiTree for each of  the five datasets.

\subsection{NL4Opt}
\begin{bluebox}{NL4Opt}
\textbf{Natural Language Description:}\\
There is 1000 mg of gold available that is needed to make long and short cables. Long cables require 10 mg of gold while short cables require 7 mg of gold. Because of their compact size, at least 5 times the number of short cables are needed than the long cables. In addition, there needs to be at least 10 long cables made. If each long cable sold results in a \$12 profit and each short cable sold results in a \$5 profit, how many of each type of cable should be made to maximize profit?
\bigskip
\dashuline{\hspace{\textwidth}}

\textbf{Subproblem Identification Process:}\\
\begin{lstlisting}[backgroundcolor=\color{myblue!15!white},basicstyle=\ttfamily]
{
    "matching_problem_type": "Product Mix Optimization",
    "reasoning": "The problem involves determining the optimal number of long and short cables to produce given constraints on resource availability (gold) and demand (minimum number of cables), with the objective of maximizing profit. This aligns with the 'Product Mix Optimization' problem type, which deals with optimizing the mix of products to produce under resource constraints to maximize profit or minimize cost.",
    "belongs_to_problem_types": true
}
\end{lstlisting}

\begin{lstlisting}[backgroundcolor=\color{myblue!15!white}, basicstyle=\ttfamily]
{
    "matching_subtype": "Sales and Inventory Optimization with Profit Maximization",
    "reasoning": "The problem involves determining the optimal number of long and short cables to produce to maximize profit, considering constraints on the available gold and the ratio of short to long cables. This aligns with the 'Sales and Inventory Optimization with Profit Maximization' subtype, which focuses on maximizing profit under given constraints, including storage (in this case, gold availability) and integer constraints on production quantities.",
    "belongs_to_subtypes": true
}
\end{lstlisting}

\begin{lstlisting}[backgroundcolor=\color{myblue!15!white},basicstyle=\ttfamily]
{
    "matching_subtype": "subtype not find",
    "reasoning": "The input problem involves maximizing profit by determining the number of long and short cables to produce, considering gold availability and production constraints. It does not involve inventory holding costs, selling prices, demand, or warehouse capacity constraints, which are key features of the 'Single-Period Inventory Optimization with Capacity Constraints' subtype. Therefore, it does not match the provided subtype.",
    "belongs_to_subtypes": false
}
\end{lstlisting}
\bigskip
\dashuline{\hspace{\textwidth}}
\bigskip

\textbf{Get the Modeling Thoughts:}\\
\begin{lstlisting}[backgroundcolor=\color{myblue!15!white},basicstyle=\ttfamily]
{
'Problem Type': 'Sales and Inventory Optimization with Profit Maximization',
'Statement Thoughts':
'This subtype of Product Mix Optimization focuses on determining the optimal sales and inventory quantities for different products to maximize total profit while considering inventory holding costs and storage space limitations. The problem involves balancing the trade-off between sales revenue and inventory costs under given constraints.Storage Space Constraint-The total inventory quantity of all products cannot exceed a specified limit (e.g., 1000 units in this case).Integer Constraint-Sales and inventory quantities must be integers due to the indivisibility of the products.',
'Modeling Thoughts':[
    '[Define Decision Variables] Define integer decision variables for sales and inventory quantities for each product.',
    '[Define Objective Function] Maximize total profit by considering sales revenue and subtracting inventory holding costs.',
    '[Define Constraints] Add constraints for storage space limitations and ensure non-negativity of variables.',
    '[Comprehensive Verification] Check for common errors such as incorrect profit calculation or constraint formulation.',
    '[Write Gurobi Code] Write the Gurobi code to solve the problem.',
    '[Gurobi Code]:', '```python', 'import json', 'import numpy as np', 'import math', 'import gurobipy as gp', 'from gurobipy import GRB', '# Create a new model', "model = gp.Model('model')", '# define parameters', '# define variables', 'x = model.addVar(vtype=GRB.INTEGER, name="x")', 'y = model.addVar(vtype=GRB.INTEGER, name="y")', '# define constraints', 'model.addConstr(x + y <= 1000, name="InventorySpaceConstraint")', '# define objective', 'model.setObjective(8*x + 7*y, GRB.MAXIMIZE)', '# Optimize the model', 'model.optimize()', 'status = model.status', 'obj_val = None', '# Check whether the model is infeasible, has infinite solutions, or has an optimal solution', 'if status == gp.GRB.INFEASIBLE:', '    obj_val = "infeasible"', 'elif status == gp.GRB.UNBOUNDED:', '    obj_val = "unbounded"', 'elif status == gp.GRB.OPTIMAL:', '    obj_val = model.objVal', 'time = model.Runtime', 'print("Timecost:", time)', 'print("Objective Value:", obj_val)', '```',
    '[Common Errors to Avoid], '1. **Incorrect Profit Calculation**: Ensure the objective function correctly accounts for both sales revenue and inventory costs.', '2. **Constraint Formulation**: Verify that the storage space constraint is correctly formulated and does not exceed the specified limit.', '3. **Variable Types**: Ensure variables are defined as integers to comply with the indivisibility of products.', '4. **Non-Negativity**: Explicitly enforce non-negativity constraints if not implicitly included.', '5. **Model Status Check**: Always check the model status to handle infeasible or unbounded cases appropriately.'
]

}
\end{lstlisting}
\bigskip
\dashuline{\hspace{\textwidth}}
\bigskip

\textbf{Solver Code:}\\

\begin{lstlisting}[backgroundcolor=\color{myblue!15!white},basicstyle=\ttfamily]
import gurobipy as gp
from gurobipy import GRB

# Create a new model
model = gp.Model('CableProduction')

# Define decision variables
long_cables = model.addVar(vtype=GRB.INTEGER, name="long_cables")
short_cables = model.addVar(vtype=GRB.INTEGER, name="short_cables")

# Define objective function: maximize profit
model.setObjective(12 * long_cables + 5 * short_cables, GRB.MAXIMIZE)

# Define constraints
# Gold usage constraint: 10*long + 7*short <= 1000
model.addConstr(10 * long_cables + 7 * short_cables <= 1000, name="GoldConstraint")

# Ratio constraint: short >= 5*long
model.addConstr(short_cables >= 5 * long_cables, name="RatioConstraint")

# Minimum long cables constraint
model.addConstr(long_cables >= 10, name="MinLongCables")

# Optimize the model
model.optimize()

# Check and display results
if model.status == GRB.OPTIMAL:
    print(f"Optimal number of long cables: {long_cables.x}")
    print(f"Optimal number of short cables: {short_cables.x}")
    print(f"Maximum profit: ${model.objVal}")
elif model.status == GRB.INFEASIBLE:
    print("Model is infeasible")
elif model.status == GRB.UNBOUNDED:
    print("Model is unbounded")
\end{lstlisting}
\end{bluebox}

\begin{bluebox}{NL4Opt}
\textbf{Natural Language Description:}\\
A patient can be hooked up to two machines to have medicine delivered, machine 1 and machine 2. Machine 1 delivers 0.5 units of medicine to the heart per minute and 0.8 units of medicine per minute to the brain. Machine 2 delivers 0.3 units of medicine per minute to the heart and 1 unit of medicine per minute to the brain. In addition however, machine 1 creates 0.3 units of waste per minute while machine 2 creates 0.5 units of waste per minute. If at most 8 units of medicine can be received by the heart and at least 4 units of medicine should be received by the brain, how many minutes should each machine be used to minimize the total amount of waste produced?
\bigskip
\dashuline{\hspace{\textwidth}}

\textbf{Subproblem Identification Process:}\\
\begin{lstlisting}[backgroundcolor=\color{myblue!15!white},basicstyle=\ttfamily]
{
    "matching_problem_type": "Resource Allocation Problem",
    "reasoning": "The problem involves allocating resources (time on machines) to meet certain constraints (medicine delivery to heart and brain) while minimizing waste, which fits the definition of a Resource Allocation Problem.",
    "belongs_to_problem_types": true
}
\end{lstlisting}

\begin{lstlisting}[backgroundcolor=\color{myblue!15!white},basicstyle=\ttfamily]
{
    "matching_subtype": "Cost-Minimizing Resource Allocation with Capacity Constraints",
    "reasoning": "The problem involves allocating resources (machine usage time) to minimize total waste (cost) while meeting specific medicine delivery requirements (capacity constraints) for the heart and brain. This aligns with the subtype's focus on minimizing total cost under capacity constraints.",
    "belongs_to_subtypes": true
}
\end{lstlisting}

\begin{lstlisting}[backgroundcolor=\color{myblue!15!white},basicstyle=\ttfamily]
{
    "matching_subtype": "subtype not find",
    "reasoning": "The given problem involves minimizing waste production while meeting medicine delivery constraints to the heart and brain. It does not fit any of the provided subtypes, which focus on budget allocation, parallel project resource allocation, revenue-maximizing load management, or supplier selection and equipment allocation. The problem is a standard Mixed Integer Linear Programming (MILP) problem with constraints on medicine delivery and waste minimization, but it does not align with the specific subtypes listed.",
    "belongs_to_subtypes": false
}
\end{lstlisting}
\bigskip
\dashuline{\hspace{\textwidth}}
\bigskip

\textbf{Get the Modeling Thoughts:}\\
\begin{lstlisting}[backgroundcolor=\color{myblue!15!white},basicstyle=\ttfamily]
{
'Problem Type': 'Cost-Minimizing Resource Allocation with Capacity Constraints',
'Statement Thoughts':
'This subtype of the Resource Allocation Problem focuses on minimizing the total cost while allocating resources (sensor deployments and data analysis teams) to meet specific requirements (data collection and processing) under capacity and budget constraints. The goal is to optimally assign sensors to plots and allocate data processing tasks to teams such that all data collection requirements are met, and the total cost is minimized.,
Data Balance Constraint:The data collection requirements for each plot must be matched by the data processing capacity of the assigned team, ensuring that all collected data is processed without exceeding team capacities.,
Budget Constraint:The total cost of deploying sensors and utilizing data analysis teams must not exceed the available budget, ensuring cost-effective resource allocation.',
'Modeling Thoughts': [
    '[Define Decision Variables] Define integer variables for the number of times each team is used (e.g., \\( x_A \\), \\( x_B \\)).', '[Define Objective Function] Minimize the total cost, which includes sensor deployment costs and data processing costs.',
    '[Define Data Balance Constraint] Ensure the total data processed equals the sum of data collection requirements.',
    '[Define Budget Constraint] Ensure the total cost does not exceed the available budget.',
    '[Comprehensive Verification] Check for common errors such as incorrect variable bounds, constraint formulation, and objective function alignment.',
    '[Write Gurobi Code] Implement the model in Gurobi to solve the problem.'],
    '[Gurobi Code]:```python\nimport json\nimport numpy as np\nimport math\nimport gurobipy as gp\nfrom gurobipy import GRB\n\n# Create a new model\nmodel = gp.Model(\'model\')\n\n# define parameters\n\n# define variables\n\n# define constraints\n\n# define objective \n\n# Optimize the model\nmodel.optimize()\nstatus = model.status\n\nobj_val = None\n# Check whether the model is infeasible, has infinite solutions, or has an optimal solution\nif status == gp.GRB.INFEASIBLE:\n    obj_val = "infeasible"\nelif status == gp.GRB.UNBOUNDED:\n    obj_val = "unbounded"\nelif status == gp.GRB.OPTIMAL:\n    obj_val = model.objVal\ntime = model.TimeLimit\nprint("Timecost":,time)\nprint("Objective Value:", obj_val)\n```',
    '[Common Errors to Avoid]:\n1. **Incorrect Variable Types**: Ensure variables are defined as integers if fractional allocations are not allowed.\n2. **Constraint Formulation**: Verify that constraints correctly represent the problem requirements (e.g., data balance must be exact).\n3. **Objective Function Alignment**: Ensure the objective function includes all relevant costs (e.g., both sensor deployment and data processing costs).\n4. **Budget Constraint Omission**: Do not forget to include the budget constraint if applicable.\n5. **Indexing Errors**: Ensure all indices and summations are correctly implemented in the code.'
]

}
\end{lstlisting}
\bigskip
\dashuline{\hspace{\textwidth}}
\bigskip

\textbf{Solver Code:}\\

\begin{lstlisting}[backgroundcolor=\color{myblue!15!white},basicstyle=\ttfamily]
import gurobipy as gp
from gurobipy import GRB

# Create a new model
model = gp.Model('MedicineDelivery')

# Define decision variables: minutes each machine is used
x1 = model.addVar(lb=0, vtype=GRB.CONTINUOUS, name="Machine1_minutes")
x2 = model.addVar(lb=0, vtype=GRB.CONTINUOUS, name="Machine2_minutes")

# Set objective: minimize total waste
model.setObjective(0.3 * x1 + 0.5 * x2, GRB.MINIMIZE)

# Add constraints
# Heart constraint: at most 8 units
model.addConstr(0.5 * x1 + 0.3 * x2 <= 8, "Heart_limit")
# Brain constraint: at least 4 units
model.addConstr(0.8 * x1 + 1.0 * x2 >= 4, "Brain_minimum")

# Optimize the model
model.optimize()

# Check solution status
status = model.status
if status == GRB.OPTIMAL:
    print(f"Optimal solution found:")
    print(f"Machine 1 minutes: {x1.x}")
    print(f"Machine 2 minutes: {x2.x}")
    print(f"Total waste produced: {model.objVal}")
elif status == GRB.INFEASIBLE:
    print("Model is infeasible")
elif status == GRB.UNBOUNDED:
    print("Model is unbounded")
\end{lstlisting}
\end{bluebox}

\subsection{MAMO EasyLP}
\begin{bluebox}{MAMO EasyLP}
\textbf{Natural Language Description:}\\
An environmental organization is planning to invest in two projects: Project X which involves tree planting, and Project Y which focuses on waste management. The investment in each project must be a whole number due to the indivisible nature of resources. The total combined investment for both projects cannot exceed 20 units due to budget limitations. Additionally, the overall environmental impact score, calculated as twice the investment in Project X plus the investment in Project Y, should be at least 10 points to meet their objectives.Each unit of investment in Projects X and Y has an associated cost of 4 and 5 units respectively, and the organization aims to minimize this total cost while meeting all constraints.Given these conditions, what is the minimum total cost required for this scenario? Please provide your answer rounded to the nearest whole number.

\bigskip
\dashuline{\hspace{\textwidth}}
\bigskip

\textbf{Subproblem Identification Process:}\\

\begin{lstlisting}[backgroundcolor=\color{myblue!15!white},basicstyle=\ttfamily]
{
    "matching_problem_type": "Capital Budgeting Problem",
    "reasoning": "The problem involves selecting investments in projects (Project X and Project Y) with integer investment amounts, budget constraints, and an objective to minimize total cost while meeting environmental impact requirements. This aligns with the Capital Budgeting Problem, which deals with allocating limited resources among competing projects to maximize or minimize a certain objective under constraints.",
    "belongs_to_problem_types": true
}
\end{lstlisting}

\begin{lstlisting}[backgroundcolor=\color{myblue!15!white},basicstyle=\ttfamily]
{
    "matching_subtype": "subtype not find",
    "reasoning": "The input problem involves selecting investments in two projects with integer constraints, budget limitations, and an objective to minimize total cost while meeting an environmental impact score. The provided subtypes focus on facility location with service balance and capacity constraints, and multi-objective optimization with weighted objectives, neither of which align with the given problem's characteristics.",
    "belongs_to_subtypes": false
}
\end{lstlisting}

\bigskip
\dashuline{\hspace{\textwidth}}
\bigskip

\textbf{Get the Modeling Thoughts:}\\
\begin{lstlisting}[backgroundcolor=\color{myblue!15!white},basicstyle=\ttfamily]
{
'Problem Type': 'Capital Budgeting Problem',
'Statement Thoughts':
'The Capital Budgeting Problem involves selecting a combination of investments or purchases to maximize or minimize an objective (e.g., cost or profit) while adhering to budget constraints, operational requirements, and other limitations. In this case, the airline must choose between two aircraft models to minimize total costs while meeting range, quantity, and operating cost constraints. This problem is common in industries requiring strategic resource allocation, such as transportation, manufacturing, and infrastructure development.,
Range Requirement Constraint: The combined range of all purchased aircraft must meet or exceed 40,000 kilometers to fulfill operational needs.,
Aircraft Quantity Constraint: The total number of aircraft purchased must be at least 8 to ensure flight safety and operational redundancy.,
Operating Cost Constraint: The total hourly operating cost of all aircraft must not exceed $150,000 USD to maintain financial sustainability.'
,
'Modeling Thoughts': [
    '[Define Decision Variables] Define integer decision variables for the number of each type of aircraft to purchase.',
    '[Define Objective Function] Formulate the objective to minimize total cost, including purchase and operating   costs.',
    '[Define Range Constraint] Ensure the combined range of all aircraft meets or exceeds the required range.',
    '[Define Quantity Constraint] Ensure the total number of aircraft purchased meets the minimum requirement.',
    [Define Operating Cost Constraint] Ensure the total hourly operating cost does not exceed the specified limit.',
    '[Comprehensive Verification] Check for common errors such as incorrect cost calculations or constraint formulations.',
    '[Write Gurobi Code] Implement the model in Gurobi to solve the problem.'],

    '[ Gurobi Code], '```python', 'import json', 'import numpy as np', 'import math', 'import gurobipy as gp', 'from gurobipy import GRB', '# Create a new model', "model = gp.Model('model')", '# define parameters', '# define variables', '# define constraints', '# define objective ', '# Optimize the model', 'model.optimize()', 'status = model.status', 'obj_val = None', '# Check whether the model is infeasible, has infinite solutions, or has an optimal solution', 'if status == gp.GRB.INFEASIBLE:', '    obj_val = "infeasible"', 'elif status == gp.GRB.UNBOUNDED:', '    obj_val = "unbounded"', 'elif status == gp.GRB.OPTIMAL:', '    obj_val = model.objVal', 'time = model.TimeLimit', 'print("Timecost":,time)', 'print("Objective Value:", obj_val)', '```',
    '[Common Errors to Avoid]', '1. **Incorrect Cost Calculation**: Ensure the operating cost is correctly included in the objective function.', '2. **Constraint Formulation**: Verify that constraints are correctly formulated (e.g., >= vs <=).', '3. **Integer Variables**: Ensure decision variables are defined as integers.', '4. **Units Consistency**: Ensure all costs and ranges are in consistent units.', '5. **Solver Status Check**: Always check the solver status to confirm an optimal solution was found.']
}

\end{lstlisting}

\bigskip
\dashuline{\hspace{\textwidth}}
\bigskip

\textbf{Solver Code:}\\

\begin{lstlisting}[backgroundcolor=\color{myblue!15!white},basicstyle=\ttfamily]
import gurobipy as gp
from gurobipy import GRB

# Create a new model
model = gp.Model('Environmental_Projects')

# Define decision variables (integer variables for investment units)
x = model.addVar(vtype=GRB.INTEGER, name="Project_X_Investment")
y = model.addVar(vtype=GRB.INTEGER, name="Project_Y_Investment")

# Set objective function: minimize total cost (4*X + 5*Y)
model.setObjective(4*x + 5*y, GRB.MINIMIZE)

# Add constraints
# Budget constraint: X + Y <= 20
model.addConstr(x + y <= 20, "Budget_Constraint")

# Environmental impact constraint: 2X + Y >= 10
model.addConstr(2*x + y >= 10, "Environmental_Impact_Constraint")

# Non-negativity constraints
model.addConstr(x >= 0, "Non_Negativity_X")
model.addConstr(y >= 0, "Non_Negativity_Y")

# Optimize the model
model.optimize()

# Check solution status
if model.status == GRB.OPTIMAL:
    print("Minimum Total Cost:", round(model.objVal))
else:
    print("No optimal solution found")
\end{lstlisting}
\end{bluebox}

\begin{bluebox}{MAMO EasyLP}
\textbf{Natural Language Description:}\\
A tourism company is planning to allocate resources between two major services: $X$ represents local sightseeing tours and $Y$ represents adventure activities. The total number of services that can be offered in a month cannot exceed 200 due to resource constraints. The minimum number of local sightseeing tours ($X$) they need to offer is at least 50, and for adventure activities ($Y$), it cannot be less than 30. Each service has associated operational costs, with each local sightseeing tour costing $\$300$ and each adventure activity costing $\$500$. The company aims to minimize the total cost while meeting all the constraints(X, Y are integers). What is the minimum possible monthly operational cost (in dollars) for the company? Round your answer to the nearest dollar.

\bigskip
\dashuline{\hspace{\textwidth}}
\bigskip

\textbf{Subproblem Identification Process:}\\
\begin{lstlisting}[backgroundcolor=\color{myblue!15!white},basicstyle=\ttfamily]

{
    "matching_problem_type": "Resource Allocation Problem",
    "reasoning": "The problem involves allocating limited resources (total services <= 200) between two services (X and Y) with specific constraints (minimum requirements for X and Y) and an objective to minimize costs. This fits the definition of a Resource Allocation Problem, where the goal is to optimally distribute resources under given constraints.",
    "belongs_to_problem_types": true
}

\end{lstlisting}

\begin{lstlisting}[backgroundcolor=\color{myblue!15!white},basicstyle=\ttfamily]
{
    "matching_subtype": "Cost-Minimizing Resource Allocation with Capacity Constraints",
    "reasoning": "The problem involves allocating resources (local sightseeing tours and adventure activities) to minimize total operational costs while adhering to capacity constraints (total services <= 200, minimum services for X >= 50 and Y >= 30). This matches the subtype 'Cost-Minimizing Resource Allocation with Capacity Constraints', which focuses on minimizing costs under capacity and other constraints.",
    "belongs_to_subtypes": true
}
\end{lstlisting}

\begin{lstlisting}[backgroundcolor=\color{myblue!15!white},basicstyle=\ttfamily]
{
    "matching_subtype": "Integer Linear Programming for Budget Allocation",
    "reasoning": "The problem involves allocating resources (local sightseeing tours and adventure activities) with integer constraints to minimize total operational costs while satisfying various linear constraints (total services, minimum requirements). This matches the description of 'Integer Linear Programming for Budget Allocation' where the goal is to minimize total cost under integer constraints and linear constraints.",
    "belongs_to_subtypes": true
}

\end{lstlisting}

\begin{lstlisting}[backgroundcolor=\color{myblue!15!white},basicstyle=\ttfamily]
{
    "matching_subtype": "Integer Linear Programming for Resource Allocation with Multiple Constraints",
    "reasoning": "The input problem involves allocating resources (local sightseeing tours and adventure activities) to minimize total cost while meeting specific constraints (minimum number of services, total service limit). The decision variables (X and Y) must be integers, similar to the subtype which involves integer allocations to meet constraints and minimize costs.",
    "belongs_to_subtypes": true
}

\end{lstlisting}

\begin{lstlisting}[backgroundcolor=\color{myblue!15!white},basicstyle=\ttfamily]
{
    "matching_subtype": "Integer Linear Programming for Resource Allocation with Proportional and Range Constraints",
    "reasoning": "The input problem involves allocating resources between two services (local sightseeing tours and adventure activities) with integer constraints, aiming to minimize the total cost while satisfying combined resource limits and range constraints (minimum number of services for each type). This matches the description of the subtype 'Integer Linear Programming for Resource Allocation with Proportional and Range Constraints', which also involves allocating resources between two services with integer constraints and similar types of constraints.",
    "belongs_to_subtypes": true
}

\end{lstlisting}

\begin{lstlisting}[backgroundcolor=\color{myblue!15!white},basicstyle=\ttfamily]
{
    "matching_subtype": "Integer Linear Programming for Service Allocation with Demand and Staffing Constraints",
    "reasoning": "The input problem involves allocating resources (services) between two types (local sightseeing tours and adventure activities) with integer constraints. The goal is to minimize total cost while satisfying minimum demand requirements (at least 50 local tours and 30 adventure activities) and a resource limitation (total services cannot exceed 200). This matches the subtype 'Integer Linear Programming for Service Allocation with Demand and Staffing Constraints', which also involves allocating services with integer constraints, minimizing cost, and satisfying demand and resource constraints.",
    "belongs_to_subtypes": true
}

\end{lstlisting}

\begin{lstlisting}[backgroundcolor=\color{myblue!15!white},basicstyle=\ttfamily]
{
    "matching_subtype": "Integer Linear Programming for Staff Allocation with Service Capacity and Cost Constraints",
    "reasoning": "The input problem involves allocating integer resources (services) between two types (local sightseeing tours and adventure activities) with the goal of minimizing total operational costs while meeting minimum service requirements and adhering to a total service limit. This closely matches the subtype description, which involves allocating integer staff members between two service types to minimize costs while meeting demand and staffing constraints. Both problems involve integer decision variables, cost minimization, and similar constraint structures.",
    "belongs_to_subtypes": true
}
\end{lstlisting}

\bigskip
\dashuline{\hspace{\textwidth}}
\bigskip

\textbf{Get the Modeling Thoughts:}\\
\begin{lstlisting}[backgroundcolor=\color{myblue!15!white},basicstyle=\ttfamily]
{
"Problem Type": "Integer Linear Programming for Staff Allocation with Service Capacity and Cost Constraints",
"Statement Thoughts":
"This problem involves allocating integer numbers of staff members between two types of after-sales services (phone support and on-site service) with the goal of minimizing total employment costs while meeting daily service demand requirements and adhering to a total staffing limit. The decision variables represent the number of staff allocated to each service type, and all allocations must be integers.,
Service Demand Constraint: Each phone support staff member can handle 50 phone consultations and 20 remote technical support sessions per day, while each on-site service staff member can provide 10 on-site repairs and 5 equipment replacements per day. The total service capacity must meet or exceed the daily demand for each service type.,
Staffing Constraint: The total number of phone support and on-site service staff members cannot exceed 100."
,
"Modeling Thoughts": [
    "[Define Decision Variables] Define integer decision variables for the number of phone support staff members (x) and on-site service staff members (y).",
    "[Define Objective Function] Minimize the total employment cost: 200x + 300y.",
    "[Define Service Demand Constraints] Ensure the total service capacity meets or exceeds the daily demand for each service type.",
    "[Define Staffing Constraint] Ensure the total number of staff members does not exceed 100.",
    "[Comprehensive Verification] Check for common errors such as incorrect constraint formulation, non-integer solutions, and infeasible demand requirements.",
    "[Write Gurobi Code] Write the Gurobi code to solve the problem."
],
     '[ Gurobi Code], '```python', 'import json', 'import numpy as np', 'import math', 'import gurobipy as gp', 'from gurobipy import GRB', '# Create a new model', "model = gp.Model('model')", '# define parameters', '# define variables', '# define constraints', '# define objective ', '# Optimize the model', 'model.optimize()', 'status = model.status', 'obj_val = None', '# Check whether the model is infeasible, has infinite solutions, or has an optimal solution', 'if status == gp.GRB.INFEASIBLE:', '    obj_val = "infeasible"', 'elif status == gp.GRB.UNBOUNDED:', '    obj_val = "unbounded"', 'elif status == gp.GRB.OPTIMAL:', '    obj_val = model.objVal', 'time = model.TimeLimit', 'print("Timecost":,time)', 'print("Objective Value:", obj_val)', '```',
    "[Common Errors to Avoid:]",
        "1. **Incorrect Demand Constraints**: Ensure the service capacity constraints are correctly formulated .",
        "2. **Non-Integer Solutions**: Ensure variables are defined as integers (vtype=GRB.INTEGER).",
        "3. **Infeasible Demand**: Check if the demand values are too high to be met with the staffing limit.",
        "4. **Objective Function**: Ensure the cost coefficients are correctly applied .",
        "5. **Indexing Mistakes**: Ensure all constraints use the correct variables."
    ]

}

\end{lstlisting}

\bigskip
\dashuline{\hspace{\textwidth}}
\bigskip

\textbf{Solver Code:}\\

\begin{lstlisting}[backgroundcolor=\color{myblue!15!white},basicstyle=\ttfamily]
import gurobipy as gp
from gurobipy import GRB

# Create a new model
model = gp.Model('TourismResourceAllocation')

# Define variables
X = model.addVar(vtype=GRB.INTEGER, name="X")  # Local sightseeing tours
Y = model.addVar(vtype=GRB.INTEGER, name="Y")  # Adventure activities

# Define constraints
model.addConstr(X + Y <= 200, name="TotalServicesLimit")
model.addConstr(X >= 50, name="MinLocalTours")
model.addConstr(Y >= 30, name="MinAdventureActivities")

# Define objective: minimize total cost (300X + 500Y)
model.setObjective(300 * X + 500 * Y, sense=GRB.MINIMIZE)

# Optimize the model
model.optimize()

# Check solution status and output results
if model.status == GRB.OPTIMAL:
    print("Minimum monthly operational cost: $", round(model.objVal))
else:
    print("No optimal solution found")

\end{lstlisting}
\end{bluebox}

\subsection{MAMO ComplexLP}
\begin{bluebox}{MAMO ComplexLP}
\textbf{Natural Language Description:}\\
Natural Languange description:
Imagine you're a college student aiming to balance your diet and budget. You have identified nine different food items from your local grocery store that you can include in your menu: Chicken, Rice, Apples, Steak, Lentils, Fish, Tofu, Cheese, and Bread. Each of these foods provides varying amounts of protein, carbohydrates, and calories, and each comes with its own price.
Here is the detailed nutritional content and cost for each food item:
- Chicken: Gives you 15 grams of protein, 18 grams of carbohydrates, and 300 calories for \$4.
- Rice: Offers 1 gram of protein, 25 grams of carbohydrates, and 267 calories for \$2.
- Apples: Provide 1 gram of protein, 21 grams of carbohydrates, and 266 calories for \$5.
- Steak: Contains 6 grams of protein, 3 grams of carbohydrates, and 119 calories for a higher cost of \$10.
- Lentils: These give 3 grams of protein, 7 grams of carbohydrates, and 166 calories for just\$2.
- Fish: Delivers 17 grams of protein, 13 grams of carbohydrates, and 129 calories for \$8.
- Tofu: Offers a substantial 18 grams of protein, 27 grams of carbohydrates, and 216 calories for \$10.
- Cheese: Gives 12 grams of protein, 17 grams of carbohydrates, and 76 calories for \$9.
- Bread: Provides 2 grams of protein, a massive 30 grams of carbohydrates, and 258 calories for \$4.
Your daily dietary goal is to consume at least 90 grams of protein, 105 grams of carbohydrates, and 1805 calories. Your challenge is to figure out how to meet these nutritional requirements from the food options mentioned above while spending the least amount of money. So, what is the least amount of money you need to spend to meet your daily dietary requirements? Please note that the response should be a single answer, asking for only the optimal value.

\bigskip
\dashuline{\hspace{\textwidth}}
\bigskip

\textbf{Subproblem Identification Process:}\\
\begin{lstlisting}[backgroundcolor=\color{myblue!15!white},basicstyle=\ttfamily]
{
    "matching_problem_type": "Diet Problem",
    "reasoning": "The given problem involves selecting a mix of food items that meet specific nutritional requirements while minimizing cost, which is characteristic of the Diet Problem, a standard form of linear programming introduced by George Stigler. The aim is to decide quantities of food to meet nutritional constraints within constraints, similar to the Diet Problem listed.",
    "belongs_to_problem_types": true
}
\end{lstlisting}

\begin{lstlisting}[backgroundcolor=\color{myblue!15!white},basicstyle=\ttfamily]
{
    "matching_subtype": "Diet Problem with Integer Constraint",
    "reasoning": "The given problem involves determining the quantity of each food item to meet nutritional requirements while minimizing cost. The quantities must be integers, representing whole servings of each food item. This aligns with the 'Diet Problem with Integer Constraint' subtype, which includes nutritional constraints, cost minimization, and integer servings.",
    "belongs_to_subtypes": true
}

\end{lstlisting}

\bigskip
\dashuline{\hspace{\textwidth}}
\bigskip

\textbf{Get the Modeling Thoughts:}\\
\begin{lstlisting}[backgroundcolor=\color{myblue!15!white},basicstyle=\ttfamily]
{
"Problem Type": "Diet Problem with Integer Constraint",
"Statement Thoughts":
 "The Diet Problem with Integer Constraint involves determining the quantity of each food item that should be consumed to meet nutritional requirements while minimizing cost. The twist is that the quantities must be integers, representing the whole number of servings or portions of each food item. The problem is formulated to ensure nutritional constraints (such as calories, protein, vitamins, etc.) are satisfied, and the total cost is minimized.,
Nutritional Constraints: Ensure that the selected food items provide at least the required amount of nutrients, such as calories, proteins, fats, vitamins, and minerals.,
Cost Minimization: The total cost of the selected food items should be minimized while meeting nutritional constraints.,
Integer Servings: The servings of food items must be integer values, representing realistic portions of each food item.",
"Modeling Thoughts": [
    "[Define Decision Variables] Define decision variables that represent the integer quantities of servings for each food item.",
    "[Define Objective Function] Minimize the total cost of the selected food items.",
    "[Define Nutritional Constraints] Ensure that the nutrient intake meets or exceeds the minimum required levels for each nutrient.",
    "[Implement Integer Constraints] Ensure integer constraints for the serving sizes of each food item in the model.",
    "[Comprehensive Verification] Check the common errors in the optimization model.",
    "[Write Gurobi Code] Write the Gurobi code to solve the problem.",

    "[Gurobi Code]:\n```python\nimport json\nimport numpy as np\nimport math\nimport gurobipy as gp\nfrom gurobipy import GRB\n\n# Create a new model\nmodel = gp.Model('model')\n\n# define parameters\n\n# define variables\n\n# define constraints\n\n# define objective \n\n# Optimize the model\nmodel.optimize()\nstatus = model.status\n\nobj_val = None\n# Check whether the model is infeasible, has infinite solutions, or has an optimal solution\nif status == gp.GRB.INFEASIBLE:\n    obj_val = \"infeasible\"\nelif status == gp.GRB.UNBOUNDED:\n    obj_val = \"unbounded\"\nelif status == gp.GRB.OPTIMAL:\n    obj_val = model.objVal\ntime = model.TimeLimit\nprint(\"Timecost\":,time)\nprint(\"Objective Value:\", obj_val)\n```"
    ,
    "[error_tips]":
        "Ensure decision variables are defined as integers.",
        "Confirm that all nutritional requirements are modeled correctly and constraints are set accurately.",
        "Review that cost coefficients are correctly assigned in the objective function.",
        "gurobi_example": "Example not provided since specific problem data is omitted; follow reasoning flow for formulation specifics."
]
}

\end{lstlisting}
\bigskip
\dashuline{\hspace{\textwidth}}
\bigskip

\textbf{Solver Code:}\\

\begin{lstlisting}[backgroundcolor=\color{myblue!15!white},basicstyle=\ttfamily]
import gurobipy as gp
from gurobipy import GRB

# Create a new model
model = gp.Model('DietProblem')

# Define the food items and their nutritional content and cost
foods = ['Chicken', 'Rice', 'Apples', 'Steak', 'Lentils', 'Fish', 'Tofu', 'Cheese', 'Bread']
protein = [15, 1, 1, 6, 3, 17, 18, 12, 2]
carbs = [18, 25, 21, 3, 7, 13, 27, 17, 30]
calories = [300, 267, 266, 119, 166, 129, 216, 76, 258]
cost = [4, 2, 5, 10, 2, 8, 10, 9, 4]

# Define the nutritional requirements
required_protein = 90
required_carbs = 105
required_calories = 1805

# Define decision variables for the servings of each food item
x = model.addVars(foods, vtype=GRB.INTEGER, name="x")

# Define the objective function: Minimize the total cost
model.setObjective(gp.quicksum(cost[i] * x[foods[i]] for i in range(len(foods))), GRB.MINIMIZE)

# Define nutritional constraints
model.addConstr(gp.quicksum(protein[i] * x[foods[i]] for i in range(len(foods))) >= required_protein, "ProteinConstraint")
model.addConstr(gp.quicksum(carbs[i] * x[foods[i]] for i in range(len(foods))) >= required_carbs, "CarbsConstraint")
model.addConstr(gp.quicksum(calories[i] * x[foods[i]] for i in range(len(foods))) >= required_calories, "CaloriesConstraint")

# Optimize the model
model.optimize()

# Extract the optimal value
if model.status == GRB.OPTIMAL:
    print("Objective Value:", model.objVal)
else:
    print("No optimal solution found.")

\end{lstlisting}
\end{bluebox}

\begin{bluebox}{MAMO ComplexLP}
\textbf{Natural Language Description:}\\
Consider a courier company that needs to deliver packages to five distinct cities, denoted as E, F, G, H, and I. The courier can start from any city, but they must visit each city only once and then return to the starting point. The aim is to find a route that would minimize the total delivery cost. The cost might include factors like distance, fuel expenses, or traffic conditions. Here's an outline of the delivery cost between these cities:
The cost to deliver from City E to F is 50 units, to G is 48 units, to H is 99 units, and to I is 91 units.
From City F, it costs 50 units to deliver to E, 57 units to deliver to G, 84 units to H, and 72 units to I.
For City G, the delivery costs are 48 units to E, 57 units to F, 46 units to H, and 86 units to I.
If the package starts from City H, it costs 99 units to deliver to E, 84 units to F, 46 units to G, and 29 units to I.
Lastly, from City I, it costs 91 units to deliver to E, 72 units to F, 86 units to G, and 29 units to H.
What is the least total delivery cost for the courier to visit each city exactly once and then return to the starting point?
\bigskip
\dashuline{\hspace{\textwidth}}
\bigskip

\textbf{Subproblem Identification Process:}\\
\begin{lstlisting}[backgroundcolor=\color{myblue!15!white},basicstyle=\ttfamily]
{
    "matching_problem_type": "TSP Problem",
    "reasoning": "The specific problem described involves finding the least cost route for a courier to visit each city once before returning to the starting point, which aligns with the Travelling Salesman Problem (TSP). The objective of minimizing the total delivery cost fits the criteria for a TSP, one of the defined problem types.",
    "belongs_to_problem_types": true
}
\end{lstlisting}

\bigskip
\dashuline{\hspace{\textwidth}}
\bigskip

\textbf{Get the Modeling Thoughts:}\\
\begin{lstlisting}[backgroundcolor=\color{myblue!15!white},basicstyle=\ttfamily]
{
"Problem Type": "TSP Problem",
"Statement Thoughts":
"The Traveling Salesman Problem (TSP) involves finding the shortest possible route that visits each city exactly once and returns to the original city. The problem is represented as a matrix of distances between cities. It is an NP-hard problem in combinatorial optimization, important for logistics, planning, and routing.,
Tour Constraints: Each city must be visited exactly once, forming a single continuous tour.",
Subtour Elimination: Constraints are necessary to prevent the formation of disconnected subtours (mini-tours not connected to the main tour).",
"Modeling Thoughts": [
    "[Define Decision Variables] Define binary decision variables for each edge \\( x_{ij} \\) indicating if the edge is used in the tour.",
    "[Define Objective Function] Minimize the sum of distances for chosen edges.",
    "[Define Degree Constraints] Ensure each city is entered and exited exactly once.",
    "[Define Subtour Elimination Constraints] Implement constraints, such as MTZ constraints, to enforce single tour formation.",
    "[Comprehensive Verification] Check the model for common errors such as incorrect indexing or missing constraints.",
    "[Write Gurobi Code] Implement the model using Gurobi to solve the TSP."

     '[ Gurobi Code], '```python', 'import json', 'import numpy as np', 'import math', 'import gurobipy as gp', 'from gurobipy import GRB', '# Create a new model', "model = gp.Model('model')", '# define parameters', '# define variables', '# define constraints', '# define objective ', '# Optimize the model', 'model.optimize()', 'status = model.status', 'obj_val = None', '# Check whether the model is infeasible, has infinite solutions, or has an optimal solution', 'if status == gp.GRB.INFEASIBLE:', '    obj_val = "infeasible"', 'elif status == gp.GRB.UNBOUNDED:', '    obj_val = "unbounded"', 'elif status == gp.GRB.OPTIMAL:', '    obj_val = model.objVal', 'time = model.TimeLimit', 'print("Timecost":,time)', 'print("Objective Value:", obj_val)', '```',
    "[Common Errors to Avoid]: \n1. Incorrect Subtour Elimination: Ensure MTZ constraints exclude the starting city and are applied to correct indices. \n2. Indexing Mistakes: Use consistent 0-based or 1-based indexing for cities. \n3. Self-Loops: Explicitly disable \\( x_{ii} \\) variables. \n4. Bounds on MTZ Variables: Set \\( u_i \\) bounds correctly (\\( 1 \\leq u_i \\leq n-1 \\)). \n5. Objective Function: Ensure distances are correctly paired with \\( x_{ij} \\) and exclude \\( i = j \\)."
    ]
}
\end{lstlisting}
\bigskip
\dashuline{\hspace{\textwidth}}
\bigskip

\textbf{Solver Code:}\\

\begin{lstlisting}[backgroundcolor=\color{myblue!15!white},basicstyle=\ttfamily]
import gurobipy as gp
from gurobipy import GRB

# Number of cities
n = 5

# Distance matrix
dist = [
    [0, 50, 48, 99, 91],  # From City E
    [50, 0, 57, 84, 72],  # From City F
    [48, 57, 0, 46, 86],  # From City G
    [99, 84, 46, 0, 29],  # From City H
    [91, 72, 86, 29, 0]   # From City I
]

# Create model
m = gp.Model('TSP')

# Variables: x[i,j] = 1 if edge i->j is used
x = m.addVars(n, n, vtype=GRB.BINARY, name='x')
for i in range(n):
    x[i,i].UB = 0  # Disable self-loops

# Assignment constraints
for i in range(n):
    m.addConstr(gp.quicksum(x[i,j] for j in range(n) if j != i) == 1, name=f"out_{i}")
    m.addConstr(gp.quicksum(x[j,i] for j in range(n) if j != i) == 1, name=f"in_{i}")

# MTZ variables and constraints for subtour elimination
u = m.addVars(range(1, n), lb=1, ub=n-1, vtype=GRB.CONTINUOUS, name='u')

for i in range(1, n):
    for j in range(1, n):
        if i != j:
            m.addConstr(u[i] - u[j] + n * x[i,j] <= n - 1, name=f"mtz_{i}_{j}")

# Objective function: Minimize the total delivery cost
obj = gp.quicksum(x[i,j] * dist[i][j] for i in range(n) for j in range(n) if i != j)
m.setObjective(obj, GRB.MINIMIZE)

# Solve the model
m.optimize()

# Extract and print the optimal tour and its cost
if m.status == GRB.OPTIMAL:
    print('Optimal tour:')
    current = 0
    tour = [current]
    visited = set([current])
    while len(visited) < n:
        for j in range(n):
            if j != current and x[current,j].X > 0.5:
                tour.append(j)
                current = j
                visited.add(current)
                break
    tour.append(0)  # Return to start
    print(' -> '.join(map(str, tour)))
    print(f'Total delivery cost: {m.ObjVal}')
else:
    print('No solution found.')

\end{lstlisting}
\end{bluebox}
\subsection{IndustryOR}
\begin{bluebox}{IndustryOR}
\textbf{Natural Language Description:}\\
The number of salespeople required at a 24-hour convenience store in different time periods is as follows: 2:00-6:00 - 10 people, 6:00-10:00 - 15 people, 10:00-14:00 - 25 people, 14:00-18:00 - 20 people, 18:00-22:00 - 18 people, 22:00-2:00 - 12 people. Salespeople start their shifts at 2:00, 6:00, 10:00, 14:00, 18:00, and 22:00, working continuously for 8 hours. Determine the minimum number of salespeople needed to meet the requirements.

\bigskip
\dashuline{\hspace{\textwidth}}
\bigskip

\textbf{Subproblem Identification Process:}\\
\begin{lstlisting}[backgroundcolor=\color{myblue!15!white},basicstyle=\ttfamily]
{
    "matching_problem_type": "Staff Rostering Problem",
    "reasoning": "The problem involves determining the minimum number of salespeople needed to meet varying demand across different time periods, with specific shift start times and durations. This aligns with the 'Staff Rostering Problem', which deals with scheduling staff to meet demand while minimizing costs or the number of staff.",
    "belongs_to_problem_types": true
}
\end{lstlisting}

\bigskip
\dashuline{\hspace{\textwidth}}
\bigskip

\textbf{Get the Modeling Thoughts:}\\
\begin{lstlisting}[backgroundcolor=\color{myblue!15!white},basicstyle=\ttfamily]
{
"Problem Type": "Staff Rostering Problem",
"Statement Thoughts": {
 "Assign shifts to employees over a scheduling period while satisfying labor regulations, employee availability, and operational coverage requirements. This problem is common in industries such as healthcare, retail, and hospitality, where shift-based workforces are essential. Key features include handling multiple shift types (e.g., morning, evening, night), accommodating employee preferences and time-off requests, ensuring adequate staffing levels, and complying with legal and contractual obligations. The goal is to create a feasible and equitable schedule that balances organizational needs with employee well-being.,
Labor Regulations Compliance: Shifts must comply with labor laws, including maximum consecutive working days, mandatory rest periods between shifts, and weekly working hour limits.,
Employee Availability:"Employees cannot be assigned to shifts during their declared unavailability (e.g., time-off, preferred days off) or overlapping with existing assignments.",
Operational Coverage Requirements: Each shift must meet a minimum staffing level, which may vary by shift type, day, or role (e.g., peak hours require more employees).,
Shift Sequence Restrictions: Employees must have sufficient rest between consecutive shifts (e.g., a minimum gap of 12 hours after a night shift).,
Contractual Work Hours: Assignments must respect employment contracts, including part-time/full-time distinctions and maximum allowable hours per pay period.",
Fair Shift Distribution: Shifts must be distributed equitably among employees to avoid overburdening individuals, considering preferences, seniority, and historical assignments.",
Skill and Role Matching: Employees must possess required certifications or qualifications for specialized shifts (e.g., a licensed pharmacist for medication dispensing shifts).",
"Modeling Thoughts": [
    "[Define Decision Variables] Create binary variables for employee-shift-day assignments",
    "[Define Objective Function] Minimize understaffing/overstaffing penalties or maximize employee preference satisfaction",
    "[Define Coverage Constraints] Ensure minimum required staff per shift",
    "[Define Availability Constraints] Respect employee unavailability days",
    "[Define Legal Constraints] Enforce maximum consecutive shifts and minimum rest periods",
    "[Define Pattern Constraints] Prevent prohibited shift sequences",
    "[Comprehensive Verification] Validate constraint logic and variable indexing",
    "[Write Gurobi Code] Implement model using Gurobi API"
    "[ Gurobi Code], '```python', 'import json', 'import numpy as np', 'import math', 'import gurobipy as gp', 'from gurobipy import GRB', '# Create a new model', "model = gp.Model('model')", '# define parameters', '# define variables', '# define constraints', '# define objective ', '# Optimize the model', 'model.optimize()', 'status = model.status', 'obj_val = None', '# Check whether the model is infeasible, has infinite solutions, or has an optimal solution', 'if status == gp.GRB.INFEASIBLE:', '    obj_val = "infeasible"', 'elif status == gp.GRB.UNBOUNDED:', '    obj_val = "unbounded"', 'elif status == gp.GRB.OPTIMAL:', '    obj_val = model.objVal', 'time = model.TimeLimit', 'print("Timecost":,time)', 'print("Objective Value:", obj_val)', '```'",
    "[Common Errors to Avoid:]",
        "1. **Incorrect Indexing**: Mixing up employee/day/shift dimensions in variables",
        "2. **Missed Constraints**: Forgetting to enforce minimum rest periods between shifts",
        "3. **Overlooking Part-Time Rules**: Not accounting for maximum weekly hours",
        "4. **Inefficient Modeling**: Creating excessive variables for simple patterns",
        "5. **Data Mismatches**: Alignment between availability data and variable indices",
        "6. **Objective Imbalance**: Not properly scaling understaffing/overstaffing penalties"
    ]
}
\end{lstlisting}
\bigskip
\dashuline{\hspace{\textwidth}}
\bigskip

\textbf{Solver Code:}\\

\begin{lstlisting}[backgroundcolor=\color{myblue!15!white},basicstyle=\ttfamily]
import gurobipy as gp
from gurobipy import GRB

# Create a new model
model = gp.Model('ShiftScheduling')

# Time periods and their requirements
time_periods = ['2:00-6:00', '6:00-10:00', '10:00-14:00', '14:00-18:00', '18:00-22:00', '22:00-2:00']
requirements = [10, 15, 25, 20, 18, 12]
shift_starts = [0, 1, 2, 3, 4, 5]  # 0:2:00, 1:6:00, etc.

# Decision variables: number of people starting at each shift
x = model.addVars(6, vtype=GRB.INTEGER, name="x")

# Objective: minimize total number of employees
model.setObjective(gp.quicksum(x[i] for i in range(6)), GRB.MINIMIZE)

# Coverage constraints
# Each time period must be covered by people who started in the current or previous 3 shifts
# (since each shift covers 4 time periods)
model.addConstr(x[0] + x[5] >= requirements[0], "Coverage_0")
model.addConstr(x[0] + x[1] >= requirements[1], "Coverage_1")
model.addConstr(x[1] + x[2] >= requirements[2], "Coverage_2")
model.addConstr(x[2] + x[3] >= requirements[3], "Coverage_3")
model.addConstr(x[3] + x[4] >= requirements[4], "Coverage_4")
model.addConstr(x[4] + x[5] >= requirements[5], "Coverage_5")

# Optimize the model
model.optimize()

# Output the solution
if model.status == GRB.OPTIMAL:
    print(f"Minimum number of salespeople needed: {int(model.objVal)}")
    print("Shift assignments:")
    for i in range(6):
        print(f"Start at {time_periods[i].split('-')[0]}: {int(x[i].x)} people")
else:
    print("No optimal solution found")
\end{lstlisting}
\end{bluebox}

\begin{bluebox}{IndustryOR}
\textbf{Natural Language Description:}\\
A wealthy noble passed away, leaving the following inheritance:

- A painting by Caillebotte: \$25000\\
- A bust of Diocletian: \$5000\\
- A Yuan dynasty Chinese vase: \$20000\\
- A 911 Porsche: \$40000\\
- Three diamonds: each \$12000\\
- A Louis XV sofa: \$3000\\
- Two very precious Jack Russell racing dogs: each \$3000 (will stipulates they must not be separated)\\
- A sculpture from 200 AD: \$10000\\
- A sailing boat: \$15000\\
- A Harley Davidson motorcycle: \$10000\\
- A piece of furniture once belonging to Cavour: \$13000\\
which must be shared between two sons. How to formulate a mathematical program and solve it using COPTPY to minimize the difference in value between the two parts?
\bigskip
\dashuline{\hspace{\textwidth}}
\bigskip

\textbf{Subproblem Identification Process:}\\
\begin{lstlisting}[backgroundcolor=\color{myblue!15!white},basicstyle=\ttfamily]
{
    "matching_problem_type": "Knapsack Problem",
    "reasoning": "The problem involves dividing a set of items (inheritance) into two parts (sons) with the objective of minimizing the difference in value between the two parts. This is a classic example of the Knapsack Problem, specifically a partition problem which is a variant of the Knapsack Problem where the goal is to divide items into two subsets with equal or as equal as possible sums.",
    "belongs_to_problem_types": true
}
\end{lstlisting}

\begin{lstlisting}[backgroundcolor=\color{myblue!15!white},basicstyle=\ttfamily]
{
    "matching_subtype": "subtype not find",
    "reasoning": "The input problem is about dividing inheritance items between two sons to minimize the difference in value, which is a form of the Partition Problem, a special case of the Knapsack Problem. The provided subtype 'Integer Linear Programming (ILP) with Knapsack Constraints' is about selecting optimal numbers of AI representatives to meet demand constraints, which is not relevant to the inheritance division problem.",
    "belongs_to_subtypes": false
}
\end{lstlisting}

\bigskip
\dashuline{\hspace{\textwidth}}
\bigskip

\textbf{Get the Modeling Thoughts:}\\
\begin{lstlisting}[backgroundcolor=\color{myblue!15!white},basicstyle=\ttfamily]
{
"Problem Type": "Knapsack Problem",
"Statement Thoughts":
"The city's planning department aims to maximize the total capacity increase of 4 major traffic intersections (A, B, C, D) by selecting expansion projects under a given budget. Each intersection has specific expansion options with associated costs and capacity increases. The problem is modeled as a 0-1 Knapsack Problem, where each expansion is treated as an item with a weight (cost) and value (capacity increase), and the goal is to select a subset of items to maximize the total value without exceeding the budget.,
Budget Limitation: The total cost of all selected expansions must not exceed the allocated budget.",
Binary Selection: Each expansion project for an intersection is either fully selected or not selected; partial or fractional expansions are not allowed,
Non-negative Costs and Capacities: The cost and capacity increase associated with each expansion must be non-negative values.,
Mutual Exclusivity per Intersection: At most one expansion project can be selected per intersection (if multiple options exist for a single intersection).",
"Modeling Thoughts": [
    "[Define Decision Variables] Integer variables for the number of expansions for each intersection.",
    "[Define Objective Function] Maximize the total capacity increase: 200x_A + 300x_B + 400x_C + 500x_D.",
    "[Define Constraints] Budget constraint: 100x_A + 150x_B + 200x_C + 250x_D \u2264 500. Total capacity requirement: 200x_A + 300x_B + 400x_C + 500x_D \u2265 1000.",
    "[Comprehensive Verification] Check variable bounds, constraint coefficients, and objective direction.",
    "[Write Gurobi Code] Implement model using Gurobi's Python API."
    "### Gurobi Code:\n```python\nimport json\nimport numpy as np\nimport math\nimport gurobipy as gp\nfrom gurobipy import GRB\n\n# Create a new model\nmodel = gp.Model('model')\n\n# define parameters\n\n# define variables\n\n# define constraints\n\n# define objective \n\n# Optimize the model\nmodel.optimize()\nstatus = model.status\n\nobj_val = None\n# Check whether the model is infeasible, has infinite solutions, or has an optimal solution\nif status == gp.GRB.INFEASIBLE:\n    obj_val = \"infeasible\"\nelif status == gp.GRB.UNBOUNDED:\n    obj_val = \"unbounded\"\nelif status == gp.GRB.OPTIMAL:\n    obj_val = model.objVal\ntime = model.TimeLimit\nprint(\"Timecost\":,time)\nprint(\"Objective Value:\", obj_val)\n```",
    "### Common Errors to Avoid:\n1. **Incorrect Variable Type**: Using CONTINUOUS instead of INTEGER for the number of expansions.\n2. **Missing Constraints**: Forgetting to add the budget constraint or the capacity requirement.\n3. **Objective Direction**: Minimizing instead of maximizing the total capacity increase.\n4. **Constraint Formulation**: Incorrectly formulating the budget or capacity constraints.\n5. **Non-negativity**: Forgetting to set lower bounds on variables."
    ]

}
\end{lstlisting}
\bigskip
\dashuline{\hspace{\textwidth}}
\bigskip

\textbf{Solver Code:}\\

\begin{lstlisting}[backgroundcolor=\color{myblue!15!white},basicstyle=\ttfamily]
import gurobipy as gp
from gurobipy import GRB

# Create a new model
model = gp.Model('inheritance_division')

# Items and their values (must keep dogs together)
items = [
    ("Painting", 25000),
    ("Bust", 5000),
    ("Vase", 20000),
    ("Porsche", 40000),
    ("Diamond1", 12000),
    ("Diamond2", 12000),
    ("Diamond3", 12000),
    ("Sofa", 3000),
    ("Dogs", 6000),  # Combined value since they must stay together
    ("Sculpture", 10000),
    ("Boat", 15000),
    ("Motorcycle", 10000),
    ("Furniture", 13000)
]

# Binary variables for each item (1 if assigned to son1, 0 if to son2)
x = model.addVars(len(items), vtype=GRB.BINARY, name="assign")

# Difference variable to minimize
diff = model.addVar(lb=0, vtype=GRB.CONTINUOUS, name="difference")

# Objective: minimize the difference
model.setObjective(diff, GRB.MINIMIZE)

# Calculate total values for each son
son1_value = gp.quicksum(x[i] * items[i][1] for i in range(len(items)))
son2_value = gp.quicksum((1 - x[i]) * items[i][1] for i in range(len(items)))

# Constraint: difference is >= absolute difference between son values
model.addConstr(diff >= son1_value - son2_value)
model.addConstr(diff >= son2_value - son1_value)

# Optimize the model
model.optimize()

# Check solution status
if model.status == GRB.OPTIMAL:
    print("Optimal solution found")
    print(f"Minimum difference: ${model.objVal:.2f}")

    print("\nSon 1 receives:")
    son1_total = 0
    for i in range(len(items)):
        if x[i].X > 0.5:
            print(f"- {items[i][0]}: ${items[i][1]}")
            son1_total += items[i][1]

    print("\nSon 2 receives:")
    son2_total = 0
    for i in range(len(items)):
        if x[i].X < 0.5:
            print(f"- {items[i][0]}: ${items[i][1]}")
            son2_total += items[i][1]

    print(f"\nSon 1 total: ${son1_total}")
    print(f"Son 2 total: ${son2_total}")
elif model.status == GRB.INFEASIBLE:
    print("Model is infeasible")
elif model.status == GRB.UNBOUNDED:
    print("Model is unbounded")
\end{lstlisting}
\end{bluebox}

\subsection{ComplexOR}
\begin{bluebox}{ComplexOR}
\textbf{Natural Language Description:}\\
This problem involves optimizing the number of raw rolls cut from a stock roll to meet specific width orders using various cutting patterns. This is a cutting stock problem. Given a roll of width RollWidth and a set of NumWidths widths Widths to be cut. Each width $i$ has a certain number of orders Orders\_i. There are NumPatterns patterns and each pattern $j$ has a certain number of rolls of each width $i$ NumRollsWidthPattern\{$i,j$\}. The problem aims to minimize the total number of raw rolls cut. It is constrained that for each width $i$, the total number of rolls cut meets the total orders. The decision to be made concerns how to decide the number of rolls cut using each pattern $j$. The number of widths to be cut is 5. The total number of patterns is 2. The set of widths to be cut from the roll is [2, 3, 5]. The width of the raw roll is 10. The number of orders for each width is [4, 2, 2]. The number of rolls of each width $i$ in each pattern $j$ is [[1, 2, 0], [0, 0, 1]]. Number of rolls cut for each width is non-negative. Total number of rolls cut for each width $i$ must meet or exceed Orders\_i. Number of rolls cut using each pattern $j$ must be non-negative. Minimize the total number of raw rolls cut to meet the orders.
\bigskip
\dashuline{\hspace{\textwidth}}
\bigskip

\textbf{Subproblem Identification Process:}\\
\begin{lstlisting}[backgroundcolor=\color{myblue!15!white},basicstyle=\ttfamily]
{
    "matching_problem_type": "Cutting Stock Problem",
    "reasoning": "The specific problem described involves optimizing the cutting of raw rolls to meet specific width orders using various cutting patterns. This is a classic example of a Cutting Stock Problem, which is explicitly listed in the provided problem types. The objective and constraints of minimizing raw material usage while satisfying order requirements align directly with the characteristics of the Cutting Stock Problem.",
    "belongs_to_problem_types": true
}
\end{lstlisting}

\bigskip
\dashuline{\hspace{\textwidth}}
\bigskip

\textbf{Get the Modeling Thoughts:}\\
\begin{lstlisting}[backgroundcolor=\color{myblue!15!white},basicstyle=\ttfamily]
{
"Problem Type": "Cutting Stock Problem",
"Statement Thoughts":
 "The Cutting Stock Problem is a combinatorial optimization problem commonly encountered in manufacturing and logistics. It involves determining the optimal way to cut raw material rolls of fixed length into smaller pieces of specified lengths and quantities required by customers, with the objective of minimizing the total number of rolls used and thereby reducing waste. The challenge lies in efficiently allocating the cuts to satisfy all customer demands while adhering to the physical limitations of the raw material rolls,
Roll Capacity: Each raw material roll has a fixed length, and the total length of all pieces cut from a single roll must not exceed this length.,
Demand Satisfaction: The total number of each type of piece produced across all rolls must exactly match the customer's specified demand for that type.,
Integrality Constraint: The number of pieces of each type cut from a roll and the number of rolls used must be non-negative integers, as partial pieces or partial rolls are not feasible.",
"Modeling Thoughts": [
    "[Define Decision Variables] Define integer variables for the number of times each cutting pattern is used.",
    "[Define Objective Function] Minimize the total number of raw material rolls used.",
    "[Define Demand Constraints] Ensure each customer's demand is met by the sum of pieces from all patterns.",
    "[Define Pattern Validity] Ensure each cutting pattern does not exceed the raw material length.",
    "[Comprehensive Verification] Check for valid patterns, correct demand calculations, and integer constraints.",
    "[Write Gurobi Code] Implement the model using Gurobi with column generation or predefined patterns."
    "### Gurobi Code:\n```python\nimport json\nimport numpy as np\nimport math\nimport gurobipy as gp\nfrom gurobipy import GRB\n\n# Create a new model\nmodel = gp.Model('model')\n\n# define parameters\n\n# define variables\n\n# define constraints\n\n# define objective \n\n# Optimize the model\nmodel.optimize()\nstatus = model.status\n\nobj_val = None\n# Check whether the model is infeasible, has infinite solutions, or has an optimal solution\nif status == gp.GRB.INFEASIBLE:\n    obj_val = \"infeasible\"\nelif status == gp.GRB.UNBOUNDED:\n    obj_val = \"unbounded\"\nelif status == gp.GRB.OPTIMAL:\n    obj_val = model.objVal\ntime = model.TimeLimit\nprint(\"Timecost\":,time)\nprint(\"Objective Value:\", obj_val)\n```",
    "### Common Errors to Avoid:\n1. **Invalid Patterns**: Ensure each pattern's total length does not exceed the raw material length.\n2. **Non-Integer Variables**: Use integer variables for the number of rolls.\n3. **Incorrect Demand Calculation**: Verify that each pattern's contribution to demand is correctly multiplied by the number of rolls.\n4. **Insufficient Patterns**: Predefined patterns may not cover optimal solutions; consider column generation for better results.\n5. **Indexing Mistakes**: Align pattern indices correctly with sizes in constraints."
    ]

}
\end{lstlisting}
\bigskip
\dashuline{\hspace{\textwidth}}
\bigskip

\textbf{Solver Code:}\\

\begin{lstlisting}[backgroundcolor=\color{myblue!15!white},basicstyle=\ttfamily]
import gurobipy as gp
from gurobipy import GRB

# Define data
roll_width = 10
widths = [2, 3, 5]
orders = [4, 2, 2]
num_patterns = 2
patterns = [[1, 2, 0], [0, 0, 1]]
num_widths = len(widths)

# Create a new model
model = gp.Model('CuttingStock')

# Define variables
# Integer variable y_j representing the number of times pattern j is used
y = model.addVars(num_patterns, vtype=GRB.INTEGER, name="y")

# Set objective
# Minimize the total number of raw rolls used
model.setObjective(gp.quicksum(y[j] for j in range(num_patterns)), GRB.MINIMIZE)

# Define constraints
# Demand satisfaction constraint
for i in range(num_widths):
    model.addConstr(
        gp.quicksum(patterns[j][i] * y[j] for j in range(num_patterns)) >= orders[i],
        name=f"demand_{i}"
    )

# Pattern validity is inherent in the input and problem formulation

# Optimize the model
model.optimize()

# Check the solution status and print the result
if model.status == GRB.OPTIMAL:
    print("Optimal Total Number Of Raw Rolls Cut:", model.objVal)
    for j in range(num_patterns):
        print(f"Number of times pattern {j} is used: {y[j].x}")
elif model.status == GRB.INFEASIBLE:
    print("Model is infeasible")
elif model.status == GRB.UNBOUNDED:
    print("Model is unbounded")
else:
    print("Optimization was stopped with status", model.status)
\end{lstlisting}
\end{bluebox}

\begin{bluebox}{ComplexOR}
\textbf{Natural Language Description:}\\
Capacitated facility location problems focus on determining the optimal placement of a certain number of facilities to serve a set number of customers in a way that minimizes the total cost, considering fixed costs, capacities, customer demands, and transport costs. Capacitated facility location problems deal with locating NumberOfFacilities facilities to serve NumberOfCustomers customers, at minimum total cost. Considering potential facility locations and customer zones as fixed points in a network, each facility has a fixed FacilityFixedCost and a FacilityCapacity. Furthermore, there exists a CustomerDemand for each customer zone, and a FacilityToCustomerTransportCost representing the cost of transport between facilities and customer zones. The number of potential facilities that can be established is 10. The number of customer zones to be served is 20. The fixed cost associated with establishing a facility is [8517, 5068, 9433, 6127, 6033, 5966, 7762, 9406, 6602, 7040]. The cost of transporting goods from each facility to each customer zone is [[80, 94, 44, 51, 190, 44, 129, 178, 129, 91, 172, 119, 177, 150, 90, 51, 53, 97, 184, 87], [139, 33, 104, 135, 50, 176, 97, 121, 47, 29, 186, 163, 149, 108, 156, 169, 100, 160, 153, 85], [153, 36, 18, 170, 18, 181, 178, 68, 171, 106, 159, 110, 21, 106, 91, 29, 144, 140, 155, 116], [103, 59, 78, 125, 14, 11, 152, 95, 76, 173, 36, 148, 75, 132, 59, 153, 113, 74, 185, 71], [193, 186, 130, 145, 114, 150, 33, 154, 20, 75, 103, 30, 137, 131, 167, 32, 53, 150, 176, 166], [159, 130, 156, 65, 36, 59, 199, 124, 104, 72, 180, 73, 43, 152, 143, 90, 161, 65, 172, 141], [173, 121, 110, 127, 22, 159, 195, 137, 47, 10, 87, 11, 154, 66, 126, 60, 152, 54, 20, 25], [181, 34, 186, 152, 109, 195, 133, 198, 30, 65, 69, 19, 109, 143, 108, 196, 59, 133, 10, 123], [82, 113, 147, 21, 88, 24, 38, 16, 70, 122, 148, 192, 116, 108, 18, 20, 143, 18, 116, 142], [176, 170, 87, 91, 195, 183, 124, 89, 72, 97, 89, 23, 45, 196, 97, 27, 83, 81, 171, 148]]. The capacity of each facility, which limits the amount of goods the facility can handle or produce is [301, 291, 453, 472, 492, 375, 493, 297, 331, 246]. The demand of each customer zone that must be met by the facilities is [117, 86, 69, 53, 110, 74, 136, 140, 126, 79, 54, 86, 114, 76, 136, 73, 144, 51, 53, 120].  Each facility has a maximum capacity of FacilityCapacity. Total number of facilities to be located is NumberOfFacilities. Each customer zone is served by at least one facility and its demand must be met. Total output for each facility cannot exceed its capacity. The number of facilities and customers is fixed and non-negative.  Minimize the total cost of locating facilities and transporting goods to customer zones.
\bigskip
\dashuline{\hspace{\textwidth}}
\bigskip

\textbf{Subproblem Identification Process:}\\
\begin{lstlisting}[backgroundcolor=\color{myblue!15!white},basicstyle=\ttfamily]
{
    "matching_problem_type": "Facility Location-Allocation Problem",
    "reasoning": "The specific problem described is a capacitated facility location problem, which involves determining the optimal placement of facilities to serve customers while minimizing costs. This aligns with the 'Facility Location-Allocation Problem' from the provided list, as it involves decisions about where to locate facilities and how to allocate resources to meet customer demands.",
    "belongs_to_problem_types": true
}
\end{lstlisting}

\begin{lstlisting}[backgroundcolor=\color{myblue!15!white},basicstyle=\ttfamily]
{
    "matching_subtype": "Capacitated Facility Location Problem",
    "reasoning": "The input problem matches the description of the 'Capacitated Facility Location Problem' as it involves minimizing the total costs including fixed costs and transportation costs while considering the capacities of facilities and customer demands. It meets the constraints of Demand Satisfaction, Facility Capacity Constraint, Binary Facility Opening, Non-Negative Allocation, and City Demand Allocation, which align closely with those described under the 'Capacitated Facility Location Problem'.",
    "belongs_to_subtypes": true
}
\end{lstlisting}

\begin{lstlisting}[backgroundcolor=\color{myblue!15!white},basicstyle=\ttfamily]
{
    "matching_subtype": "Capacitated Facility Location Problem with Distances",
    "reasoning": "The input problem involves determining the optimal placement of facilities to serve customer zones while minimizing total costs, including fixed facility costs and transportation costs between facilities and customer zones. This aligns with the description of the 'Capacitated Facility Location Problem with Distances' subtype, which focuses on minimizing transportation costs proportional to distances between facilities and regions, while respecting capacity constraints.",
    "belongs_to_subtypes": true
}
\end{lstlisting}
\bigskip
\dashuline{\hspace{\textwidth}}
\bigskip

\textbf{Get the Modeling Thoughts:}\\
\begin{lstlisting}[backgroundcolor=\color{myblue!15!white},basicstyle=\ttfamily]
{
"Problem Type": "Capacitated Facility Location Problem with Distances",
"Statement Thoughts":
"The Capacitated Facility Location Problem with Distances is a specialized version of the Facility Location-Allocation Problem where each facility has a limited capacity for service, and the allocation of customers to facilities involves transportation costs proportional to the distance between them. The goal is to determine the optimal set of facilities to open and the allocation of customer demands to these facilities in a way that minimizes the total cost, which includes the construction costs of facilities and the transportation costs associated with the distance, while respecting capacity constraints.,
Demand Satisfaction with Distance Costs: Each region's population demand must be fully met with the consideration of minimizing transportation costs, which are proportional to the distance between facilities and regions.",
Facility Capacity with Population Limits: The sum of the populations allocated to a facility must not exceed its maximum response capacity.,
Facility Activation: Facilities must either be open or closed. Only open facilities incur construction costs and can be used for serving regions.,
Exclusive Assignment: Each region must be fully served by one open facility to prevent splitting of demand across multiple facilities.",
"Moldeing Thoughts": [
    "[Define Decision Variables] Define binary decision variables to determine if a facility is opened and continuous variables to represent the population served by facilities.",
    "[Define Objective Function] Minimize total cost by summing construction costs and distance-based response time costs.",
    "[Define Demand Constraints] Each region's population demand must be fully met.",
    "[Define Capacity Constraints] Ensure each facility does not exceed its capacity, using binary variables to enforce activation.",
    "[Comprehensive Verification] Check common errors in the optimization model, such as incorrect indexing or capacity constraints.",
    "[Write Gurobi Code] Write the Gurobi code to solve the problem."
    "[Gurobi Code]:\n```python\nimport json\nimport numpy as np\nimport math\nimport gurobipy as gp\nfrom gurobipy import GRB\n\n# Create a new model\nmodel = gp.Model('model')\n\n# define parameters\n\n# define variables\n\n# define constraints\n\n# define objective\n\n# Optimize the model\nmodel.optimize()\nstatus = model.status\n\nobj_val = None\n# Check whether the model is infeasible, has infinite solutions, or has an optimal solution\nif status == gp.GRB.INFEASIBLE:\n    obj_val = 'infeasible'\nelif status == gp.GRB.UNBOUNDED:\n    obj_val = 'unbounded'\nelif status == gp.GRB.OPTIMAL:\n    obj_val = model.objVal\ntime = model.TimeLimit\nprint('Timecost:', time)\nprint('Objective Value:', obj_val)\n```",
    "[ommon Errors to Avoid]:\n1. **Incorrect Constraints**: Ensure population demand constraints align with assigned populations.\n2. **Capacity Constraints Misapplication**: Apply constraints correctly with binary variables enforcing activation.\n3. **Parameter Indexing**: Accurately align index references for facilities, distances, and regions.\n4. **Objective Calculation Mistakes**: Properly associate distances with population served by facilities.\n5. **Binary Activation Variables**: Ensure binary variables correctly regulate facility activation."
    ]

}
\end{lstlisting}
\bigskip
\dashuline{\hspace{\textwidth}}
\bigskip

\textbf{Solver Code:}\\

\begin{lstlisting}[backgroundcolor=\color{myblue!15!white},basicstyle=\ttfamily]
import gurobipy as gp
from gurobipy import GRB

# Parameters
num_facilities = 10
num_customers = 20
facility_fixed_cost = [8517, 5068, 9433, 6127, 6033, 5966, 7762, 9406, 6602, 7040]
transport_cost = [
    [80, 94, 44, 51, 190, 44, 129, 178, 129, 91, 172, 119, 177, 150, 90, 51, 53, 97, 184, 87],
    [139, 33, 104, 135, 50, 176, 97, 121, 47, 29, 186, 163, 149, 108, 156, 169, 100, 160, 153, 85],
    [153, 36, 18, 170, 18, 181, 178, 68, 171, 106, 159, 110, 21, 106, 91, 29, 144, 140, 155, 116],
    [103, 59, 78, 125, 14, 11, 152, 95, 76, 173, 36, 148, 75, 132, 59, 153, 113, 74, 185, 71],
    [193, 186, 130, 145, 114, 150, 33, 154, 20, 75, 103, 30, 137, 131, 167, 32, 53, 150, 176, 166],
    [159, 130, 156, 65, 36, 59, 199, 124, 104, 72, 180, 73, 43, 152, 143, 90, 161, 65, 172, 141],
    [173, 121, 110, 127, 22, 159, 195, 137, 47, 10, 87, 11, 154, 66, 126, 60, 152, 54, 20, 25],
    [181, 34, 186, 152, 109, 195, 133, 198, 30, 65, 69, 19, 109, 143, 108, 196, 59, 133, 10, 123],
    [82, 113, 147, 21, 88, 24, 38, 16, 70, 122, 148, 192, 116, 108, 18, 20, 143, 18, 116, 142],
    [176, 170, 87, 91, 195, 183, 124, 89, 72, 97, 89, 23, 45, 196, 97, 27, 83, 81, 171, 148]
]
facility_capacity = [301, 291, 453, 472, 492, 375, 493, 297, 331, 246]
customer_demand = [117, 86, 69, 53, 110, 74, 136, 140, 126, 79, 54, 86, 114, 76, 136, 73, 144, 51, 53, 120]

# Create a new model
model = gp.Model('Capacitated Facility Location')

# Decision Variables
x = model.addVars(num_facilities, num_customers, vtype=GRB.CONTINUOUS, name='x')
y = model.addVars(num_facilities, vtype=GRB.BINARY, name='y')

# Objective Function: Minimize total cost
model.setObjective(
    gp.quicksum(facility_fixed_cost[i] * y[i] for i in range(num_facilities)) +
    gp.quicksum(transport_cost[i][j] * x[i, j] for i in range(num_facilities) for j in range(num_customers)),
    GRB.MINIMIZE
)

# Constraints
# Demand Satisfaction
for j in range(num_customers):
    model.addConstr(gp.quicksum(x[i, j] for i in range(num_facilities)) == customer_demand[j])

# Capacity Limits
for i in range(num_facilities):
    model.addConstr(gp.quicksum(x[i, j] for j in range(num_customers)) <= facility_capacity[i] * y[i])

# Non-negativity
for i in range(num_facilities):
    for j in range(num_customers):
        model.addConstr(x[i, j] >= 0)

# Optimize the model
model.optimize()

# Output
if model.status == GRB.OPTIMAL:
    print('Optimal Objective Value:', model.objVal)
    selected_facilities = [i for i in range(num_facilities) if y[i].x > 0.5]
    allocation = {(i, j): x[i, j].x for i in range(num_facilities) for j in range(num_customers) if x[i, j].x > 0}
    print('Selected Facilities:', selected_facilities)
    print('Allocation:', allocation)
else:
    print('No optimal solution found.')

\end{lstlisting}
\end{bluebox}


\end{document}